\documentclass{article}

\usepackage{microtype}
\usepackage{graphicx}
\usepackage{subcaption}
\usepackage{booktabs} 

\usepackage{hyperref}

\PassOptionsToPackage{noend}{algorithmic}
\usepackage{algorithmic}

\usepackage[preprint]{icml2026}

\usepackage{amsmath}
\usepackage{amssymb}
\usepackage{mathtools}
\usepackage{amsthm}

\usepackage[capitalize,noabbrev]{cleveref}

\theoremstyle{plain}
\newtheorem{theorem}{Theorem}[section]
\newtheorem{proposition}[theorem]{Proposition}
\newtheorem{lemma}[theorem]{Lemma}
\newtheorem{corollary}[theorem]{Corollary}
\theoremstyle{definition}
\newtheorem{definition}[theorem]{Definition}

\theoremstyle{remark}

\usepackage[textsize=tiny]{todonotes}

\usepackage{bbm}
\usepackage{tikz}
\usetikzlibrary{arrows.meta, positioning}
\usepackage{subcaption}
\usepackage{enumitem}
\usepackage{titletoc}

\usepackage[table]{xcolor}  
\definecolor{gold}{rgb}{1.0, 0.84, 0.0}
\definecolor{silver}{rgb}{0.75, 0.75, 0.75}
\definecolor{bronze}{rgb}{0.8, 0.5, 0.2}

\makeatletter
\newcommand{\capfirst}[1]{%
  \expandafter\capfirst@i#1\relax
}
\def\capfirst@i#1#2\relax{%
  \MakeUppercase{#1}#2%
}
\makeatother

\newcommand{\smallest}{basis}
\newcommand{\largest}{closure}

\tikzset{
  var/.style={circle, draw, minimum size=8mm, inner sep=0pt},
  target/.style={var, thick, fill=gray!10},
  >=Stealth
}

\begin{document}

\twocolumn[
  \icmltitle{Exactly Computing do-Shapley Values}



  \icmlsetsymbol{equal}{*}

  \begin{icmlauthorlist}
    \icmlauthor{R. Teal Witter}{equal,cmc}
    \icmlauthor{Álvaro Parafita}{equal,supercomp}
    \icmlauthor{Tomàs Garriga}{supercomp,novartis}
    \icmlauthor{Maximilian Muschalik}{lmu}
    \icmlauthor{Fabian Fumagalli}{lmu}
    \icmlauthor{Axel Brando}{supercomp}
    \icmlauthor{Lucas Rosenblatt}{nyu}
  \end{icmlauthorlist}

  \icmlaffiliation{cmc}{Mathematical Sciences Department, Claremont McKenna College}
  \icmlaffiliation{supercomp}{Barcelona Supercomputing Center, Barcelona, Spain}
  \icmlaffiliation{novartis}{Novartis}
  \icmlaffiliation{lmu}{LMU Munich, MCML}
  \icmlaffiliation{nyu}{Department of Computer Science and Engineering, New York University}

  \icmlcorrespondingauthor{R. Teal Witter}{rtealwitter@cmc.edu}
  \icmlcorrespondingauthor{Álvaro Parafita}{parafita.alvaro@gmail.com}

  \icmlkeywords{Machine Learning, ICML}

  \vskip 0.3in
]



\printAffiliationsAndNotice{}  

\begin{abstract}
Structural Causal Models (SCM) are a powerful framework for describing complicated dynamics across the natural sciences.
A particularly elegant way of interpreting SCMs is do-Shapley, a game-theoretic method of quantifying the average effect of $d$ variables across exponentially many interventions.
Like Shapley values, computing do-Shapley values generally requires evaluating exponentially many terms.
The foundation of our work is a reformulation of do-Shapley values in terms of the \text{irreducible sets} of the underlying SCM.
Leveraging this insight, we can exactly compute do-Shapley values in time linear in the number of irreducible sets $r$, which itself can range from $d$ to $2^d$ depending on the graph structure of the SCM.
Since $r$ is unknown a priori, we complement the exact algorithm with an estimator that, like general Shapley value estimators, can be run with any query budget.
As the query budget approaches $r$, our estimators can produce more accurate estimates than prior methods by several orders of magnitude, and, when the budget reaches $r$, return the Shapley values up to machine precision.
Beyond computational speed, we also reduce the identification burden: we prove that non-parametric identifiability of do-Shapley values requires only the identification of interventional effects for the $d$ singleton coalitions, rather than all classes.
\end{abstract}

\section{Introduction}

\begin{figure}[t]
    \centering
    \includegraphics[width=\linewidth]{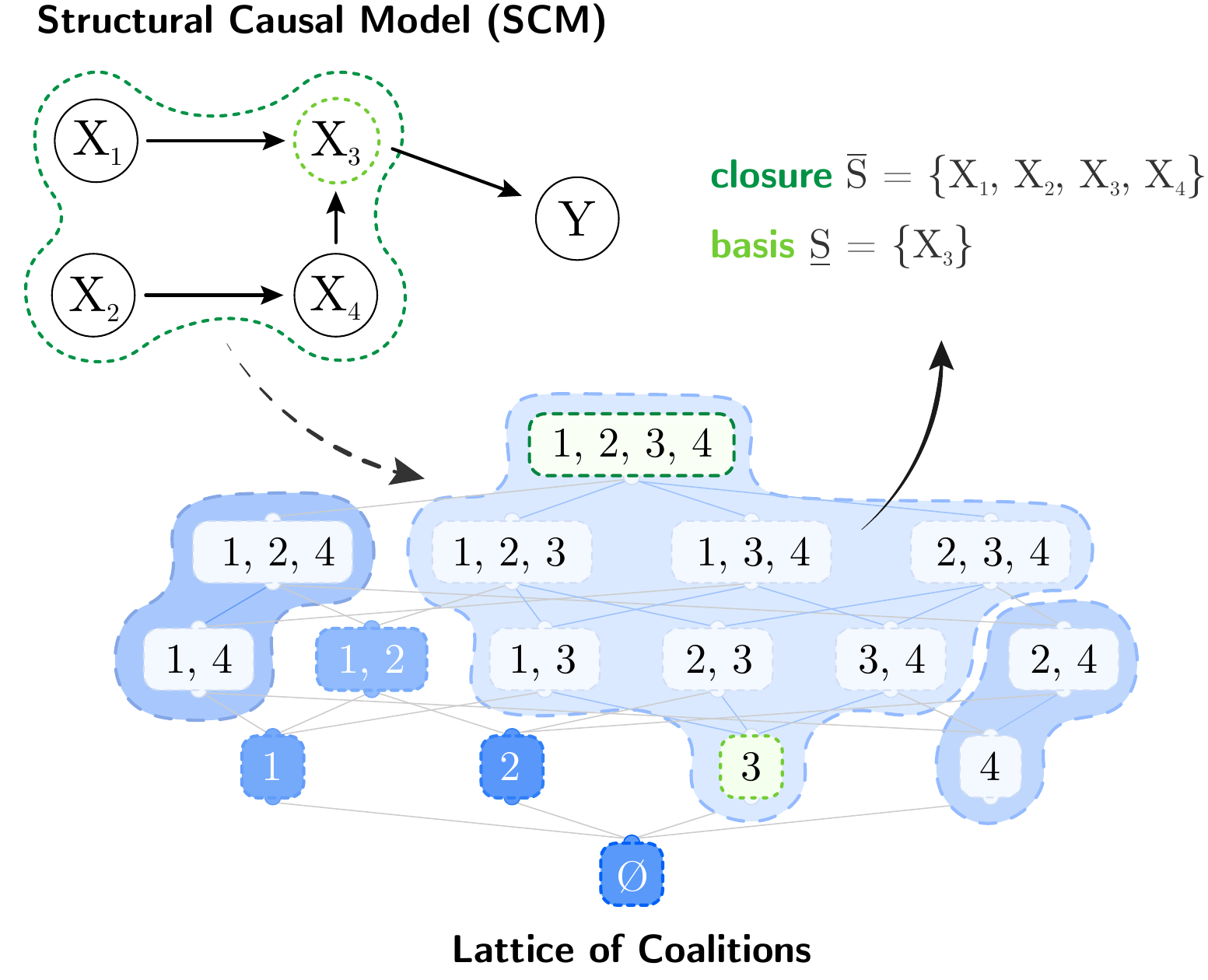}
    \caption{An example Structural Causal Model (SCM) and the corresponding lattice of coalitions. Because of the graph structure, intervening on some nodes is redundant. For example, setting $\{X_1, X_2, X_3, X_4\}$ has the same effect as setting $\{X_3\}$ because $X_3$ blocks all directed paths from the other nodes to $Y$. For such a \textit{class}, we refer to its smallest coalition (e.g., $\{X_3\}$) as the \textit{basis}, and the largest coalition (e.g., $\{X_1, X_2,X_3,X_4\}$) as the \textit{closure.}}
    \label{fig:intro_illustration}
\end{figure}

The question of causality is crucial to scientific inquiry, ranging from policy evaluation in economics to treatment effects in healthcare. Yet, observational data alone is often insufficient due to the fundamental problem of causal inference: because we cannot observe the counterfactual world where a specific intervention did \textit{not} occur, we cannot definitively state, based on data alone, that one event \textit{caused} another \citep{holland1986statistics, rubin1974estimating}.

Structural Causal Models (SCMs) offer a powerful solution by explicitly modeling the underlying mechanisms of a system \citep{pearl2009causality}. 
Whether derived from established domain knowledge or learned via causal discovery algorithms \citep{peters2017elements}, SCMs formalize the data-generating process:
a directed acyclic graph $G$ representing causal relationships, and a set of structural equations that determine the value of each node as a function of its parents and exogenous noise \citep{bareinboim2016causal}.
We provide a more formal introduction to SCMs in Appendix \ref{app:backgroundscm}.

\begin{figure*}[t!]
    \centering
    
    \includegraphics[width=0.9\linewidth]{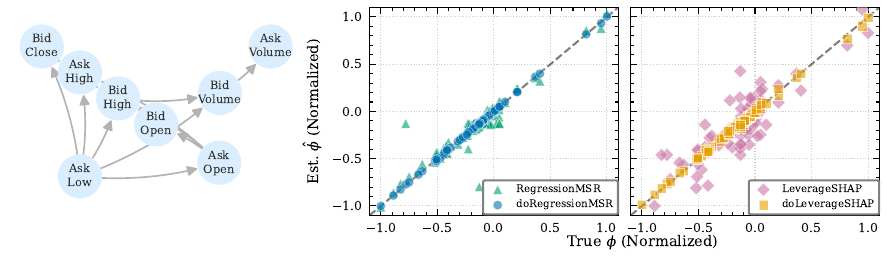}
    \vspace{-0.5em}
    \caption{\textbf{Left: A learned SCM} from a TALENT dataset.
    Nodes and edges represent the learned causal graph used to define the interventional value function $\nu(S)=\mathbb{E}[Y\mid \mathrm{do}(S=\mathbf{x}_S)]$ for a fixed instance $\mathbf{x}$. 
    \textbf{Right: Plots of estimated vs true do-Shapley values} on the learned SCM for randomly sampled $\mathbf{x}$. Compared to the value-function-agnostic state-of-the-art \texttt{RegressionMSR} and \texttt{LeverageSHAP} estimators, our \textit{doEstimator} variants provide substantially more accurate do-Shapley value estimates.}
    \label{fig:scm_single}
\end{figure*}

With a fully specified SCM, we can rigorously evaluate the effect of specific actions using the $do$-operator \citep{pearl2009causality}.
This operator simulates an intervention where a subset of variables is forced to take specific values, independent of their natural causes.
Consider a specific instance of interest $\mathbf{x} \in \mathbb{R}^d$. We define the value function $\nu(S)$ as the expected value of the target outcome $Y$ when the subset of features $S \subseteq [d]$ is intervened upon to match their observed values in $\mathbf{x}$:
\begin{align}
\nu(S) = \mathbb{E}[Y \mid \text{do}(S = \mathbf{x}_S)].
\label{eq:dovaluefunction}
\end{align}

This formulation enables us to precisely answer hypothetical queries, such as: 
\textit{``If we explicitly set this student's income and tutoring time, how would their probability of admission change?''}
or
\textit{``If a patient were administered prednisone and made to stop smoking, what would be their expected pain level?''}
However, characterizing the system's behavior purely through these individual queries is challenging. 
As the number of features $d$ grows, the number of possible interventional subsets scales as $2^d$. 
To extract interpretable insights from this combinatorial landscape, we need a unified framework to attribute the complicated dynamics of the SCM to individual features.

The Shapley value \citep{shapley1953shap} provides a rigorous framework for such explanations by attributing the changes in the outcome $Y$ to individual variables based on their marginal contributions.
Formally, the $i$th Shapley value captures the weighted average effect of adding variable $i$ to a coalition $S$:
\begin{align}
\phi_i = \sum_{S \subseteq [d] \setminus {i }}
[\nu(S \cup {i }) - \nu(S)] p_{|S|}
\label{eq:shapley}
\end{align}
where the weight $p_\ell = \frac1d \binom{d-1}{\ell}^{-1}$ can be interpreted as a probability distribution.

When the value function is defined via the interventional $do$-operator (Equation \ref{eq:dovaluefunction}), the result is the \textit{do-Shapley value} \citep{jung2022do_shap}, also referred to as the causal Shapley value \citep{heskes2020causal_shap}.\footnote{For conciseness, we henceforth refer to the do-value function simply as the value function, and the do-Shapley value as the Shapley value.}
Unlike standard formulations that rely on conditional expectations \citep{lundberg2017shap} or restrictive path-dependent permutations \citep{frye2020asymmetric_shap}, this metric strictly isolates the total causal effect of a feature intervention.
This rigorous isolation allows us to translate abstract model dynamics into concrete causal attributions, such as statements like \textit{``High family income increased acceptance probability by $10\%$''} or \textit{``Prescribing prednisone decreased reported pain by two marks.''}

The challenge in computing the Shapley value, of course, is that there are still $2^d$ terms $\nu(S)$.
So, without additional structure in $\nu$, exactly computing the Shapley value would take exponential time.
To address this, the standard approach is to approximate the Shapley value using stochastic estimators that evaluate $\nu(S)$ on a limited budget of sampled coalitions.
A diverse array of model-agnostic estimators has been developed for this purpose, including direct Monte Carlo estimators \citep{strumbelj2014explaining}, permutation-based sampling \citep{Castro.2009}, and regression-based formulations such as \texttt{KernelSHAP} and \texttt{LeverageSHAP} \citep{lundberg2017shap, covert2021improving,musco2025provably}.

For do-Shapley values, recent work has exploited the observation that the topological structure of SCMs often renders specific interventions redundant \citep{parafita2025practical}.
For example, in the causal graph depicted in Figure \ref{fig:intro_illustration}, the intervention on $\{X_3\}$ results in the same value as the intervention on $\{X_1, X_3, X_4\}$, because the paths from $X_1$ and $X_2$ to $Y$ are blocked by $X_3$.
To formalize this, define the \textit{\smallest{}} of a coalition $S$ as the subset $\underline{S} \subseteq S$ containing precisely those variables $j \in S$ that possess a directed path to $Y$ that does \textit{not} traverse any other node in $S$.
In effect, any variable in $S \setminus \underline{S}$ is intercepted by $\underline{S}$ and yields no additional impact on the outcome, ensuring $\nu(S) = \nu(\underline{S})$.
A set is \textit{irreducible} if it is its own basis.
This property enables a caching strategy: rather than naively evaluating the SCM for every query $\nu(S)$, the estimator first computes the basis $\underline{S}$ and checks if $\nu(\underline{S})$ has been previously memoized.
If so, the cached value is returned; if not, the SCM is evaluated and the result stored.
This mechanism avoids redundant evaluations of the underlying model, resulting in significant computational speedups \citep{parafita2025practical}.

We extend this insight by observing that every \smallest{} $\underline{S}$ is associated with a unique \textit{\largest{}} $\bar{S} \supseteq \underline{S}$—the maximal coalition such that intervening on $\bar{S}$ yields the identical effect as intervening on $\underline{S}$. 
For example, the \largest{} of $\{X_3\}$ in Figure \ref{fig:intro_illustration} is $\{X_1,X_2,X_3,X_4\}$.
Together, these bounds define an equivalence class of coalitions $\{ S \subseteq [d] : \underline{S} \subseteq S \subseteq \bar{S} \}$, all of which map to the same value $\nu(\underline{S})$.

Crucially, these classes constitute a partition of the powerset of all $d$ features. Letting $r$ denote the total number of such classes $c_1, \dots, c_r$, we leverage this structure to compress the Shapley summation into a linear combination of class values:
\begin{align}
\phi_i = \sum_{j=1}^r \nu(c_j) w_i(c_j)
\end{align}
where the class weight $w_i(c_j)$ is derived in Equation \ref{eq:weights}.
A similar decomposition is known for trees \cite{zern2023interventional,witter2025regressionadjusted}; but, unlike trees where the structure can be read in a linear pass of the leaves, efficiently finding the class structure of an SCM is non-trivial.

This formulation reduces the exact computation of Shapley values to a sum over $r$ terms. To identify these classes efficiently, we propose a lattice exploration algorithm that circumvents the exhaustive enumeration of the powerset. Leveraging a structural property of closed sets (Lemma \ref{lemma:closed_plus}), our algorithm enumerates all $r$ classes in $O(r(d+e+T))$ time, where $e$ is the number of edges in the causal graph and $T$ is the time to query the value function once.

The efficiency of this approach is strictly governed by the underlying graph topology.
As illustrated in Figure \ref{fig:num_irreducible_sets}, the number of classes $r$ varies with the graph structure, ranging from a linear $d$ in the best case to $2^d$ in the worst case.

To quantify how much this compression manifests in practice, we plot the number of irreducible sets $r$ against the number of variables $d$ across real datasets in Figure~\ref{fig:complexity}.
The observed scaling typically lies below the worst-case $2^d$, reflecting the sparsity of learned causal graphs in real-world tabular domains.

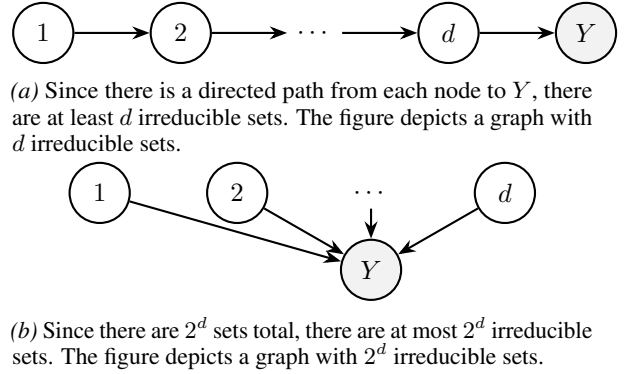
\begin{figure}[t]
  \centering
  \begin{subfigure}{0.45\textwidth}
    \centering
    \begin{tikzpicture}[node distance=1cm, thick]
      \node[var] (x1) {$1$};
      \node[var, right=of x1] (x2) {$2$};
      \node[right=of x2] (dots) {$\cdots$};
      \node[var, right=of dots] (xd) {$d$};
      \node[target, right=of xd] (y) {$Y$};

      \draw[->] (x1) -- (x2);
      \draw[->] (x2) -- (dots);
      \draw[->] (dots) -- (xd);
      \draw[->] (xd) -- (y);
    \end{tikzpicture}
    \caption{
        Since there is a directed path from each node to $Y$, there are at least $d$ irreducible sets. The figure depicts a graph with $d$ irreducible sets.
    }
  \end{subfigure}
  \hfill
  \begin{subfigure}{0.45\textwidth}
    \centering
    \begin{tikzpicture}[node distance=1cm, thick]
      \node[var] (x1) {$1$};
      \node[var, right=of x1] (x2) {$2$};
      \node[right=of x2] (dots) {$\cdots$};
      \node[var, right=of dots] (xd) {$d$};

      \node[target, below=.4cm of dots] (y) {$Y$};

      \draw[->] (x1) -- (y);
      \draw[->] (x2) -- (y);
      \draw[->] (dots) -- (y);
      \draw[->] (xd) -- (y);
    \end{tikzpicture}
    \caption{Since there are $2^d$ sets total, there are at most $2^d$ irreducible sets. The figure depicts a graph with $2^d$ irreducible sets.}
  \end{subfigure}
  \caption{The number of irreducible sets ranges between $d$ and $2^d$.}
    \label{fig:num_irreducible_sets}
\end{figure}

Although our exact algorithm scales linearly with $r$, the number of classes $r$ is unknown a priori. 
Since real-world applications demand strict resource limits, we must often operate within a fixed computational budget of $m$ value function queries. 
To address this, we propose a class of estimators explicitly tailored to this constrained setting.

\begin{figure}[b]
    \centering
    \includegraphics[width=0.9\linewidth]{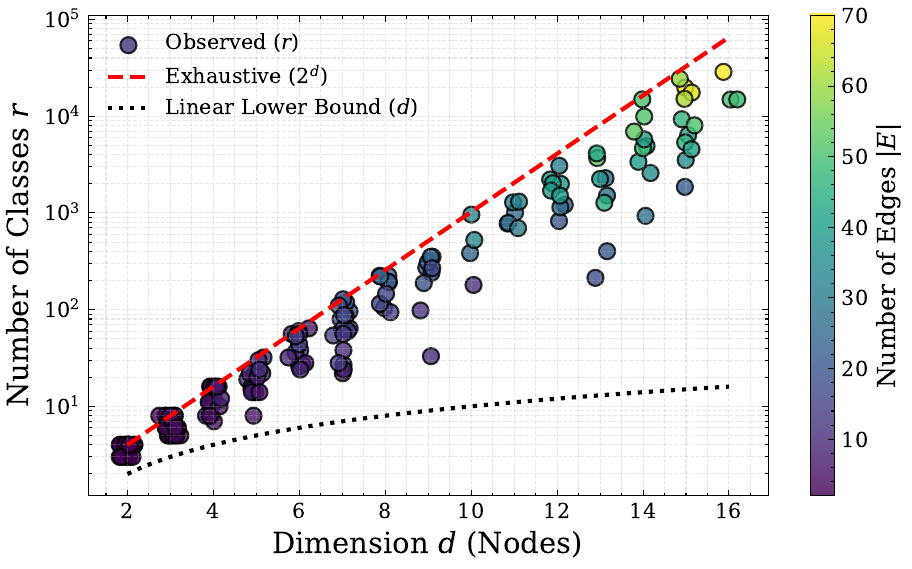}
    \caption{\textbf{Complexity Reduction.} The number of irreducible sets $r$ (colored points) versus the dimension $d$. While the theoretical worst-case complexity is $2^d$ (red dashed line), real-world causal structures are often sparse, resulting in $r$ scaling in between the exponential and the linear lower bound $d$ (black dotted line). }
    \label{fig:complexity}
\end{figure}

The fundamental limitation of prior caching-based approaches is \textit{sample redundancy}. Standard estimators sample coalitions without knowledge of the underlying causal structure, meaning they can (wastefully) query different coalitions that belong to the same large equivalence class.
Thus, a budget of $m$ queries often produces far fewer than $m$ unique values.
We resolve this inefficiency by introducing a \textit{boundary sampler}, a targeted exploration strategy guaranteed to identify $\min(m, r)$ distinct equivalence classes when run with $m$ queries.
By feeding these distinct values into a simulated estimator, we maximize the information extracted from the available budget. 
We find that this method can reduce estimation error by orders of magnitude compared to the best value-function-agnostic estimators run with the caching scheme of \citet{parafita2025practical}.
Furthermore, the estimator exhibits seamless convergence: as the budget $m$ approaches $r$, the approximation error vanishes, achieving exact computation (up to machine precision) once $m \geq r$.

In practice, SCMs are frequently learned from observational data and a hypothesized graph structure.
A critical prerequisite for this process is \textit{identifiability}: determining whether a causal query can be uniquely estimated from the observed probability distribution given the graph \citep{pearl2009causality}.
For instance, in the presence of unobserved confounding, different structural parameters could yield the exact same observational distribution but different interventional outcomes.
The gold standard for verifying non-parametric identifiability is the ID algorithm \citep{shpitser2006interventional, tian2002testable}, which determines if a specific query $\nu(S)$ is computable from the observational distribution.

This creates a practical bottleneck for Shapley value estimation. Since the Shapley value aggregates $r$ terms, a practitioner using prior methods would be forced to run the estimator and iteratively check identifiability for each irreducible set encountered.
If a single coalition proved unidentifiable, the entire estimation would be invalidated after significant computation \citep{parafita2025practical}.
In Section \ref{sec:identifiable}, we resolve this burden by proving a structural sufficiency theorem: to guarantee the identifiability of \textit{all} $2^d$ coalitions, it suffices to verify identifiability for only the $d$ singleton interventions $\{i\} \subset [d]$. This allows practitioners to run a rapid, $O(d)$ sanity check before model training begins, ensuring that the resulting Shapley values will be valid without the risk of costly mid-computation failures.

In summary, our contributions are three-fold:

\textbf{1. Exact Computation via Irreducible Sets:} We propose an algorithm that computes exact do-Shapley values by exploiting the graph's equivalence classes. By traversing the lattice of closed sets, the algorithm runs in time linear in the number of irreducible sets $r$ (and the graph size $e$), rather than the worst-case exponential complexity of $2^d$.

\textbf{2. Structure-Aware Estimation:} We introduce a class of boundary sampling estimators designed for fixed-budget settings. Unlike prior methods that sample blindly, our approach targets distinct equivalence classes. This yields error reductions of several orders of magnitude as $m$ approaches $r$, and seamlessly transitions to exact machine-precision computation once the budget satisfies $m \geq r$.

\textbf{3. Efficient Identifiability Check:} We prove that non-parametric identifiability of the full do-Shapley value is guaranteed if and only if the $d$ singleton interventions are non-parametrically identifiable. This result enables a rapid $O(d)$ sanity check, allowing practitioners to verify the feasibility of the explanation task before incurring the cost of model training or estimation.

While we focus on the Shapley value due to its widespread adoption, our theoretical insights generalize to the broader class of probabilistic values---e.g., Banzhaf values \cite{Banzhaf.1964}, beta Shapley values \cite{Kwon.2022b}, and weighted Banzhaf values \cite{li2024robust}---by simply changing the marginal weights in Equation \ref{eq:do_shapley}. 
Furthermore, in Section \ref{sec:generalization}, we demonstrate how to adapt our methods to compute Shapley Interaction Indices \cite{Grabisch.1997}, capturing the joint causal impact of coalitions.

\subsection{Additional Related Work}
Exactly computing Shapley values is generally feasible only when the underlying model possesses exploitable structure \citep{Rozemberczki.2022}. 
This has led to efficient, model-specific algorithms for decision trees and ensembles, including TreeSHAP \citep{lundberg2020fromlocal,yu2022linear}, interventional variants \citep{zern2023interventional}, and extensions like TreeSHAP-IQ \citep{muschalik2024beyond}. Similar exact methods exist for linear models \citep{strumbelj2014explaining}, product-kernel networks \citep{mohammadi2025computing}, Gaussian processes \citep{mohammadi2025exact}, graph neural networks \citep{muschalik2025exact}, and KNN-based data valuation \citep{jia2019towards,wang2023privacy,wang2024efficient}. When black-box access precludes exact methods, practitioners rely on model-agnostic estimators \citep{chen2023shap_survey,muschalik2024shapiq} such as Monte Carlo sampling \citep{Castro.2009,Kolpaczki.2024b,Kolpaczki.2024a,Fumagalli.2023,Wang.2023} or regression-based approaches like \texttt{KernelSHAP} \citep{lundberg2017shap}, which has been enhanced via leverage scores \citep{musco2025provably} and interaction support \citep{fumagalli2024kernelshapiq,tsai2022faith}.
Finally, \emph{exact computation} has recently merged with \emph{estimation} via surrogate modeling \citep{butler.2025}, where auxiliary models are fit to the value function to allow efficient extraction of Shapley values \citep{witter2025regressionadjusted}.

\section{Reformulating do-Shapley Values}

In this section, we leverage the underlying structure of an SCM to reformulate do-Shapley values in terms of equivalence classes. 

Firstly, we will assume that all nodes in $[d]$ are ancestors of the target node $Y$, since non-ancestors have null do-Shapley value. We will start with the definition of \smallest{}, derived from the concept of irreducible sets in \citet{parafita2025practical}.

\begin{definition}[\capfirst{\smallest{}}]
    The \textit{\smallest{}} of $S$, denoted $\underline{S}$, is the subset of nodes $j \in S$ such that there exists a directed path from $j$ to $Y$ that intersects $S$ only at $j$.
\end{definition}

A set $S \subseteq [d]$ is \textit{irreducible} if it is its own \smallest{}.

Similarly, we will define the \largest{} of $S$ as the superset of nodes that can be blocked from reaching $Y$ by $S$.

\begin{definition}[\capfirst{\largest{}}]
    The \textit{\largest{}} of $S$, denoted $\bar{S}$, is the set of all nodes $j \in [d]$ such that every directed path from $j$ to $Y$ intersects $S$.
\end{definition}

We say a set $S \subseteq [d]$ is \textit{closed} if it is its own \largest{}.

Let $\underline{S}$ and $\bar{S}$ be the \smallest{} and \largest{} of a coalition $S$, respectively.
$S$ belongs to an \textit{equivalence class} with all $T$ such that $\underline{S} \subseteq T \subseteq \bar{S}$.
By definition, all nodes in $T \setminus \underline{S}$ are \textit{blocked} from reaching $Y$ by $\underline{S}$, so, by the third rule of do-Calculus \cite{pearl2009causality},
\begin{align*}
    \nu(\underline{S})  
    = \nu(T)
    = \nu(\bar{S}).
\end{align*}
It is easy to see that the classes form a partition of all $2^d$ coalitions.

Let $r$ be the number of irreducible sets,
and denote the classes by $c_1, \ldots, c_r$.
In an abuse of notation, we will define $\nu(c_j)=\nu(S)$ where $S$ is any set in class $c_j$.
We will use this structure to rewrite the Shapley values:

\begin{align}
    \phi_i 
    &= \sum_{S \subseteq [d] \setminus \{i\}}
    p_{|S|} \left[ \nu(S \cup \{i\}) - \nu(S)\right]
    \nonumber \\&= \sum_{S \subseteq [d]} \nu(S) \left[
    \mathbbm{1}[i \in S] p_{|S|-1}
    -\mathbbm{1}[i \notin S] p_{|S|}
    \right]
    \nonumber \\&= \sum_{j=1}^r \nu(c_j) \cdot w_i(c_j)
    \label{eq:do_shapley}
\end{align}
where, with $\bar{S}$ as the \largest{} of class $c$ and $\underline{S}$ as the \smallest{} of class $c$, we define
\begin{align}
    w_i(c) = \sum_{T: \underline{S} \subseteq T \subseteq \bar{S}}
    \left[
    \mathbbm{1}[i \in T] p_{|T|-1}
    -\mathbbm{1}[i \notin T] p_{|T|}
    \right].
    \nonumber
\end{align}

Even though there could be exponentially many subsets in a class, we can compute $w_i(c)$ in $O(d)$ time.
In particular, it is easy to show that

\begin{align}
w_i(c) = \begin{cases}
    \sum_{\ell=|\underline{S}|}^{|\bar{S}|} p_{\ell-1}
    \binom{|\bar{S}| - |\underline{S}|}{\ell-|\underline{S}|} & i \in \underline{S} \\[10pt]
    - \sum_{\ell=|\underline{S}|}^{|\bar{S}|} p_\ell
    \binom{|\bar{S}| - |\underline{S}|}{\ell - |\underline{S}|} & i \notin \bar{S} \\[10pt]
    0 & \text{else}.
\end{cases}
\label{eq:weights}
\end{align}

We will use this structure to compute Shapley values.
If we have $O(r(d+e))$ time, then we can exactly compute Shapley values as described in Section \ref{sec:exact}.
Since $r$ is initially unknown, we may also want to estimate Shapley values given a fixed query budget $m$.
In Section \ref{sec:estimate}, we describe estimators that run in $O(m(d+e))$ time.

\begin{algorithm}[tb]
  \caption{\texttt{FindClass}}
  \label{alg:find_class}
  \begin{algorithmic}
    \STATE {\bfseries Input:} Set $S \subseteq [d]$, graph $G$
    \STATE {\bfseries Output:} \smallest~ $\underline{S} \subseteq S$, and \largest~$\bar{S} \supseteq S$
    \STATE $G' \gets G$ with all incoming edges to nodes in $S$ removed
    \STATE $N_\text{anc} \gets$ ancestor nodes i.e., a directed path to $Y$ in $G'$
    \STATE $\bar{S} \gets S \cup ([d]\setminus N_\text{anc})$ \hfill {\color{gray!90} $\triangleright$ $S$ and nodes not connected to $Y$}
    \STATE $\underline{S} \gets S \cap N_\text{anc}$ \hfill {\color{gray!90}$\triangleright$ Subset of $S$ still connected to $Y$}
    \STATE \textbf{return} $\underline{S}, \bar{S}$
  \end{algorithmic}
\end{algorithm}

\section{Exactly Computing do-Shapley Values}
\label{sec:exact}

By Equation \ref{eq:do_shapley}, computing all irreducible sets is sufficient to exactly compute the Shapley value.
It remains to find all irreducible sets.

A naive strategy is to traverse the set lattice by subset size, determining the class of each set via Algorithm \ref{alg:find_class}.
Of course, there are $2^d$ sets on the lattice, so even constant work per set is infeasible. Instead, we can \textit{efficiently} traverse the set lattice by only generating sets which are guaranteed to be closed, amortizing the work to each class, rather than each subset.
The key tool is a structural lemma on alternate definitions of closed sets. 

\begin{algorithm}[b]
  \caption{\texttt{AllClasses}}
  \label{alg:find_all_classes}
  \begin{algorithmic}
    \STATE {\bfseries Input:} Number of elements $d$, graph $G$
    \STATE {\bfseries Output:} All closed sets $\mathcal{C}$
    \STATE $\mathcal{C}_0 \gets  \ldots \gets \mathcal{C}_d \gets \emptyset $ \hfill {\color{gray!90}$\triangleright$ Closed sets of each size}
    \STATE $\mathcal{C}_d \gets \{\{1, \dots, d\}\}$  \hfill {\color{gray!90} $\triangleright$ Only closed set of size $d$ }
    \FOR{$\ell=d, \ldots, 1$}
        \FOR{$\bar{S} \in \mathcal{C}_{\ell}$}
            \STATE $\underline{S}, \bar{S} \gets \texttt{FindClass}(\bar{S}, G)$
            \STATE {\color{gray!90}$\triangleright$ Use Lemma \ref{lemma:closed_plus}}
            \FOR{$j \in \underline{S}$}                
                \STATE Add $\bar{S} \setminus \{j\}$ to $\mathcal{C}_{\ell - 1}$
            \ENDFOR
        \ENDFOR
    \ENDFOR
    \STATE {\bfseries return} $\mathcal{C}_0 \cup \ldots \cup \mathcal{C}_d$
  \end{algorithmic}
\end{algorithm}

\begin{lemma}\label{lemma:closed_plus}
    Let $\bar{S} \subset [d]$ be a closed set with \smallest{} $\underline{S}$.
    Then
    \begin{enumerate}
        \item For all $j \in \underline{S}$, $\bar{S} \setminus \{j\}$ is closed.
        \item If $\bar{S} \neq [d]$, there exists $j \in [d] \setminus \bar{S}$ so that $\bar{S} \cup \{j\}$ is closed and $j$ is in the \smallest{} of $\bar{S} \cup \{j\}$.
    \end{enumerate}
\end{lemma}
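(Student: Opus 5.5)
We work directly from the path characterizations. Write $\mathrm{cl}(S)=\bar S$ for the set of $j\in[d]$ such that every directed path from $j$ to $Y$ meets $S$, and call a path \emph{$S$-avoiding} if it meets $S$ at most at its start. We use three elementary facts. First, $S\subseteq \bar S$, trivially. Second, $\bar S$ is monotone in $S$. Third, $\mathrm{cl}(\bar S)=\bar S$. For the third fact, suppose a path from $j$ to $Y$ avoids $\bar S$. Its first vertex outside $S$ could not be in $\bar S$, so the whole path avoids $S$, and hence $j\notin\bar S$.

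\textbf{Part 1.} Let $T=\bar S\setminus\{j\}$ with $j\in\underline{S}$. Since $T\subseteq\mathrm{cl}(T)$, it suffices to show two things: $j\notin\mathrm{cl}(T)$, and no $k\notin\bar S$ lies in $\mathrm{cl}(T)$.

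For the first, $j\in\underline{S}$ gives a path from $j$ to $Y$ meeting $\bar S$ only at $j$. Hence it misses $T$, so $j\notin \mathrm{cl}(T)$.

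For the second, take $k\notin\bar S$. Closedness of $\bar S$ gives a path from $k$ to $Y$ avoiding $\bar S$ entirely. This path also avoids $T\subseteq\bar S$, so $k\notin\mathrm{cl}(T)$.

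Hence $\mathrm{cl}(T)=T$. (Here ``$\bar S$ with basis $\underline S$'' means $\underline S$ is the basis of $\bar S$ itself.)

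\textbf{Part 2.} Let $U=[d]\setminus\bar S\neq\emptyset$. We aim for a $j\in U$ with two properties:
\begin{itemize}
\item[(a)] $\mathrm{cl}(\bar S\cup\{j\})=\bar S\cup\{j\}$;
\item[(b)] $j$ has a path to $Y$ avoiding $\bar S$.
\end{itemize}
Property (b) holds for every $j\in U$ by closedness, and it means $j$ is in the basis of $\bar S\cup\{j\}$.

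For (a), the natural choice is a \emph{topologically last} element of $U$: one with no directed path to any other element of $U$. Since the graph is a DAG and $U$ is finite, a sink of the induced reachability order on $U$ exists.

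Now let $k\in U\setminus\{j\}$. Closedness of $\bar S$ gives a path $P$ from $k$ to $Y$ avoiding $\bar S$. If $P$ passes through $j$, then $k$ reaches $j$; that is allowed, because we only need $P$ to avoid $\bar S\cup\{j\}$. So we should instead choose $j$ to be a \emph{source}, i.e.\ topologically first in $U$, so that no other $k\in U$ reaches $j$. Then $P$ cannot contain $j$, since $P$ starts at $k\neq j$ and would have to reach $j$. Thus $P$ avoids $\bar S\cup\{j\}$, and $k\notin\mathrm{cl}(\bar S\cup\{j\})$.

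With a source $j$ of $U$, then, every $k\in U\setminus\{j\}$ stays outside the closure, which proves (a). Property (b) holds automatically.

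The main obstacle is exactly this choice of $j$: it must ensure the avoiding paths of all other outside nodes miss $j$, and the check that topological minimality achieves this is the crux. We also note that the intermediate vertices of $P$ are automatically in $U$ or are $Y$, so the acyclicity argument applies.
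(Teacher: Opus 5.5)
Your proof is correct and follows the paper's argument. Part 1 is the same two-case check: the basis path handles $j$, and $\bar S$-avoiding paths handle nodes outside $\bar S$. Part 2 chooses, exactly as the paper does, a $j \in [d]\setminus\bar S$ with no ancestors in $[d]\setminus\bar S$, so that no $\bar S$-avoiding path from another outside node can pass through $j$. The ``topologically last'' false start and the preliminary facts are never used and can be deleted; in particular, your idempotence justification only establishes the trivial inclusion $\bar S \subseteq \mathrm{cl}(\bar S)$.
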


We defer the proof of the lemma to Appendix \ref{app:delayedproof}.

Algorithm \ref{alg:find_all_classes} describes our method.
We efficiently find each class by iterating over closed sets, in decreasing order of size.
We start with the full set $[d]$.
For each closed set $\bar{S}$ of size $\ell$, we compute its \smallest{} $\underline{S}$.
By Lemma \ref{lemma:closed_plus}, $\bar{S} \setminus \{j\}$ is closed for all $j \in \underline{S}$.
We then add each of these closed sets, and further explore them when we reach size $\ell-1$.

We can see that Algorithm \ref{alg:find_all_classes} correctly returns all closed sets by an inductive argument.
Suppose that we have identified all closed sets of size $\ell$; the base case is trivial since there is only one set, and it must be closed.
By Lemma \ref{lemma:closed_plus}, for every closed set $\bar{S}$ of size $\ell-1$, there is a closed set $\bar{S} \cup \{j\}$ for some $j$ in the \smallest{} of $\bar{S} \cup \{j\}$.
By the inductive assumption, we must have identified this set, and also found $\bar{S}$ by removing $j$ from $\bar{S} \cup \{j\}$.
It follows that every closed set of size $\ell-1$ gets generated by some closed set of size $\ell$.

Algorithm \ref{alg:find_all_classes} runs in $O(r(d+e))$ time:
for each closure---there is one closure for each of the $r$ classes---the algorithm runs a graph exploration in time $d + e$, and then adds at most $d$ closed sets of size $\ell-1$ to explore.

\paragraph{Simple Class Optimization}
Sometimes running Algorithm \ref{alg:find_class} as a subroutine in Algorithm \ref{alg:find_all_classes} can be avoided.
For a closed set $\bar{S}$ that is \textit{simple}---$\bar{S}$ is both its own \largest{} and \smallest{}---all of its subsets are also simple.
To see why, observe that all nodes $j \in \bar{S}$ have a directed path to $Y$ that does not intersect any other node in $\bar{S}$, and hence any other node in a subset of $\bar{S}$.
It follows that all subsets are irreducible; with Lemma \ref{lemma:closed_plus}, we have that all such subsets are also closed.
For a simple set, we add an optimization to Algorithm \ref{alg:find_all_classes} in our implementation so that all the bases of its candidates are cached, avoiding the $O(d+e)$ cost of Algorithm \ref{alg:find_class} for all subsets of a simple set.

Together, Equation \ref{eq:do_shapley} and Algorithm \ref{alg:find_all_classes} can compute Shapley values in time linear in $r$.

\begin{proposition}\label{prop:exact_do_shapley}    Shapley values of the intervention value function can be exactly computed in $O(r(d + e + T))$ time, where $T$ is the time to evaluate the game $\nu$ on a given coalition, and $r$ is the number of irreducible sets.
\end{proposition}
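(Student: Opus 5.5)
The plan is to combine three ingredients that the paper has already set up: the class decomposition in Equation \ref{eq:do_shapley}, the closed-form weights in Equation \ref{eq:weights}, and the enumeration procedure \texttt{AllClasses} (Algorithm \ref{alg:find_all_classes}), whose correctness follows from Lemma \ref{lemma:closed_plus}. The algorithm is as follows.

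\begin{enumerate}
\item Run \texttt{AllClasses} to obtain every closed set $\bar{S}$, together with its basis $\underline{S}$ as computed by \texttt{FindClass}.
\item For each of the $r$ classes, evaluate $\nu(\underline{S})$ once and compute the weights $w_i(c)$.
\item Accumulate $\phi_i \mathrel{+}= \nu(c)\, w_i(c)$ for every $i$.
\end{enumerate}
Correctness is then immediate from Equation \ref{eq:do_shapley}, provided the enumeration yields each class exactly once.

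\textbf{Enumeration yields each class exactly once.} There is a bijection between classes, closed sets, and irreducible sets, via $c \mapsto \bar{S}$ and $c \mapsto \underline{S}$. Two different closed sets cannot share a class, since a class has a unique maximum. So it suffices that \texttt{AllClasses} outputs every closed set exactly once. The inductive argument given after Lemma \ref{lemma:closed_plus} shows that every closed set is output. To avoid duplicates, I would store each $\mathcal{C}_\ell$ as a hash set, so that a closed set generated by several parents is processed only once. Hashing a set of size at most $d$ costs $O(d)$.

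\textbf{Running time.} Each processed closed set incurs four costs:
\begin{itemize}
\item one call to \texttt{FindClass}, which is a reverse graph search from $Y$ in $G'$ and costs $O(d+e)$;
\item at most $|\underline{S}| \le d$ insertions into $\mathcal{C}_{\ell-1}$;
\item one query $\nu(\underline{S})$, costing $T$;
\item the weight computation and accumulation.
\end{itemize}

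The insertions are the main obstacle. Done naively, each insertion copies and hashes a set of size up to $d$, giving $O(d^2)$ per class instead of $O(d)$. I would fix this by representing sets as bit vectors, or by hashing incrementally: maintain a Zobrist-style hash of $\bar{S}$ and XOR out the hash of $j$. Each candidate $\bar{S}\setminus\{j\}$ then costs $O(1)$ amortized to propose and check. A candidate needs to be materialized in $O(d)$ only when it is new, and that cost is charged to the class it creates. The total enumeration cost is therefore $O(r(d+e))$.

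\textbf{Weights in $O(d)$ total per class.} Computing the weights per $i$ separately would give $O(d^2)$ per class, so I would avoid that. By Equation \ref{eq:weights}, $w_i(c)$ takes only three values:
\begin{itemize}
\item a constant $a$ for $i\in\underline{S}$;
\item a constant $b$ for $i\notin\bar{S}$;
\item $0$ otherwise.
\end{itemize}
Each of $a$ and $b$ is a sum of at most $d+1$ terms. The quantities $p_\ell$ and the binomial coefficients can be precomputed once in $O(d^2)$, which is absorbed into the bound because $r\ge d$.

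The accumulation step can also be done in $O(d)$ per class. One option is to add $\nu(c)\,a$ to each $i \in \underline{S}$ and $\nu(c)\,b$ to each $i\notin\bar{S}$ directly, at $O(d)$ per class. Alternatively, keep a global accumulator for the $b$-terms and correct it on the members of $\bar{S}$.

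Summing over the $r$ classes gives $O(r(d+e+T))$.
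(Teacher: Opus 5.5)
Your route is the paper's: enumerate the closed sets with \texttt{AllClasses} (complete by Lemma \ref{lemma:closed_plus}), make one \texttt{FindClass} call and one query per class, and sum $\nu(c)\,w_i(c)$ via Equation \ref{eq:weights}. You are more careful than the paper in two places. Its accounting charges the at most $d$ insertions per closed set as $O(d)$ work in total, ignoring the cost of building and hashing each set. It also never notes that $w_i(c)$ takes only three values per class.

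One refinement does not hold as written: the claim that, with Zobrist hashing, each candidate $\bar S\setminus\{j\}$ can be \emph{checked} in $O(1)$. The hash itself is $O(1)$ to compute. But a hash hit certifies a duplicate only after comparing the two sets, which costs $O(d)$ element by element.

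True duplicates are the typical case, not the exception. In the star graph of Figure \ref{fig:num_irreducible_sets}, every subset is closed and $e=d$. There are $\sum_S |S| = d2^{d-1}=\Theta(rd)$ proposals but only $2^d-1$ distinct targets. Verifying every hit therefore costs $\Theta(rd^2)$ rather than $O(r(d+e))$. Trusting hash equality instead makes the procedure Monte Carlo, so it is no longer exact with certainty.

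A clean repair also makes your bit-vector remark rigorous. Every subset of an irreducible set is irreducible: a path certifying $j\in\underline S$ meets any subset of $\underline S$ only at $j$. Hence $2^{|\underline S|}\le r$ for every class, and the total number of proposals is $\sum_c|\underline S_c|\le r\log_2 r$. Store sets as bit vectors of $\lceil d/\lambda\rceil$ words on a word RAM with word length $\lambda\ge\log_2 r$, which is needed anyway to address the $r$ stored classes. Then each proposal (hash, exact comparison, and copy if new) costs expected $O(\lceil d/\lambda\rceil)$. The total is $O(r\log r + rd)=O(rd)$, which fits within $O(r(d+e+T))$.
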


\section{Approximating do-Shapley Values}
\label{sec:estimate}

While Algorithm \ref{alg:find_all_classes} allows for exact computation in $O(r(d+e))$ time, the number of irreducible sets $r$ is unknown \textit{a priori}. In resource constrained settings where $r$ may be too large, we would like an approximation technique that operates within a fixed computational budget of $m$ value function queries.

Standard value-function-agnostic estimators
sample coalitions from a fixed distribution.
While caching can prevent re-evaluation of the SCM for known classes, these estimators suffer from sample redundancy: they blindly generate coalitions that may belong to equivalence classes already queried. As a result, a budget of $m$ queries often yields far fewer than $m$ distinct class values, wasting computational resources on redundant parts of the lattice.

To address this, we seek an estimator that guarantees the discovery of $\min(m, r)$ \textit{distinct} equivalence classes. If $m \geq r$, the method should naturally recover the exact Shapley values to machine precision. If $m < r$, it should prioritize classes with weight to minimize estimation error.

Recall that the Shapley value can be expressed as a weighted sum over equivalence classes:
\begin{align}
\label{eq:doshap2}
\phi_i = \sum_{j=1}^r \nu(c_j) \cdot w_i(c_j).
\end{align}
Directly estimating this sum presents a challenge: we cannot sample classes proportional to their weights $w_i(c_j)$ because the classes (and thus their weights) are unknown without graph exploration.
Furthermore, a stratified sampling approach—sampling coalitions without replacement to ensure unique classes—proves computationally expensive.
As detailed in Appendix \ref{app:stratified}, such a stratified sampling method incurs a cost quadratic in $m$, which defeats the purpose of fast approximation.

\paragraph{Boundary Sampling}
We propose a \textit{boundary sampler}, a targeted graph exploration strategy that achieves sample efficiency in time linear in $m$. Instead of sampling blindly from the powerset, we maintain candidate classes adjacent to those we have already visited.

The algorithm, described in Algorithm \ref{alg:boundary_estimator}, proceeds by maintaining a priority queue of candidate classes, ordered by their expected weight magnitude $\mathbb{E}_i[|w_i(c)|]$. In each iteration, we sample a class $c$ proportional to the expected magnitude of its weight, evaluate $\nu(c)$, and add it to our sampled set. We then expand the candidate by generating the neighbors of $c$ in the lattice. Specifically, for a class $c$ with \smallest{} $\underline{S}$ and \largest{} $\bar{S}$, the neighbors are defined as either lower neighbors $\{ \bar{S} \setminus \{j\} \mid j \in \underline{S} \}$ or upper neighbors $\{ \bar{S} \cup \{j\} \mid j \notin \bar{S} \}$.
For each neighbor, we run \texttt{FindClass} to determine its canonical representation and weight, adding it to the queue if it has not been seen. This ensures that every query results in a new class.
By starting at each level in the lattice, and adding neighbors above and below, we explore the lattice in a balanced way.

We state Algorithm \ref{alg:boundary_estimator}, and prove the following upper bound on its runtime, in Appendix \ref{app:delayedproof}.

\begin{proposition}\label{prop:boundary_sampler_time}
Algorithm \ref{alg:boundary_sampler} runs in $O(m \cdot d(d+e))$ time, where $m$ is the query budget, $d$ is the number of features, and $e$ is the number of edges in the graph.
\end{proposition}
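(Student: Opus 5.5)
The runtime is a per-iteration accounting of Algorithm \ref{alg:boundary_sampler}, multiplied by the number of iterations. Three facts drive it. The loop runs at most $m$ times, since each iteration spends one query on a class not yet sampled. Each iteration generates at most $d$ neighbors. Each neighbor is processed with one call to \texttt{FindClass}, which costs $O(d+e)$.

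\textbf{Cost of \texttt{FindClass}.} Algorithm \ref{alg:find_class} does three things:
\begin{itemize}
\item It deletes the incoming edges of $S$, in $O(d+e)$.
\item It finds the ancestors of $Y$ in $G'$ by one reverse BFS/DFS from $Y$, in $O(d+e)$.
\item It forms $\underline{S}$ and $\bar{S}$ with indicator vectors, in $O(d)$.
\end{itemize}
Each call is therefore $O(d+e)$.

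\textbf{Initialization.} The sampler seeds one candidate class per level of the lattice. That is $O(d)$ calls to \texttt{FindClass}, costing $O(d(d+e))$, which is absorbed into the main bound since $m\ge 1$.

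\textbf{Main loop.} I would bound the work of a single iteration as follows.
\begin{enumerate}
\item Popping or sampling a class from the queue, weighted by $\mathbb{E}_i[|w_i(c)|]$, costs $O(\log Q)$, where $Q$ is the queue size.
\item Evaluating $\nu(c)$ is the query itself and is not counted (or contributes $T$ per query).
\item Computing the weights $w_i(c)$ via Equation \ref{eq:weights} takes $O(d)$ time. All $i$ share only two nonzero values, one for $i\in\underline{S}$ and one for $i\notin\bar{S}$, and the binomial terms can be precomputed.
\item The class $c$ has at most $|\underline{S}|+(d-|\bar{S}|)\le d$ neighbors. Each costs one \texttt{FindClass} call at $O(d+e)$, plus a membership test against the set of seen classes. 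Hashing the canonical basis as a length-$d$ bit vector makes this test $O(d)$. Computing the neighbor's expected weight magnitude is another $O(d)$, and the queue insertion is $O(\log Q)$.
\end{enumerate}
One iteration thus costs $O(d(d+e)+d\log Q)$.

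\textbf{Controlling $\log Q$.} This is the main obstacle to the clean bound. The queue grows by at most $d$ per iteration, so $Q\le d(m+1)$ and $\log Q=O(\log d+\log m)$. To absorb this term, I would first argue that $\log(dm)=O(d+e)$. This holds whenever $m\le 2^d$, and we may assume that, because the algorithm stops once all $r\le 2^d$ classes are exhausted. If the argument instead needs a sampling structure, sampling proportionally to priority can use a Fenwick tree with the same $O(\log Q)$ cost.

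Multiplying the per-iteration bound by the at most $m$ iterations gives $O(m\cdot d(d+e))$, as claimed.
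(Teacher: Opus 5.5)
Your proposal is correct and follows the paper's proof: at most $m$ loop iterations, at most $|\underline{S}| + (d-|\bar{S}|) \le d$ neighbors per sampled class, and one $O(d+e)$ call to \texttt{FindClass} per neighbor. Your extra bookkeeping covers costs the paper's proof leaves implicit: the warm-start phase, the $O(\log Q)$ weighted sampling and insertion, and absorbing $\log Q = O(d)$, which holds because only distinct classes enter the queue and there are at most $r \le 2^d$ of them. Your ``at most $m$'' iterations is also more accurate than the paper's ``exactly $m$'' when the queue empties early.
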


\paragraph{Estimation via Simulation}
After running the boundary sampler, we utilize the set of sampled classes $\mathcal{C}$ to compute the Shapley values, as shown in Algorithm \ref{alg:boundary_estimator}.

\textit{Case $m \geq r$:} If the queue empties before the budget is reached, we have identified all irreducible sets. We proceed to compute the exact Shapley values using Equation \ref{eq:doshap2}.

\textit{Case $m < r$:} We leverage the fact that we have paid the cost to evaluate $\nu(c)$ for all $c \in \mathcal{C}$. We construct a \textit{simulated} estimator, described in Appendix \ref{app:simulatedsampling}. This allows us to generate many samples at no additional query cost, reducing the variance of the estimator while balancing time complexity.

We run Algorithm \ref{alg:boundary_estimator} where the base estimators are the current state-of-the-art value-function-agnostic Shapley value estimators \texttt{LeverageSHAP} \cite{musco2025provably} and \texttt{RegressionMSR} \cite{witter2025regressionadjusted}.

\begin{algorithm}[tb]
\caption{\texttt{doEstimator}}
\label{alg:boundary_estimator}
\begin{algorithmic}
\STATE {\bfseries Input:} Budget $m$, Game $v$, \texttt{BaseEstimator}, Sampling multiplier $k$
\STATE {\bfseries Output:} Shapley values estimates $\phi \in \mathbb{R}^d$
\STATE {\color{gray!90} $\triangleright$ Step 1: Sampling Phase}
\STATE $\mathcal{C}, \texttt{allSampled} \gets \texttt{BoundarySampler}(m, G)$
\STATE Query $\nu(c)$ for all $c \in \mathcal{C}$ \hfill {\color{gray!90} $\triangleright$ $\min(r, m)$ queries}

\STATE {\color{gray!90} $\triangleright$ Step 2A: Exact Computation}
\IF{\texttt{allSampled}}%
    \STATE $\phi \gets \mathbf{0}$
    \FOR{each class $c \in \mathcal{C}$}
        \STATE $\phi_i \gets \phi_i + \nu(c) \cdot w_i(c)$ \textbf{ for all } $i$
    \ENDFOR
    \textbf{return} $\phi$
\ENDIF

\STATE {\color{gray!90} $\triangleright$ Step 2B: Run \texttt{BaseEstimator}}
\STATE $\mathcal{D} \gets \texttt{SimulatedSampler}(\mathcal{C}, k \cdot m)$ {\color{gray!90} $\triangleright$ Algorithm \ref{alg:simulated_sampler}}
\STATE {\bfseries return} $\texttt{BaseEstimator}(\mathcal{D})$
\end{algorithmic}\end{algorithm}

\section{Identifiability}
\label{sec:identifiable}

Whenever a causal query $P_S(T)$ is uniquely determined by the graph and dataset, we say the query is (non-parametrically) \textit{identifiable}. However, this is not always the case; we include an illustrative example in Appendix \ref{appendix:identifiability}.

Computing the do-Shapley value requires verifying the identifiability of component queries $\nu(S)$ via the ID algorithm \cite{shpitser2006interventional}. 
Even with class-based grouping, this necessitates $r$ separate tests. 
Conducting these checks sequentially during estimation is risky, as a late discovery of non-identifiability renders all prior computation wasted. 
To eliminate this overhead, we present a theorem establishing a linear-time check for global identifiability.

\begin{theorem}
    The do-Shapley value $\phi_i$ is identifiable if, and only if, $\forall j \in [d]$, $\nu(\{j\})$ is identifiable.
\end{theorem}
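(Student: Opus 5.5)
We read ``$\phi_i$ is identifiable'' the way the introduction does: every component query $\nu(S)=\mathbb{E}[Y\mid \mathrm{do}(S=\mathbf{x}_S)]$, $S\subseteq[d]$, is identifiable. It suffices to treat the interventional distribution $P_S(Y)$, since $\nu(S)$ is a functional of it.

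Under this reading, the ``only if'' direction is immediate, because the singletons $\{j\}$ are among the coalitions whose values enter Equation~\ref{eq:do_shapley}. The substance is the ``if'' direction: if every $P_{\{j\}}(Y)$ is identifiable, then so is $P_S(Y)$ for every $S$. We prove the contrapositive using the hedge criterion of \citet{shpitser2006interventional}, which is the completeness result for the ID algorithm. It states that $P_{\mathbf{x}}(\mathbf{y})$ is non-identifiable if and only if the graph contains a hedge for some $P_{\mathbf{x}'}(\mathbf{y}')$ with $\mathbf{x}'\subseteq \mathbf{x}$ and $\mathbf{y}'\subseteq \mathbf{y}$.

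Such a hedge is a pair of $R$-rooted c-forests $F\subsetneq F'$ with the following properties:
\begin{itemize}
\item $F\cap \mathbf{x}'=\emptyset$;
\item $F'\cap\mathbf{x}'\neq\emptyset$;
\item the common root set satisfies $R\subseteq \mathrm{An}(\mathbf{y}')_{G_{\overline{\mathbf{x}'}}}$.
\end{itemize}
Here $G_{\overline{\mathbf{x}'}}$ denotes $G$ with incoming edges to $\mathbf{x}'$ removed.

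Suppose $P_S(Y)$ is non-identifiable. Take a witnessing hedge $(F,F')$ for some $P_{S'}(Y)$ with $S'\subseteq S$ (the only choice of $\mathbf{y}'$ is $\{Y\}$). Pick any $j\in F'\cap S'$, which exists by definition. We claim $(F,F')$ is also a hedge for $P_{\{j\}}(Y)$, which we check condition by condition.
\begin{itemize}
\item The c-forest property of $F$ and $F'$ and the common root set $R$ are purely graph-internal, so they are unchanged.
\item $F\cap\{j\}=\emptyset$, because $F\cap S'=\emptyset$ and $j\in S'$.
\item $F'\cap\{j\}\ni j$, so it is nonempty.
\item The root condition transfers by monotonicity. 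The graph $G_{\overline{S'}}$ is obtained from $G_{\overline{\{j\}}}$ by deleting more edges, so every directed path to $Y$ in $G_{\overline{S'}}$ survives in $G_{\overline{\{j\}}}$. Hence $\mathrm{An}(Y)_{G_{\overline{S'}}}\subseteq \mathrm{An}(Y)_{G_{\overline{\{j\}}}}$, and therefore $R\subseteq \mathrm{An}(Y)_{G_{\overline{\{j\}}}}$.
\end{itemize}
By the hedge criterion, $P_{\{j\}}(Y)$ is non-identifiable, which contradicts the hypothesis.

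We expect the main obstacle to be the bookkeeping in the hedge definition, not the combinatorics. First, the precise form used by Shpitser and Pearl requires the c-forests to live in the subgraph over $\mathrm{An}(Y)_{G_{\overline{\mathbf{x}}}}$, and the root set $R$ must be stated relative to the same mutilated graph. We will verify that restricting from $S'$ to $\{j\}$ only enlarges the relevant ancestral subgraph, so both forests remain valid subgraphs of it. Second, the directed edges of the forests must not enter $j$ in a way that conflicts with the definition. We will handle this by noting that the forest structure is defined in $G$ itself, and that the definition only constrains which nodes are intervened on.

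A secondary subtlety is the passage from identifiability of $P_S(Y)$ to identifiability of $\nu(S)$ at the particular level $\mathbf{x}_S$. We treat this as the standard identification of the whole interventional distribution and state the assumption explicitly. If a stronger ``only if'' is wanted, namely that identifiability of the aggregate $\phi_i$ alone forces identifiability of the singletons, we would add a construction. Take two SCMs that agree observationally but differ in some $\nu(\{j\})$. Perturb the mechanism so that only classes with nonzero weight change, and note from Equation~\ref{eq:weights} that the class containing $\{j\}$ has $w_j\neq 0$. We flag that step as considerably harder and not needed under the paper's reading.
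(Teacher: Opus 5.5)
Your proof is correct, but it gets the crucial fact from a different source than the paper. You cite the hedge criterion to get, from non-identifiability of $P_S(Y)$, a hedge for some $P_{S'}(Y)$ with $S'\subseteq S$. After that only the transfer step remains: monotonicity $An(Y)_{G_{\overline{S'}}}\subseteq An(Y)_{G_{\overline{\{j\}}}}$ plus Theorem 4 of Shpitser--Pearl. That is exactly how the paper finishes.

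The paper does not use the containment $S'\subseteq S$. It runs ID on $P_X(Y)$ and builds the hedge at the line-5 failure, which is a hedge for the \emph{local} query $P_{S^d}(T^d)$. Line 3 adds blocked nodes to the intervention set, and line 4 replaces the target by a C-component while intervening on everything else. So this local query is in general not of the form $P_{S'}(Y)$ with $S'\subseteq X$. The lemmas on lines 3, 4 and 7, and the $A^i/B^i$ bookkeeping, exist to show that $S^d$ still meets the closure $X'$ and that the roots lie in $An_{G_{\overline{s}}}(Y)$. Consequently the paper's witness $s$ lies in $X'$, whereas yours lies in $S$ itself. Your route buys brevity and a sharper witness. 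The paper's buys independence from the subset clause of the hedge criterion, which is exactly the delicate part you are outsourcing.

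That clause is true in the direction you use, and you can make your proof self-contained by justifying it directly.
\begin{itemize}
\item Let $D=An(Y)_{G[\mathcal{V}\setminus S]}$ and let $D_i$ be the C-components of $G[D]$.
\item If $P_S(Y)$ is not identifiable, Tian's \texttt{Identify} fails for some $D_i$ at a C-component $T^*\supsetneq D_i$ with $An(D_i)_{G[T^*]}=T^*$.
\item C-forests on $T^*$ and on $D_i$, with the roots of $G[D_i]$ as common root set, form a hedge for $P_S(Y)$ itself.
\item $T^*\cap S\neq\emptyset$: otherwise $T^*\subseteq D$, and maximality of $D_i$ would force $T^*=D_i$.
\end{itemize}

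Do not state the criterion as a literal ``if and only if'' over sub-queries. For $Z\to X\to Y$ with $Z\leftrightarrow X$ there is a hedge for $P_z(y)$, yet $P_{z,x}(y)=P(y\mid x)$ is identifiable. You only ever use ``non-identifiable $\Rightarrow$ hedge'' and Theorem 4 for a hedge of $P_{\{j\}}(Y)$ itself, so your argument is unaffected.

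The two bookkeeping worries you list are non-issues, because the hedge definition constrains only the root set. On the converse, the paper's corollary simply asserts that $\phi_k$ cannot be identifiable when $\nu(\{k\})$ is not. Making your reading explicit, as you did, is if anything more careful.
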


The necessary background knowledge and proof is left for Appendix \ref{appendix:identifiability}.

Consequently, we can execute the ID algorithm on the $d$ singleton coalition queries and, if all are identifiable, the do-Shapley value will be identifiable and we can proceed with its estimation. 
If not, further parametric assumptions, or the inclusion of instrumental variables, will be required to ensure identifiability. 
Regardless, this result prevents practitioners from training estimating do-Shapley values only to learn the outcome was not identifiable to begin with.
Additionally, it reduces the number of calls to the ID algorithm from $r$ to $d$.

\section{Experiments}
\label{sec:experiments}

We evaluate the performance of our exact algorithm and boundary sampling estimators on a diverse set of real-world datasets. Our experiments are designed to investigate three key questions: (1) To what extent does the number of irreducible sets $r$ reduce the complexity compared to the worst-case $2^d$ in real-world dependencies? (2) Do our structure-aware estimators outperform state-of-the-art model-agnostic baselines under fixed query budgets? (3) How does the learned causal structure qualitatively influence feature attribution error?

\textbf{Data.}
We utilize the TALENT benchmark \cite{JMLR:v26:25-0512}, a large-scale repository of tabular datasets.
To enable reliable evaluation against exhaustive baselines, we restrict to datasets whose \textit{post-pruning} dimension (after restricting to the ancestors of the target $Y$) permits exact computation of $\nu(S)$ over all coalitions $S\subseteq[d]$ within our computational budget. For each dataset, we select a set of test instances $\mathbf{x}$ and report errors aggregated across instances and datasets. Additional dataset-level details (including the resulting dimensions after pruning) are provided in the appendix. 

\textbf{SCM Generation.} 
Since real-world datasets lack ground-truth causal graphs, we learn SCMs from data to serve as the ground-truth games $\nu$. For each dataset, we employ the Greedy Relaxed Search Procedure (GRaSP) \citep{lam2022greedy} with a BIC score to learn a Completed Partially Directed Acyclic Graph (CPDAG), which is converted to a DAG greedily. Not all features in a dataset causally influence the target. Following the definition of do-Shapley, we prune the learned graph to the ancestral set of the target variable $Y$; nodes with no directed path to $Y$ have a null Shapley value and are removed. We then fit non-linear structural equations using Gradient Boosting Regressors \cite{Friedman.2001} to model the conditional distributions $\mathbb{E}[X_i \mid \text{Pa}(X_i)]$. This learned SCM acts as our oracle $\nu(S)$.

\textbf{Methods.} 
We evaluate our framework against two state-of-the-art model-agnostic estimators: \texttt{RegressionMSR} \citep{witter2025regressionadjusted} and \texttt{LeverageSHAP} \citep{musco2025provably}. These serve as our structure-agnostic baselines, estimating Shapley values by directly sampling and querying coalitions from the full powerset. To explicitly isolate the gains attributable to our graph-theoretic insights, our proposed estimators \texttt{doRegressionMSR} and \texttt{doLeverageSHAP} are not fundamentally new regression techniques. Rather, they repurpose the \textit{same} estimation machinery used in the baselines. The main difference lies in the data generation process: whereas the baselines query random coalitions, our methods train the estimators on the distinct equivalence classes recovered by the boundary sampler (as described in Algorithm \ref{alg:boundary_estimator}). This allows us to compare ``structure-aware'' versus ``structure-agnostic'' sampling while holding the estimation logic constant.

\paragraph{Lattice Complexity Reduction} The efficiency of our exact algorithm relies on $r \ll 2^d$. In Figure \ref{fig:complexity}, we plot the number of irreducible sets $r$ against the dimension $d$ for 156 datasets from the TALENT benchmark. We observe that for real-world data, the number of irreducible sets $r$ often remains significantly below $2^d$.

\begin{figure}[t!]
    \centering
    \includegraphics[width=0.8\linewidth]{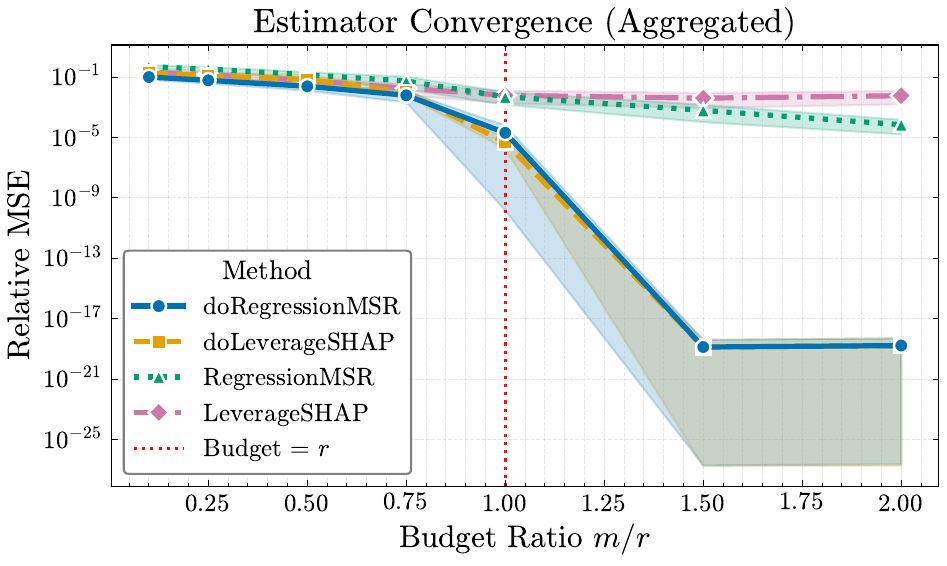}
    \caption{\textbf{Estimator Convergence (Aggregated).} The relative MSE of Shapley value estimates versus the budget ratio $m/r$, aggregated across all datasets. Shaded regions indicate 95\% confidence intervals. Our structure-aware estimators consistently outperform the baseline variants. Notably, as the budget exceeds the number of classes ($m > r$, red dotted line), our error vanishes to machine precision, whereas baselines continue to exhibit variance.}
    \label{fig:convergence}
\end{figure}

\textbf{Estimation Efficiency} We evaluate estimation error by query budget $m$, defined relative to the number of irreducible sets $r$. Figure \ref{fig:convergence} presents Relative MSE for all estimators.

In the sparse budget regime where we cannot fully explore the lattice, \texttt{doRegressionMSR} (blue) demonstrates superior sample efficiency, consistently achieving the lowest error. This advantage becomes increasingly pronounced as the budget approaches $r$. By prioritizing the discovery of distinct equivalence classes via boundary sampling, our method minimizes redundant queries that plague the standard samplers.

A phase transition occurs at $m=r$ (indicated by the red vertical line). Once the budget allows for full lattice exploration, our boundary sampler identifies that all irreducible sets have been found. At this point, the algorithm switches to the exact computation described in Section \ref{sec:exact}. Consequently, the MSE for both \texttt{doRegressionMSR} and \texttt{doLeverageSHAP} drops precipitously to machine precision. In contrast, the structure-agnostic baselines (green and purple) continue to sample coalitions with replacement, exhibiting a slow convergence rate and failing to achieve exactness even with double the necessary budget ($m=2r$).

\section{Generalizations}
\label{sec:generalization}

Our result immediately generalizes in two ways:
\begin{enumerate}[leftmargin=1.5em,itemsep=1pt, topsep=2pt]
    \item[\textbf{a)}] The weighting function $p$ is not restricted to Shapley values and may correspond to alternative probabilistic value concepts, such as the Banzhaf index, weighted Banzhaf values, or Beta Shapley values.
    \item[\textbf{b)}] The framework is not limited to single-feature attributions but naturally extends to interaction values, i.e., changes in the value function with respect to subsets $T$ rather than singletons $i$, as described in Appendix \ref{sec:appx_shapley_interactions}.
\end{enumerate}

While the Shapley values provides a principled framework for attributing value to individuals, they are limited in expressivity. For instance, assessing interactions like synergies or redundancies across multiple features, is not possible based on individual Shapley values. $n$-Shapley values \citep{lundberg2018treeshap,Bord.2023} enrich this explanation by adding interactions up to order $n$. These interactions are based on the Shapley interaction index \citep{Grabisch.1999}, an axiomatic extension of the Shapley value. Similar to the Shapley value, $n$-Shapley values satisfy an extended efficiency axiom \citep{muschalik2024shapiq}, and provide a more granular and expressive explanation of $\nu$ (see~\cref{fig:shapiq_plots}). Their explicit form is given in Appendix \ref{sec:appx_shapley_interactions}. With $n=d$, $n$-Shapley values yield the Möbius transform \citep{Rota.1964}, which provides the exact additive decomposition of $\nu$ \citep{Bord.2023}. Importantly, they satisfy the \emph{linearity} axiom and their weights only depend on the cardinality of the coalition and the interaction, which allows a similar decomposition as \cref{eq:do_shapley}. Consequently, once the irreducible sets are known, we can efficiently compute the Shapley interaction index and $n$-Shapley values, as shown in Appendix \ref{sec:appx_shapley_interactions}.

\begin{figure}[t]
    \centering
    \includegraphics[width=\linewidth]{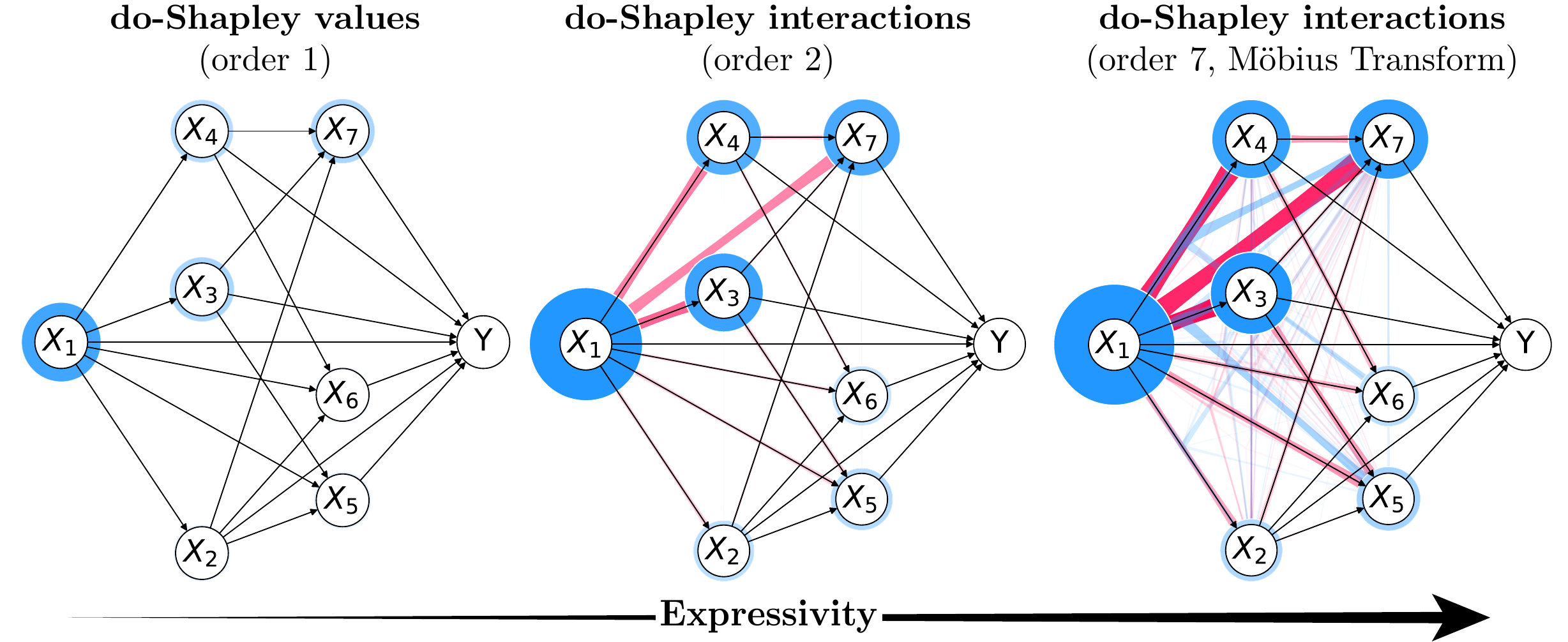}
    \caption{Interactions offer an expressive explanation framework.}
    \label{fig:shapiq_plots}
\end{figure}

\section*{Conclusion}

We address the computational challenges of do-Shapley values by reformulating the estimation via structural equivalence classes. Our proposed algorithm, based on \emph{irreducible sets}, scales with causal complexity ($r$) rather than dimension ($2^d$) and employs \emph{boundary sampling} to reduce redundancy. Additionally, we show that verifying identifiability requires only a linear $O(d)$ check of singletons. By prioritizing structure-aware exploration, this framework facilitates attribution in complex systems, offering a practical step toward scalable causal explainability. Future work could extend these methods to larger graphs, such as in genomics, or investigate sensitivity to graph misspecification.

\clearpage
\section*{Impact Statement}

This paper presents work whose goal is to advance the field of machine learning. There are many potential societal consequences of our work, none of which we feel must be specifically highlighted here.

\bibliography{references}

@article{strumbelj2014explaining,
  title={Explaining prediction models and individual predictions with feature contributions},
  author={{\v{S}}trumbelj, Erik and Kononenko, Igor},
  journal={Knowledge and information systems},
  volume={41},
  pages={647--665},
  year={2014},
  publisher={Springer}
}

@article{chen2023shap_survey,
  title={Algorithms to estimate {S}hapley value feature attributions},
  author={Chen, Hugh and Covert, Ian C and Lundberg, Scott M and Lee, Su-In},
  journal={Nature Machine Intelligence},
  pages={1--12},
  year={2023},
  publisher={Nature Publishing Group UK London}
}

@article{frye2020asymmetric_shap,
  title={Asymmetric {S}hapley values: incorporating causal knowledge into model-agnostic explainability},
  author={Frye, Christopher and Rowat, Colin and Feige, Ilya},
  journal={Advances in Neural Information Processing Systems (NeurIPS)},
  volume={33},
  pages={1229--1239},
  year={2020}
}

@article{heskes2020causal_shap,
  title={Causal {S}hapley values: exploiting causal knowledge to explain individual predictions of complex models},
  author={Heskes, Tom and Sijben, Evi and Bucur, Ioan Gabriel and Claassen, Tom},
  journal={Advances in Neural Information Processing Systems (NeurIPS)},
  volume={33},
  pages={4778--4789},
  year={2020}
}

@inproceedings{jung2022do_shap,
  title={On measuring causal contributions via do-interventions},
  author={Jung, Yonghan and Kasiviswanathan, Shiva and Tian, Jin and Janzing, Dominik and Bl{\"o}baum, Patrick and Bareinboim, Elias},
  booktitle={International Conference on Machine Learning},
  pages={10476--10501},
  year={2022},
  organization={PMLR}
}

@article{lundberg2017shap,
  title={A unified approach to interpreting model predictions},
  author={Lundberg, Scott M and Lee, Su-In},
  journal={Advances in neural information processing systems},
  volume={30},
  year={2017}
}

@article{parafita2022dcg,
  title={Estimand-Agnostic Causal Query Estimation With {Deep} {Causal} {Graphs}},
  author={Parafita, {\'A}lvaro and Vitri{\`a}, Jordi},
  journal={IEEE Access},
  volume={10},
  pages={71370--71386},
  year={2022},
  publisher={IEEE}
}

@incollection{shapley1953shap,
  title={A Value for n-Person Games},
  author={Shapley, LS},
  booktitle={Contributions to the Theory of Games (AM-28), Volume II},
  pages={307--317},
  year={1953},
  publisher={Princeton University Press}
}

@article{lundberg2018treeshap,
  title={Consistent individualized feature attribution for tree ensembles},
  author={Lundberg, Scott M and Erion, Gabriel G and Lee, Su-In},
  journal={arXiv preprint arXiv:1802.03888},
  year={2018}
}

@inproceedings{shpitser2006interventional,
  title={Identification of joint interventional distributions in recursive semi-{Markovian} causal models},
  author={Shpitser, Ilya and Pearl, Judea},
  booktitle={Proceedings of 21st National Conference on Artificial Intelligence (AAAI)},
  address="Boston, MA, USA",
  pages={1219--1226},
  year={2006}
}

@inproceedings{tian2002testable,
  title={On the testable implications of causal models with hidden variables},
  author={Tian, Jin and Pearl, Judea},
  booktitle={Proceedings of the 18th Conference on Uncertainty in Artificial Intelligence (UAI)},
  pages={519--527},
  year={2002},
  address="Edmonton, Canada"
}

@inproceedings{
    parafita2025practical,
    title={Practical do-Shapley Explanations with Estimand-Agnostic Causal Inference},
    author={Parafita, {\'A}lvaro and Garriga, Tomas and Brando, Axel and Cazorla, Francisco J},
    booktitle={The Thirty-ninth Annual Conference on Neural Information Processing Systems},
    year={2025},
    url={https://openreview.net/forum?id=Qabko39AS5}
}

@article{lundberg2020fromlocal,
 title={{From local explanations to global understanding with explainable AI for trees}},
  author={Scott M. Lundberg and Gabriel G. Erion and Hugh Chen and Alex J. DeGrave and Jordan M. Prutkin and Bala Nair and Ronit Katz and Jonathan Himmelfarb and Nisha Bansal and Su{-}In Lee},
  year=2020,
  journal={Nature Machine Intelligence},
  volume=2,
  number=1,
  pages={56--67}
}

@inproceedings{yu2022linear,
  title={Linear tree shap},
  author={Peng Yu and Albert Bifet and Jesse Read and Chao Xu},
  booktitle={Advances in Neural Information Processing Systems (NeurIPS)},
  volume={35},
  year={2022}
}

@inproceedings{zern2023interventional,
  title={Interventional {SHAP} Values and Interaction Values for Piecewise Linear Regression Trees},
  author={Artjom Zern and Klaus Broelemann and Gjergji Kasneci},
  year={2023},
  booktitle={Proceedings of the 37th AAAI Conference on Artificial Intelligence},
  volume={37},
  pages={11164--11173}
}

@inproceedings{muschalik2024beyond,
  title={{Beyond TreeSHAP: Efficient Computation of Any-Order Shapley Interactions for Tree Ensembles}},
  author={Maximilian Muschalik and Fabian Fumagalli and Barbara Hammer and Eyke H{\"{u}}llermeier},
  year=2024,
  booktitle={Proceedings of the 38th AAAI Conference on Artificial Intelligence},
  volume={38},
  pages={14388--14396}
}

@inproceedings{muschalik2025exact,
  title={{Exact Computation of Any-Order Shapley Interactions for Graph Neural Networks}},
  author={Maximilian Muschalik and Fabian Fumagalli and Paolo Frazzetto and Janine Strotherm and Luca Hermes and Alessandro Sperduti and Eyke H{\"u}llermeier and Barbara Hammer},
  booktitle={The Thirteenth International Conference on Learning Representations},
  year={2025}
}

@article{mohammadi2025computing,
  title={Computing Exact Shapley Values in Polynomial Time for Product-Kernel Methods},
  author={Mohammadi, Majid and Chau, Siu Lun and Muandet, Krikamol},
  journal={arXiv preprint arXiv:2505.16516},
  year={2025}
}

@article{mohammadi2025exact,
  title={Exact Shapley Attributions in Quadratic-time for FANOVA Gaussian Processes}, 
  author={Majid Mohammadi and Krikamol Muandet and Ilaria Tiddi and Annette Ten Teije and Siu Lun Chau},
  year={2025},
  journal={arXiv preprint arXiv:2508.14499}
}

@inproceedings{jia2019towards,
  author={Ruoxi Jia and David Dao and Boxin Wang and Frances Ann Hubis and Nick Hynes and Nezihe Merve G{\"{u}}rel and Bo Li and Ce Zhang and Dawn Song and Costas J. Spanos},
  title={Towards Efficient Data Valuation Based on the Shapley Value},
  booktitle={Proceedings of the 22nd International Conference on Artificial Intelligence and Statistics (AIStats)},
  pages        = {1167--1176},
  publisher    = {PMLR},
  year         = {2019}
}

@inproceedings{wang2023privacy,
  title={A Privacy-Friendly Approach to Data Valuation},
  author={Jiachen T. Wang and Yuqing Zhu and Yu-Xiang Wang and Ruoxi Jia and Prateek Mittal},
  booktitle={Advances in Neural Information Processing Systems (NeurIPS)},
  volume={37},
  year={2023},
}

@inproceedings{wang2024efficient,
  author={Jiachen T. Wang and Prateek Mittal and Ruoxi Jia},
  title={Efficient Data Shapley for Weighted Nearest Neighbor Algorithms},
  booktitle={Proceedings of the 27th International Conference on Artificial Intelligence and Statistics (AIStats)},  
  pages={2557--2565},
  publisher={PMLR},
  year={2024},
}

@inproceedings{muschalik2024shapiq,
  title={{shapiq: Shapley Interactions for Machine Learning}},
  author= {Maximilian Muschalik and Hubert Baniecki and Fabian Fumagalli and Patrick Kolpaczki and Barbara Hammer and Eyke H{\"{u}}llermeier},
  year=2024,
  booktitle={Advances in Neural Information Processing Systems (NeurIPS)},
  volume=37,
  pages={130324--130357}
}

@inproceedings{covert2021improving,
  title={Improving kernelshap: Practical shapley value estimation using linear regression},
  author={Covert, Ian and Lee, Su-In},
  booktitle={International conference on artificial intelligence and statistics},
  pages={3457--3465},
  year={2021},
  organization={PMLR}
}

@inproceedings{
butler.2025,
title={Proxy-{SPEX}: Sample-Efficient Interpretability via Sparse Feature Interactions in {LLM}s},
author={Landon Butler and Abhineet Agarwal and Justin Singh Kang and Yigit Efe Erginbas and Bin Yu and Kannan Ramchandran},
booktitle={The Thirty-ninth Annual Conference on Neural Information Processing Systems},
year={2025},
url={https://openreview.net/forum?id=KI8qan2EA7}
}

@inproceedings{
witter2025regressionadjusted,
title={Regression-adjusted Monte Carlo Estimators for Shapley Values and Probabilistic Values},
author={R. Teal Witter and Yurong Liu and Christopher Musco},
booktitle={The Thirty-ninth Annual Conference on Neural Information Processing Systems},
year={2025},
url={https://openreview.net/forum?id=Qabko39AS5}
}

@article{Banzhaf.1964,
	title        = {Weighted voting doesn't work: A mathematical analysis},
	author       = {Banzhaf III, John F},
	year         = 1964,
	journal      = {Rutgers Law Review},
	publisher    = {HeinOnline},
	volume       = 19,
	pages        = 317
}

@inproceedings{Bord.2023,
	title        = {{From Shapley Values to Generalized Additive Models and back}},
	author       = {Sebastian Bordt and Ulrike von Luxburg},
	year         = 2023,
	booktitle    = {Proceedings of the International Conference on Artificial Intelligence and Statistics {(AISTATS)}},
	pages        = {709--745}
}

@article{Castro.2009,
	title        = {{Polynomial calculation of the Shapley value based on sampling}},
	author       = {Javier Castro and Daniel G{\'{o}}mez and Juan Tejada},
	year         = 2009,
	journal      = {Computers \& Operations Research},
	volume       = 36,
	number       = 5,
	pages        = {1726--1730},
	doi          = {10.1016/j.cor.2008.04.004}
}

@inproceedings{Rozemberczki.2022,
	title        = {The Shapley Value in Machine Learning},
	author       = {Benedek Rozemberczki and Lauren Watson and P{\'{e}}ter Bayer and Hao{-}Tsung Yang and Oliver Kiss and Sebastian Nilsson and Rik Sarkar},
	year         = 2022,
	booktitle    = {Proceedings of International Joint Conference on Artificial Intelligence {(IJCAI)}},
	pages        = {5572--5579}
}

@article{Dubey.1981,
	title        = {Value Theory Without Efficiency},
	author       = {Pradeep Dubey and Abraham Neyman and Robert James Weber},
	year         = 1981,
	journal      = {Mathematics of Operations Research},
	volume       = 6,
	number       = 1,
	pages        = {122--128},
	doi          = {10.1287/MOOR.6.1.122}
}

@article{Friedman.2001,
	title        = {{Greedy function approximation: A gradient boosting machine.}},
	author       = {Jerome H. Friedman},
	year         = 2001,
	journal      = {The Annals of Statistics},
	publisher    = {Institute of Mathematical Statistics},
	volume       = 29,
	number       = 5,
	pages        = {1189--1232},
	doi          = {10.1214/aos/1013203451}
}

@article{Fujimoto.2006,
	title        = {{Axiomatic characterizations of probabilistic and cardinal-probabilistic interaction indices}},
	author       = {Katsushige Fujimoto and Ivan Kojadinovic and Jean{-}Luc Marichal},
	year         = 2006,
	journal      = {Games and Economic Behavior},
	volume       = 55,
	number       = 1,
	pages        = {72--99},
	doi          = {10.1016/j.geb.2005.03.002}
}

@inproceedings{Fumagalli.2023,
	title        = {{SHAP-IQ: Unified Approximation of any-order Shapley Interactions}},
	author       = {Fabian Fumagalli and Maximilian Muschalik and Patrick Kolpaczki and Eyke H{\"u}llermeier and Barbara Hammer},
	year         = 2023,
	booktitle    = {Proceedings of Advances in Neural Information Processing Systems {(NeurIPS)}},
}

@article{Grabisch.1997,
	title        = {k-order additive discrete fuzzy measures and their representation},
	author       = {Grabisch, Michel},
	year         = 1997,
	journal      = {Fuzzy Sets and Systems},
	volume       = 92,
	number       = 2,
	pages        = {167–189},
	doi          = {10.1016/S0165-0114(97)00168-1}
}

@article{Grabisch.1999,
	title        = {{An axiomatic approach to the concept of interaction among players in cooperative games}},
	author       = {Michel Grabisch and Marc Roubens},
	year         = 1999,
	journal      = {International Journal of Game Theory},
	volume       = 28,
	number       = 4,
	pages        = {547--565},
	doi          = {10.1007/s001820050125}
}

@inproceedings{Kolpaczki.2024a,
	title        = {Approximating the Shapley Value without Marginal Contributions},
	author       = {Patrick Kolpaczki and Viktor Bengs and Maximilian Muschalik and Eyke H{\"{u}}llermeier},
	year         = 2024,
	booktitle    = {Proceeedings of the {AAAI} Conference on Artificial Intelligence {(AAAI)}},
	pages        = {13246--13255}
}

@inproceedings{Kolpaczki.2024b,
	title        = {{SVARM-IQ:} Efficient Approximation of Any-order Shapley Interactions through Stratification},
	author       = {Patrick Kolpaczki and Maximilian Muschalik and Fabian Fumagalli and Barbara Hammer and Eyke H{\"{u}}llermeier},
	year         = 2024,
	booktitle    = {Proceedings of the International Conference on Artificial Intelligence and Statistics {(AISTATS)}},
	pages        = {3520--3528},
}

@incollection{Rota.1964,
	title        = {On the foundations of combinatorial theory: I. Theory of M{\"o}bius functions},
	author       = {Rota, Gian-Carlo},
	year         = 1964,
	booktitle    = {Classic Papers in Combinatorics},
	publisher    = {Springer},
	pages        = {332--360}
}

@inproceedings{Zern.2023,
	title        = {Interventional {SHAP} Values and Interaction Values for Piecewise Linear Regression Trees},
	author       = {Artjom Zern and Klaus Broelemann and Gjergji Kasneci},
	year         = 2023,
	booktitle    = {Proceeedings of the {AAAI} Conference on Artificial Intelligence {(AAAI)}},
	pages        = {11164--11173}
}

@inproceedings{Muschalik.2024a,
  title     = {{shapiq: Shapley Interactions for Machine Learning}},
  author    = {Maximilian Muschalik and Hubert Baniecki and Fabian Fumagalli   and Patrick Kolpaczki and Barbara Hammer and Eyke H\"{u}llermeier},
	booktitle    = {Proceedings of Advances in Neural Information Processing Systems {(NeurIPS)}},
  year      = {2024},
 pages = {130324--130357},
}

@inproceedings{Wang.2023,
  author       = {Jiachen T. Wang and
                  Ruoxi Jia},
  editor       = {Francisco J. R. Ruiz and
                  Jennifer G. Dy and
                  Jan{-}Willem van de Meent},
  title        = {Data Banzhaf: {A} Robust Data Valuation Framework for Machine Learning},
  booktitle    = {Proceedings of the International Conference on Artificial Intelligence and Statistics ({AISTATS})},
  pages        = {6388--6421},
  year         = {2023},
}

@InProceedings{Kwon.2022b,
  title = 	 { Beta Shapley: a Unified and Noise-reduced Data Valuation Framework for Machine Learning },
  author =       {Kwon, Yongchan and Zou, James},
  booktitle = 	 {{Proceedings of the International Conference on Artificial Intelligence and Statistics {(AISTATS)}}},
  pages = 	 {8780--8802},
  year = 	 {2022},
}

@article{lundberg2018consistent,
  title={Consistent individualized feature attribution for tree ensembles},
  author={Lundberg, Scott M and Erion, Gabriel G and Lee, Su-In},
  journal={arXiv preprint arXiv:1802.03888},
  year={2018}
}

@inproceedings{lam2022greedy,
  title={Greedy relaxations of the sparsest permutation algorithm},
  author={Lam, Wai-Yin and Andrews, Bryan and Ramsey, Joseph},
  booktitle={Uncertainty in Artificial Intelligence},
  pages={1052--1062},
  year={2022},
  organization={PMLR}
}

@inproceedings{musco2025provably,
  author       = {Christopher Musco and
                  R. Teal Witter},
  title        = {{Provably Accurate Shapley Value Estimation via Leverage Score Sampling}},
  booktitle = {Proccedings of the International Conference on Learning Representations (ICLR)},
  year         = {2025},
}

@inproceedings{fumagalli2024kernelshapiq,
	title        = {{KernelSHAP-IQ: Weighted Least Square Optimization for Shapley Interactions}},
	author       = {Fabian Fumagalli and Maximilian Muschalik and Patrick Kolpaczki and Eyke H{\"{u}}llermeier and Barbara Hammer},
	year         = 2024,
	booktitle    = {Proceedings of the International Conference on Machine Learning ({ICML})},
    pages =  {14308--14342},
}

@article{tsai2022faith,
	title        = {{Faith-Shap: The Faithful Shapley Interaction Index}},
	author       = {Che{-}Ping Tsai and Chih{-}Kuan Yeh and Pradeep Ravikumar},
	year         = 2023,
	journal      = {Journal of Machine Learning Research},
	volume       = 24,
	number       = 94,
	pages        = {1--42}
}

@article{holland1986statistics,
  title={Statistics and causal inference},
  author={Holland, Paul W},
  journal={Journal of the American statistical Association},
  volume={81},
  number={396},
  pages={945--960},
  year={1986},
  publisher={Taylor \& Francis}
}

@article{rubin1974estimating,
  title={Estimating causal effects of treatments in randomized and nonrandomized studies.},
  author={Rubin, Donald B},
  journal={Journal of educational Psychology},
  volume={66},
  number={5},
  pages={688},
  year={1974},
  publisher={American Psychological Association}
}

@book{peters2017elements,
  title={Elements of causal inference: foundations and learning algorithms},
  author={Peters, Jonas and Janzing, Dominik and Sch{\"o}lkopf, Bernhard},
  year={2017},
  publisher={The MIT press}
}

@article{bareinboim2016causal,
  title={Causal inference and the data-fusion problem},
  author={Bareinboim, Elias and Pearl, Judea},
  journal={Proceedings of the National Academy of Sciences},
  volume={113},
  number={27},
  pages={7345--7352},
  year={2016},
  publisher={National Academy of Sciences}
}

@article{li2024robust,
  title={Robust data valuation with weighted banzhaf values},
  author={Li, Weida and Yu, Yaoliang},
  journal={Advances in Neural Information Processing Systems},
  volume={36},
  year={2024}
}

@book{pearl2009causality,
  title={Causality: Models, Reasoning and Inference},
  author={Pearl, Judea},
  year={2009},
  publisher={Cambridge University Press},
  edition="Second"
}

@article{JMLR:v26:25-0512,
  author  = {Si-Yang Liu and
			 Hao-Run Cai and
 			 Qi-Le Zhou and
			 Huai-Hong Yin and
			 Tao Zhou and
			 Jun-Peng Jiang and
			 Han-Jia Ye},
  title   = {Talent: A Tabular Analytics and Learning Toolbox},
  journal = {Journal of Machine Learning Research},
  year    = {2025},
  volume  = {26},
  number  = {226},
  pages   = {1--16},
  url     = {http://jmlr.org/papers/v26/25-0512.html}
}
\bibliographystyle{icml2026}

\newpage
\appendix
\onecolumn

\clearpage
\section{Delayed Proofs}
\label{app:delayedproof}

\noindent \textbf{Lemma \ref{lemma:closed_plus}} \textit{
    Let $\bar{S} \subset [d]$ be a closed set with \smallest{} $\underline{S}$.
    Then
    \begin{enumerate}
        \item For all $j \in \underline{S}$, $\bar{S} \setminus \{j\}$ is closed.
        \item If $\bar{S} \neq [d]$, there exists $j \in [d] \setminus \bar{S}$ so that $\bar{S} \cup \{j\}$ is closed and $j$ is in the \smallest{} of $\bar{S} \cup \{j\}$.
    \end{enumerate}
}

\begin{proof}[Proof of Lemma \ref{lemma:closed_plus}]
    We will first show that $\bar{S} \setminus \{j\}$ is closed for all $j \in \underline{S}$.
    To do so, it suffices to show that, for all nodes \textit{not} in $\bar{S} \setminus \{j\}$, there is a directed path to $Y$ that does not intersect $\bar{S} \setminus \{j\}$.
    This is clearly true for all nodes not in $\bar{S}$ since $\bar{S}$ is itself closed.
    It must also be true for $j$ since $j \in \underline{S}$, i.e., there is a directed path from $j$ to $Y$ that does not intersect $S$.
    The first statement follows.

    Next, we will show that for $S \neq [d]$, there exists $j \in [d] \setminus \bar{S}$ so that $\bar{S} \cup \{j\}$ is closed.
    Let $T = [d] \setminus \bar{S}$.
    Since $G$ is a finite DAG, there is a topological order on the nodes.
    Consider a node $j \in T$ with no ancestors in $T$, either because they are all in $\bar{S}$, or it has no ancestors.
    To prove that $\bar{S} \cup \{j\}$ is closed,
    it suffices to show that, for all nodes \textit{not} in $\bar{S} \cup \{j\}$, there is a directed path to $Y$ that does not intersect $\bar{S} \cup \{j\}$.
    Since $\bar{S}$ is closed, all of the nodes in $T \setminus \{j\}$ have paths to $Y$ that do not intersect $\bar{S}$.
    Note that $j$ cannot be in any of these paths, since $j$ has no ancestors in $T$, hence all nodes in $T \setminus \{j\}$ must actually have paths that do not intersect $\bar{S}$ or $j$.
    Furthermore, since $j \in T$, it must have a directed path to $Y$ that does not intersect $\bar{S}$.
    Therefore, $j$ must be in the \smallest{} of $\bar{S} \cup \{j\}$.
    The second statement follows.
\end{proof}

\noindent \textbf{Proposition \ref{prop:boundary_sampler_time}}. 
\textit{
Algorithm \ref{alg:boundary_sampler} runs in $O(m \cdot d(d+e))$ time, where $m$ is the query budget, $d$ is the number of features, and $e$ is the number of edges in the graph.
}

\begin{proof}[Proof of Proposition \ref{prop:boundary_sampler_time}]
The algorithm performs exactly $m$ iterations of the main \texttt{while} loop. In each iteration, we process one class $c$. The cost of processing a class is dominated by generating its neighbors and invoking \texttt{FindClass} for each. A class with closure $\bar{S}$ and basis $\underline{S}$ has $|\underline{S}|$ lower neighbors and $d - |\bar{S}|$ upper neighbors; thus, the total number of neighbors is bounded by $d$. For each neighbor, we execute \texttt{FindClass}, which requires a graph traversal taking $O(d+e)$ time. Therefore, the work per iteration is $O(d(d+e))$.
Over $m$ iterations, the total time complexity is $O(m \cdot d(d+e))$.
This is linear in the budget $m$, ensuring the method is scalable for anytime estimation.
\end{proof}

\begin{algorithm}[tb]\caption{\texttt{BoundarySampler}}
\label{alg:boundary_sampler}
\begin{algorithmic}
    \STATE {\bfseries Input:} Budget $m$, Graph $G$
    \STATE {\bfseries Output:} Sampled classes $\mathcal{C}$, flag \texttt{allSampled}
    \STATE $\mathcal{C} \gets \emptyset$ \hfill {\color{gray!90} $\triangleright$ Sampled classes}
    \STATE $\mathcal{Q} \gets \emptyset$ \hfill {\color{gray!90} $\triangleright$ Queue mapping classes to weights}
    \STATE $\mathcal{C}_{\text{seen}} \gets \emptyset$
    \STATE {\color{gray!90} $\triangleright$ Helper to calculate weight and enqueue}\FUNCTION{\texttt{Enqueue}($S$)}\STATE $c \gets \texttt{FindClass}(S, G)$ \hfill {\color{gray!90} $\triangleright$ Get basis and closure}\IF{$c \in \mathcal{C}_{\text{seen}}$}
    \STATE \textbf{return}\ENDIF\STATE Add $c$ to $\mathcal{C}_{\text{seen}}$\STATE $\mathcal{Q}[c] \gets \mathbb{E}_i[|w_i(c)|]+ \epsilon$ \hfill {\color{gray!90} $\triangleright$ Add $\epsilon$ to ensure valid dist.}\ENDFUNCTION
    \STATE {\color{gray!90} $\triangleright$ Phase 1: Warm-start}   
    \FOR{$\ell = 1$ \textbf{to} $d$}
    \STATE Sample random set $S \subset [d]$ where $|S|=\ell$
    \STATE $\texttt{Enqueue}(S)$
    \ENDFOR
    \STATE {\color{gray!90} $\triangleright$ Phase 2: Weighted Graph Traversal}
    \WHILE{$|\mathcal{C}| < m$ \textbf{and} $\mathcal{Q} \neq \emptyset$}
    \STATE Sample $c$ from $\mathcal{Q}$ with prob $\propto \mathcal{Q}[c]$
    \STATE Remove $c$ from $\mathcal{Q}$ and add to $\mathcal{C}$
    \STATE $\underline{S}, \bar{S} \gets$ \smallest{} and \largest{} of $c$
    \STATE $N_{\text{below}} \gets \{ \bar{S} \setminus \{j\} \mid j \in \underline{S} \}$
    \STATE $N_{\text{above}} \gets \{ \bar{S} \cup \{j\} \mid j \notin \bar{S} \}$
    \FOR{candidate set $S' \in N_{\text{below}} \cup N_{\text{above}}$}
        \STATE $\texttt{Enqueue}(S')$
    \ENDFOR
    \ENDWHILE
\STATE {\bfseries return} $\mathcal{C}$, $\mathcal{Q} == \emptyset$
\end{algorithmic}
\end{algorithm}

\clearpage
\section{Background on Structural Causal Models}
\label{app:backgroundscm}

A \textbf{Causal graph} is usually described as a Directed Acyclic Graph (DAG) $G$, where every node represents a measured random variable and every directed edge represents a relationship cause $\rightarrow$ effect. These graphs often include dashed bidirected edges between pairs of nodes ($X \leftrightarrow Y$) as a shorthand for the existence of an unobserved latent variable $U$ that acts as a confounder between them ($X \leftarrow U \rightarrow Y$). Additionally, it is assumed that every measured node $X$ has a latent exogenous noise variable, denoted $E_X$, with the associated edge $E_X \rightarrow X$.

A \textbf{Structural Causal Model} (SCM) is a probabilistic model based on such a causal graph $G$ with a probability distribution for all latent nodes (i.e., all confounders and $E_X$) and, for each measured node $X$, functions $f_X$ such that $X := f_X(Pa_G(X))$, taking the values of all parents of $X$ in $G$ including latent variables. From this model, a probability distribution over the measured variables $V$ emerges, $P(V)$, as well as the intervened model $M_{do(S=s)}$, where the $do$ operator conveys an intervention on all nodes $X \in S$ replacing their functions $f_X$ by the assignment $x := s_X$. This effectively removes all incoming edges to the intervened nodes from the graph, and results in a new probability distribution for the intervened model, $P(V \mid do(S=s))$, also denoted $P_s(V)$ or, for arbitrary intervention values, $P_S(V)$.

Given a dataset and its assumed underlying causal structure, we can train an SCM following that graph to learn the distribution of the dataset. Afterwards, one can employ procedures on the SCM to estimate \textbf{causal queries} of the form $P_X(Y)$. Additionally, if the causal query is non-parametrically \textit{identifiable} (more details in \cref{sec:identifiable}), these estimations resulting from the SCM are necessarily equivalent to what the true data generating process would return if we had access to it. Therefore, we can employ these trained SCMs to estimate the do-SHAP value functions $\nu(S)$ as long as they are identifiable. For more details about this approach, please refer to \cite{parafita2022dcg}.

\clearpage
\section{Simulated Sampling from Irreducible Sets}
\label{app:simulatedsampling}

In this appendix, we detail the \texttt{SimulatedSampler}, the engine behind Step 2B of the \texttt{doEstimator}. This component allows us to sample coalitions efficiently from the specific sub-lattice defined by the discovered equivalence classes.

\subsection{Algorithm and Explanation}

The challenge in simulating samples from the discovered classes is that the union of these classes does not form a simple structure (like a full powerset). A naive rejection sampling approach—sampling from the full powerset and keeping only those in $\mathcal{C}$—would be inefficient if the volume of $\mathcal{C}$ is small relative to $2^d$.

Instead, our sampler (Algorithm \ref{alg:simulated_sampler}) adopts a constructive approach:
\begin{itemize}
    \item Normalization: We first calculate the total ``volume" of available coalitions within the known classes. For each class $c$ with basis $\underline{S}$ and closure $\bar{S}$, the number of subsets of size $s$ contained in $c$ is given by $\binom{|\bar{S}| - |\underline{S}|}{s - |\underline{S}|}$.
    \item Scale Calibration: Standard Shapley weights are defined for the entire powerset. To sample validly from our restricted support, we solve for a scaling factor $\gamma$ such that the expected number of samples drawn matches our target budget $B_{\text{sim}}$. This is achieved via binary search (Lines 7-15).
    \item Stratified Generation: We iterate through each class $c \in \mathcal{C}$ and each valid subset size $s$. For each size, we compute the expected number of samples $N_{c,s}$. We use probabilistic rounding to convert this expectation into an integer count, and then generate that many unique subsets from class $c$ using a combinatorial number system (Lines 22-26).
\end{itemize}

\begin{algorithm}[h!]\caption{\texttt{SimulatedSampler}}\label{alg:simulated_sampler}\begin{algorithmic}[1]\STATE {\bfseries Input:} Discovered classes $\mathcal{C}$, Simulation Budget $B_{\text{sim}}$, Weights $w$ (per size)\STATE {\bfseries Output:} Dataset of coalitions $\mathcal{D} = \{(S_k, \nu(c_k), p_k)\}$\STATE $\mathcal{D} \gets \emptyset$\STATE {\color{gray!90} $\triangleright$ Step 1: Count available coalitions per size across all classes}\STATE $N_{\text{avail}}[s] \gets 0$ for $s \in 0\dots d$\FOR{each class $c \in \mathcal{C}$}\STATE Let $n_{\text{free}} = |\bar{S}| - |\underline{S}|$\FOR{$j = 0$ \textbf{to} $n_{\text{free}}$}\STATE $N_{\text{avail}}[|\underline{S}| + j] \gets N_{\text{avail}}[|\underline{S}| + j] + \binom{n_{\text{free}}}{j}$\ENDFOR\ENDFOR\STATE {\color{gray!90} $\triangleright$ Step 2: Calibrate sampling scale $\gamma$}\STATE Define $E[\text{samples}](\gamma) = \sum_{s=0}^d N_{\text{avail}}[s] \cdot \min\left( \gamma \frac{w_s}{\binom{d}{s}}, 1 \right)$\STATE Find $\gamma^*$ via binary search such that $E[\text{samples}](\gamma^*) \approx B_{\text{sim}}$\STATE {\color{gray!90} $\triangleright$ Step 3: Constructive Sampling}\FOR{each class $c \in \mathcal{C}$}\STATE Let $I_{\text{free}}$ be indices in $\bar{S} \setminus \underline{S}$\FOR{$j = 0$ \textbf{to} $|I_{\text{free}}|$}\STATE Size $s \gets |\underline{S}| + j$\STATE Prob $p \gets \min\left( \gamma^* \frac{w_s}{\binom{d}{s}}, 1 \right)$\STATE Count $K \gets \binom{|I_{\text{free}}|}{j}$\STATE Expected count $\mu \gets K \cdot p$\STATE $N_{\text{draw}} \gets \lfloor \mu \rfloor + \text{Bernoulli}(\mu - \lfloor \mu \rfloor)$ \hfill {\color{gray!90} $\triangleright$ Probabilistic rounding}    \IF{$N_{\text{draw}} > 0$}
        \STATE Generate $N_{\text{draw}}$ unique combinations $C_{\text{sub}} \subseteq I_{\text{free}}$ of size $j$
        \FOR{each combination $\sigma \in C_{\text{sub}}$}
            \STATE $S \gets \underline{S} \cup \sigma$
            \STATE Add $(S, \nu(c), p)$ to $\mathcal{D}$
        \ENDFOR
    \ENDIF
\ENDFOR
\ENDFOR\STATE \textbf{return} $\mathcal{D}$\end{algorithmic}\end{algorithm}

\subsection{Runtime Analysis}

\begin{proposition}
The \texttt{SimulatedSampler} generates $N_{\text{sim}}$ samples in $O(N_{\text{sim}} \cdot d + |\mathcal{C}| \cdot d)$ time.
\end{proposition}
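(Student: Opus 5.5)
We prove the bound by charging all work in Algorithm \ref{alg:simulated_sampler} either to a pair (class, size) or to an emitted sample. There are three phases.

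\textbf{Phase one: counting.} For each class $c \in \mathcal{C}$ we set $n_{\text{free}} = |\bar{S}| - |\underline{S}| \le d$. The inner loop runs $n_{\text{free}}+1 \le d+1$ times and adds one binomial coefficient per iteration. We precompute Pascal's triangle up to $d$ once, in $O(d^2)$ time; afterwards each coefficient is an $O(1)$ lookup. Step 1 therefore costs $O(|\mathcal{C}| \cdot d)$. The $O(d^2)$ table is absorbed because $|\mathcal{C}| \ge 1$ classes are sampled and, with the warm start, $|\mathcal{C}|$ is typically at least of order $d$. Alternatively, we treat arithmetic on these quantities as unit cost and charge the table to preprocessing.

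\textbf{Phase two: calibration.} Each evaluation of $E[\text{samples}](\gamma)$ is a sum over $d+1$ sizes, so it costs $O(d)$. The binary search uses a fixed number of iterations, set by the floating-point precision (a constant, or $O(\log(1/\text{tol}))$). Step 2 therefore costs $O(d)$, which is dominated by the first term.

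\textbf{Phase three: constructive sampling.} The pairs $(c, j)$ number at most $|\mathcal{C}|(d+1)$. For each pair, computing $p$, $K$, $\mu$ and the probabilistic rounding takes $O(1)$ time, giving $O(|\mathcal{C}| d)$ in total. The remaining cost is generating the $N_{\text{draw}}$ unique combinations and forming each $S = \underline{S} \cup \sigma$. This is the main obstacle, since we must avoid rejection or collision overhead. We use the combinatorial number system. Draw $N_{\text{draw}}$ distinct ranks from $\{0, \dots, K-1\}$, for example with Floyd's algorithm, in $O(N_{\text{draw}})$ expected time using a hash set. Decode each rank into a $j$-subset of $I_{\text{free}}$ in $O(d)$ time via greedy unranking with the precomputed binomials. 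Then write out $S$ in $O(d)$ time. Since $N_{\text{draw}} \le \lceil \mu \rceil \le K$ by the cap $p \le 1$, distinct ranks always exist. Each emitted sample thus costs $O(d)$, and summing over all pairs gives $O(N_{\text{sim}} \cdot d)$, where $N_{\text{sim}} = \sum_{c,j} N_{\text{draw}}$.

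Adding the three phases gives $O(N_{\text{sim}} \cdot d + |\mathcal{C}| \cdot d)$. In expectation, $N_{\text{sim}} \approx B_{\text{sim}}$ by the calibration. The bound itself, however, is stated in terms of the realized $N_{\text{sim}}$, so no concentration argument is needed.
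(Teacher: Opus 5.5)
Your proposal is correct and follows the same three-phase accounting as the paper's proof: counting in $O(|\mathcal{C}| d)$ with precomputed binomials, calibration at $O(d)$ per evaluation over a constant number of bisection steps, and $O(d)$ per emitted combination. It is more careful than the paper about the per-(class, size) loop overhead and about producing \emph{distinct} combinations, which your Floyd-plus-unranking argument together with $N_{\text{draw}} \le \lceil \mu \rceil \le K$ handles cleanly.

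Two small caveats. The warm start fills the queue rather than $\mathcal{C}$ (in Step 2B, $|\mathcal{C}| = m$, which may be below $d$), so the $O(d^2)$ table should simply be charged to preprocessing as in your fallback, which is exactly the paper's ``with pre-computation'' assumption. Also, replacing the hash set by a balanced search tree, at $O(\log K)$ per operation (which is $O(d)$ because $K \le 2^d$), makes your generation bound worst-case rather than expected.
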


\begin{proof}
The algorithm consists of three main parts:
\begin{enumerate}
    \item Counting ($O(|\mathcal{C}| \cdot d)$): We iterate over each class once. For each class, we perform a loop over its "free" size range, which is at most $d$. The binomial coefficient calculations can be done in $O(1)$ with pre-computation.
    \item Calibration ($O(d \cdot \log(1/\epsilon))$): The binary search evaluates the expected sample sum a constant number of times. Each evaluation sums over $d$ sizes.
    \item Generation ($O(N_{\text{sim}} \cdot d)$): The outer loops iterate over classes and sizes, but the inner generation logic (Lines 24-28) executes exactly $N_{\text{sim}}$ times in total (by definition of the calibrated budget). Generating a combination of size $k$ using the combinatorial number system or direct sampling takes $O(d)$. Thus, the generation phase scales linearly with the number of output samples.
\end{enumerate}
Dominating terms yield a total complexity of $O(N_{\text{sim}} \cdot d + |\mathcal{C}| \cdot d)$, which is highly efficient given that no SCM evaluations are performed.
\end{proof}

\clearpage
\section{Stratified Sampling}
\label{app:stratified}

Ideally, to estimate the Shapley value for player $i$, we would sample each class $c$ proportional to the magnitude of its weight $w_i(c)$. However, the underlying class structure (the mapping of sets to values) is unknown prior to sampling, making the direct computation of $w_i(c)$ impossible.

To address this, we consider an adaptive sampling scheme based on the natural distribution suggested by the Shapley weights. We sample a set $S$ containing $i$ with probability $p_{\ell-1}$, and a set $S$ excluding $i$ with probability $p_\ell$. Note that while these probabilities are derived from the Shapley weights, they do not perfectly correspond to the final importance weights because the contribution of $i$ cancels to zero when $i$ is effectively a ``null'' player (i.e., in the \largest{} but not in the \smallest{}).

The proposed \textbf{do-Good estimator} samples according to these distributions \textit{without replacement}. Sampling classes without replacement is crucial for the estimator to achieve exactness when the sampling budget covers the effective support of the game ($m \geq r$). To achieve this, we maintain weighted counts of the remaining ``mass'' of the distributions for each player $i$. We define the remaining mass for sets including $i$ ($\mu^{(+)}$) and excluding $i$ ($\mu^{(-)}$) as:
\begin{align}
    \mu_{\ell,i}^{(+)} &= p_{\ell-1} \sum_{S: |S|=\ell, i \in S} \mathbbm{1}[\text{$S$ not seen}], \\
    \mu_{\ell,i}^{(-)} &= p_{\ell} \sum_{S: |S|=\ell, i \notin S} \mathbbm{1}[\text{$S$ not seen}].
\end{align}
Initially, before any classes have been sampled, these initialize to the full binomial sums: $\mu_{\ell,i}^{(+)} = p_{\ell-1} \binom{d-1}{\ell-1}$ and $\mu_{\ell,i}^{(-)} = p_{\ell} \binom{d-1}{\ell}$.

\subsection*{Sampling Procedure}
The sampling process proceeds hierarchically to determine whether to sample proportional to $\mu_{\ell,i}^+$ or $\mu_{\ell,i}^-$:
\begin{enumerate}
    \item We first select an index $i$ uniformly from $[d]$.
    \item We sample an inclusion indicator $z \in \{-,+\}$, where
    \begin{align*}
        \Pr(z =+) = \frac{\sum_{\ell=1}^d \mu_{\ell,i}^{(+)}}{\sum_{\ell=1}^d \mu_{\ell,i}^{(+)} + \sum_{\ell=0}^{d-1} \mu_{\ell,i}^{(-)}}.
    \end{align*}
    \item We sample a set size $\ell$ with probability proportional to the remaining mass $\mu_{\ell,i}^{(z)}$.
\end{enumerate}

Once $\ell$ and $z$ are determined, we must sample a specific set $S$ of size $\ell$ (containing $i$ if and only if $z=+$) \textit{uniformly} from the collection of all such sets that belong to currently unseen classes. This step is non-trivial; it depends on the number of valid completions available in the unseen space. As detailed in Algorithm~\ref{alg:sample_unseen}, computing these counts requires a linear pass through the history of previously discovered classes.

We analyze the computational complexity of the \texttt{ClassSampler} (Algorithm~\ref{alg:do_good}). The runtime is dominated by the requirement to sample uniformly from unseen sets, which necessitates checking consistency against all previously discovered classes.

\begin{proposition}[Stratified Sampling Complexity]
\label{prop:stratifiedcomplexity}
    Let $m$ be the sampling budget (number of iterations) and $d$ be the number of elements (dimension). Assuming set operations (union, intersection, subset checks) take $O(d)$ time, the total time complexity of the \texttt{ClassSampler} is $O(m^2 d^2)$.
\end{proposition}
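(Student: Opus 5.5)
We bound the per-iteration cost of the \texttt{ClassSampler} (Algorithm~\ref{alg:do_good}) and then sum over the $m$ iterations. Fix iteration $t \le m$. At that point at most $t-1 \le m$ classes have been discovered, each stored as a pair $(\underline{S}, \bar{S})$. The iteration does four things: (i) draw the index $i$, the indicator $z$ and the size $\ell$ from the maintained masses $\mu^{(\pm)}_{\ell,i}$; (ii) draw a set $S$ uniformly from the unseen sets of size $\ell$ that satisfy the inclusion constraint, using Algorithm~\ref{alg:sample_unseen}; (iii) call \texttt{FindClass} on $S$ and record the new class; (iv) update the masses $\mu^{(\pm)}_{\ell,i}$ for every $i$ and $\ell$.

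\textbf{Steps (i), (iii) and (iv).} Step (i) samples from $O(d)$ categories, so it costs $O(d)$. For step (iv), the new class with basis $\underline{S}$ and closure $\bar{S}$ removes, for each player $i$ and size $\ell$, exactly $\binom{|\bar{S}|-|\underline{S}|-[i\in\bar{S}\setminus\underline{S}]}{\ell-|\underline{S}|-[i\in\ldots]}$ sets from the relevant stratum. The bracketed correction depends only on whether $i\in\underline{S}$, $i\in\bar{S}\setminus\underline{S}$ or $i\notin\bar{S}$. With precomputed binomials, the update therefore costs $O(1)$ per pair $(i,\ell)$, i.e. $O(d^2)$ per iteration. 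This is the key point: the mass update must not grow with $t$. I expect to justify it from the fact that classes partition the power set, so newly removed sets are disjoint from previously removed ones and no inclusion--exclusion against history is needed. Step (iii) is a graph traversal whose cost we take as dominated, or charge under the $O(d)$ set-operation assumption.

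\textbf{Step (ii), the main obstacle.} Sampling uniformly from unseen sets is done element by element. Over at most $d$ decisions, we decide whether each coordinate is in or out of $S$. Each decision needs the number of valid unseen completions consistent with the current partial assignment. That number equals the total binomial count minus, for each discovered class, the number of that class's members of size $\ell$ consistent with the partial assignment. Because the classes are disjoint, subtracting each one independently is exact. For each of the at most $t$ classes this count is one binomial coefficient, obtained after $O(d)$-time set operations (intersecting the fixed-in and fixed-out parts with $\underline{S}$ and $\bar{S}$, and checking feasibility). One decision therefore costs $O(t\,d)$, and the $d$ decisions cost $O(t\,d^2)$. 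I would argue this carefully, including that a rejected branch can be detected with the same counts, so the walk never backtracks.

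\textbf{Summing.} Iteration $t$ costs $O(t d^2 + d^2)$, so the total is
\begin{align*}
\sum_{t=1}^{m} O(t d^2) = O(m^2 d^2).
\end{align*}
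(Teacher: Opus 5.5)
Your proposal is correct and matches the paper's argument essentially step for step: $O(td)$ per \texttt{CountSeen} call, two such calls for each of at most $d$ element decisions in \texttt{SampleUnseenBySize} giving $O(td^2)$, an $O(d^2)$ mass update, and a sum over $t \le m$ yielding $O(m^2 d^2)$. Your partition argument for why the update needs no inclusion--exclusion against earlier classes justifies something the paper only asserts. For step (iii), drop the appeal to the set-operation assumption and argue as the paper does: \texttt{FindClass} costs $O(d+e)$, and $e = O(d^2)$ for a DAG on the features and $Y$, so this cost is absorbed into the $O(td^2)$ per-iteration bound.
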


Because the algorithm is quadratic in $m$, we turn to the \textit{boundary sampling} method described in Section \ref{sec:estimate}.

\begin{proof}[Proof of Proposition \ref{prop:stratifiedcomplexity}]
    The analysis proceeds by examining the cost of the helper functions from the bottom up.
    
    \textbf{1. Cost of \texttt{CountSeen}:}
    This function iterates through the set of seen classes $\mathcal{C}$. In the $k$-th iteration of the main loop, $|\mathcal{C}| \leq k$. Inside the loop, we perform standard set operations (checking $\underline{S} \subseteq S \cup R$, etc.).
    \begin{equation}
        T_{\text{count}}(k) = O(|\mathcal{C}| \cdot d) = O(k \cdot d).
    \end{equation}
    
    \textbf{2. Cost of \texttt{SampleUnseenBySize}:}
    This function constructs a set $S$ of size $\ell$ element-by-element. The \texttt{while} loop runs at most $\ell \leq d$ times. In every iteration, it calls \texttt{CountSeen} twice to calculate $N_{\text{in}}$ and $N_{\text{out}}$.
    \begin{align}
        T_{\text{sample}}(k) &= \sum_{j=1}^{\ell} 2 \cdot T_{\text{count}}(k) \nonumber \\
        &= O(d \cdot (k \cdot d)) = O(k d^2).
    \end{align}
    
    \textbf{3. Total Cost of \texttt{ClassSampler}:}
    The main algorithm runs for $m$ iterations. In iteration $k$, it calls \texttt{SampleUnseenBySize}, performs a graph lookup (\texttt{FindClass}), and updates weights.
    
    The function \texttt{FindClass} runs in $O(d+e)$ time where $e$ is the number of edges in the graph. Since $e \leq d^2$, the dominant cost remains the combinatorial counting step. The update of $\mu$ values takes $O(d^2)$ but is repeated only $m$ times.
    
    Summing over $m$ iterations:
    \begin{align}
        T_{\text{total}} &= \sum_{k=1}^{m} \left( T_{\text{sample}}(k) + O(d^2) \right) \nonumber \\
        &= \sum_{k=1}^{m} O(k d^2) \nonumber \\
        &= O(d^2) \sum_{k=1}^{m} k \approx O(d^2 m^2).
    \end{align}
    Thus, the total complexity is quadratic in both the dimension and the sample budget.
\end{proof}

\begin{algorithm}[tb]
\caption{\texttt{ClassSampler}}
\label{alg:do_good}
\begin{algorithmic}
    \STATE {\bfseries Input:} Number of elements $d$, budget $m$, value function $\nu$, graph $G$
    \STATE {\bfseries Output:} Set of seen classes $\mathcal{C}$
    \STATE Initialize seen classes $\mathcal{C} \gets \emptyset$
    \STATE Set $\mu_{\ell,i}^{(1)} \gets p_{\ell-1} \binom{d-1}{\ell-1}$ for all $\ell \in \{1,\ldots, d\}$ and $i \in [d]$
    \STATE Set $\mu_{\ell,i}^{(0)} \gets p_{\ell} \binom{d-1}{\ell}$ for all $\ell \in \{0,\ldots, d-1\}$ and $i \in [d]$
    \FOR{$\text{idx} = 1$ \textbf{to} $m$}
        \STATE {\color{gray!90} $\triangleright$ Sample set from unseen classes}
        \STATE Sample $i \sim \text{Uniform}([d])$
        \STATE Sample $z \sim \text{Bernoulli}\left(\frac{\sum_{\ell=1}^d \mu_{\ell,i}^{(1)}}{\sum_{\ell=1}^d \mu_{\ell,i}^{(1)} + \sum_{\ell=0}^{d-1} \mu_{\ell,i}^{(0)}}\right)$
        \STATE Sample $\ell \propto \mu_{\ell,i}^{(z)}$
        \STATE $S \gets \{i\}$ \textbf{ if } $z=1$ \textbf{ else } $\emptyset$
        \STATE $S \gets \texttt{SampleUnseenBySize}(\ell, S, \mathcal{C})$
        \STATE $\underline{S}, \bar{S} \gets \texttt{FindClass}(S, G)$
        \hfill {\color{gray!90}$\triangleright$ Find class $c$}
        \STATE Update all $\mu_{\ell,i}^{(1)}$ and $\mu_{\ell,i}^{(0)}$ \hfill {\color{gray!90}$\triangleright$ Constant time per $\ell$ and $i$}
        \STATE Add class $c$ to $\mathcal{C}$
    \ENDFOR
\STATE {\bfseries return} $\mathcal{C}$
\end{algorithmic}
\end{algorithm}

\begin{algorithm}[tb]
  \caption{\texttt{SampleUnseenBySize}}
  \label{alg:sample_unseen}
  \begin{algorithmic}
    \STATE {\bfseries Input:} Size $\ell$, Initial set $S$ (i.e., $\{i\}$ or $\emptyset$), seen classes $\mathcal{C}$
    \STATE {\bfseries Output:} A set $S$ of size $\ell$ sampled uniformly from unseen sets
    \STATE $R \gets [d] \setminus S$ \hfill $\triangleright$ Available candidates    
    \STATE $\mathcal{C}' \gets \{c : |\underline{S}| \leq \ell \leq |\bar{S}| \}$ \hfill $\triangleright$ Relevant classes
    \WHILE{$|S| < \ell$}
        \STATE Pick $j$ uniformly from $R$        
        \STATE {\color{gray!90} $\triangleright$ Completions in seen classes with and without $j$}
        \STATE $N_{\text{in}}, \mathcal{C}'_\text{in} \gets \texttt{CountSeen}(\ell, S \cup \{j\}, R \setminus \{j\}, \mathcal{C}')$
        \STATE $N_{\text{out}}, \mathcal{C}'_\text{out} \gets \texttt{CountSeen}(\ell, S, R \setminus \{j\}, \mathcal{C}')$
        
        \STATE {\color{gray!90} $\triangleright$ Calculate count of \textit{unseen} completions}
        \STATE $U_{\text{in}} \gets \binom{|R|-1}{\ell - |S| - 1} - N_{\text{in}}$
        \STATE $U_{\text{out}} \gets \binom{|R|-1}{\ell - |S|} - N_{\text{out}}$
        
        \STATE {\color{gray!90} $\triangleright$ Pick $j$ proportional to unseen completions}
        \STATE Sample $b \sim \text{Bernoulli}\left( \frac{U_{\text{in}}}{U_{\text{in}} + U_{\text{out}}} \right)$
        \STATE $\mathcal{C'} \gets \mathcal{C}'_\text{out}$
        \IF{$b = 1$}
            \STATE $\mathcal{C'} \gets \mathcal{C}'_\text{in}$
            \STATE $S \gets S \cup \{j\}$
        \ENDIF
        \STATE $R \gets R \setminus \{j\}$
    \ENDWHILE
    \STATE {\bfseries return} $S$
  \end{algorithmic}
\end{algorithm}

\begin{algorithm}[tb]
  \caption{\texttt{CountSeen}}
  \label{alg:count_seen}
  \begin{algorithmic}
    \STATE {\bfseries Input:} Size $\ell$, current set $S$, remaining set $R$, seen classes $\mathcal{C}$
    \STATE {\bfseries Output:} Number of completions $N$, applicable classes $\mathcal{C}'$
    \STATE $N \gets 0$
    \STATE $\mathcal{C'} \gets \emptyset$ \hfill {\color{gray!90} $\triangleright$ Initialize applicable classes}
    \FOR{class $c \in \mathcal{C}$}
        \STATE Let $\underline{S}$ be the basis and $\bar{S}$ be the closure of $c$
        \STATE {\color{gray!90}$\triangleright$ Check if $S$ and $R$ are consistent with class}
        \STATE \textbf{if} $\underline{S} \not\subseteq (S \cup R)$ or $S \not\subseteq \bar{S}$ \textbf{then continue}         
        \STATE Add $c$ to $\mathcal{C}'$ \hfill {\color{gray!90} $\triangleright$ Applicable class}
        \STATE { \color{gray!90} $\triangleright$ Calculate available optional elements in this class}
        \STATE $n_{\text{options}} \gets |R \cap (\bar{S} \setminus \underline{S})|$        
        \STATE { \color{gray!90}$\triangleright$ Calculate available, optional spaces }
        \STATE $n_\text{spaces} \gets \ell - |S| - |R \cap \underline{S}|$        
        \STATE $N \gets N + \binom{n_\text{options}}{n_\text{spaces}}$
    \ENDFOR
    \STATE {\bfseries return} $N$
  \end{algorithmic}
\end{algorithm}

\clearpage
\section{Identifiability Criterion}
\label{appendix:identifiability}

In this section, we operate exclusively with non-parametric identifiability, following \cite{pearl2009causality} (Definition 3.2.4), where an interventional causal query $Q[M]$, defined on a causal model $M$ with DAG $G = (V \cup U, E)$ with $V$ measured nodes and $U$ latent nodes, is (non-parametrically) identifiable from $G$ if the query can be computed uniquely from any positive probability of the observed variables Markov-relative\footnote{
    A probability distribution is Markov-relative to a DAG $G = (V \cup U, E)$ if $P(V, U) = \prod_{X \in V \cup U} P(X \mid Pa_G(X))$, where $Pa_G(X)$ is the set of parents of $X$ in $G$.
} to the graph. In other words, if two causal models $M_1$ and $M_2$ with positive probability distributions $P_{M_1}$ and $P_{M_2}$ are both Markov-relative to $G$ and observationally identical to each other ($P_{M_1}(V) = P_{M_2}(V)$), then both models must have the same value for the given query. Note that this definition does not include any assumptions about the functional forms of the causal models nor the probability distributions of the unobserved variables. In other words, non-parametric identifiability cannot assume anything other than the shape of the graph $G$ (including latent confounders).

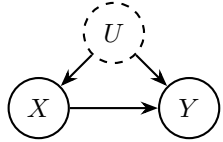
\begin{figure}[htb]
    \centering
    \begin{tikzpicture}[node distance=1cm, thick]
      \node[var] (X) at (0,0) {$X$};
      \node[var, dashed] (U) at (1, 1) {$U$};
      \node[var] (Y) at (2, 0) {$Y$};
    
      \draw[->] (X) -- (Y);
      \draw[->] (U) -- (X);
      \draw[->] (U) -- (Y);
    \end{tikzpicture}
    \caption{Bow-arc graph, with non-identifiable $P_X(Y)$.}
    \label{fig:bow_arc_graph}
\end{figure}

\paragraph{Non-identifiable example.} Consider the Directed Acyclic Graph (DAG) $G=(\{X, Y\} \cup \{U\}, E)$ in \cref{fig:bow_arc_graph}, where $U$ is an unmeasured variable acting as a latent confounder between $X$ and $Y$. In this graph, $P_X(Y)$ is not identifiable. Let us demonstrate by creating two equivalent positive distributions $P^1$ and $P^2$ Markov-relative to $G$, with different intervened distributions for $P_X(Y)$. We define two latent variables: $U$, the latent confounder between $X$ and $Y$, and $E_Y$, the exogenous noise variable for $Y$, both with the same distribution in both models, $\mathcal{N}(0, 1)$. In terms of their functional assignments, $x := f^1_X(u) = f^2_X(u) = u$, while $y := f^1_Y(x, u, e_y) = x - u + e_y$ and $y := f^2_Y(x, u, e_y) = x \cdot 0 + e_y$. Note that all distributions $P^1(X), P^2(X), P^1(Y \mid X), P^2(Y \mid X)$ are identical, $\mathcal{N}(0, 1)$, and so is $P^1(X, Y) = P^1(X) P^1(Y \mid X) = P^2(X) P^2(Y \mid X) = P^2(X, Y)$, and they are positive. However, $P^1(Y \mid do(X=0)) \sim N(0, 2)$ while $P^2(Y \mid do(X=0)) \sim N(0, 1)$. Therefore, $P_X(Y)$ is not identifiable.

Non-parametric identifiability can be demonstrated if we can find an estimand of the causal query only containing observational terms. In particular, the $\texttt{ID}$ algorithm \cite{shpitser2006interventional}, described in \cref{subsec:id_algorithm} and presented in simplified form in \cref{alg:id}, determines non-parametric identifiability w.r.t. a causal graph $G$ of any non-conditional, interventional query $P_X(Y)$, for any disjoint subsets of variables $X, Y$.

Note that any coalition value $\nu(S) = \mathbb{E}[Y \mid do(S=s)]$ is non-parametrically identifiable if, and only if, $P_S(Y)$ is non-parametrically identifiable. Trivially, identifiability in the probability terms guarantees identifiability in the expectation (since the expectation's summation or integral is an observational estimand only consisting of identifiable terms $P_s(y)$). Conversely, non-identifiability in the terms $P_S(Y)$ must imply non-identifiability for the expectation in the absence of further parametrical assumptions (e.g., assuming that $P_s(y) = P_s(-y)$, in which case the terms cancel each other out and make the expectation identifiable). For this reason, we will focus on non-parametric identifiability, and only on queries of the form $P_S(Y)$---which can be determined thanks to the ID algorithm---as a proxy for identifiability of $\nu(S)$.

\subsection{The ID algorithm}
\label{subsec:id_algorithm}

Let us provide the necessary background definitions and properties to understand the ID algorithm and follow the proofs of our identifiability criterion.

\begin{definition}
Let $G=(V \cup U, E)$ be a Directed Acyclic Graph (DAG) with nodes $V \cup U$ (measured and latent nodes, respectively) and edges $E$. Its \textbf{latent projection} is an \textbf{Acyclic Directed Mixed Graph} (ADMG) $G^*=(V, E^*)$, which is a graph containing directed and bidirected edges and no directed cycles. In particular, $G^*$ only contains the measured nodes in $G$ and: (1) any directed edge in $G$ between measured nodes; (2) an edge between measured nodes if both are connected in $G$ by a directed path where all intermediate nodes are in $U$; and (3) a bidirected edge $V_i \leftrightarrow V_j$ between measured nodes if both are connected by a path of the form $V_i \leftarrow \cdots \leftarrow \cdot \rightarrow \cdots \rightarrow V_j$ where all intermediate nodes are in $U$.
\end{definition}

In the following, when talking about a graph, we refer to the ADMG latent projection of the DAG under study.

\paragraph{Notation.} Let $G$ be an ADMG. We denote by $V(G)$ the set of nodes of $G$. We denote by $G[S]$ the induced subgraph of $G$ filtering to the nodes $S \subseteq V(G)$ and only preserving those edges connecting preserved nodes. For a node $Y \in V(G)$, we denote by $An_G(Y)$ the set of (directed) ancestors of node $Y$ in $G$ including $Y$, and by $De_G(Y)$ the set of (directed) descendants of node $Y$ in $G$ including $Y$. We overload the notation with the ancestors of a set of nodes $T \subseteq V(G)$, $An_G(T) := \bigcup_{t \in T} An_G(t)$, and equivalently for $De_G(T)$. We denote the intervened graph $G_{\overline{S}}$ as the subgraph of $G$ such that any incoming edge to the nodes in $S$ is removed. For this reason, in this section, we will avoid denoting the closure of a coalition $S$ by $\overline{S}$ to avoid ambiguity.

\begin{definition}
    The \textbf{root set} $R$ of an ADMG $G$ is the set of nodes in $G$ with no proper descendants: $R := \{X \in V(G) \mid De_G(X) = \{X\}\}$.
\end{definition}

\begin{definition}
    Let $G$ be an ADMG. We say that a set of nodes $V$ is a \textbf{C-component} within an ADMG $G$ if all nodes are connected via bidirected arcs. We denote the set of maximal C-components of a graph $G$ by $\mathcal{C}(G)$. In particular, $\mathcal{C}(G)$ constitutes a partition of $V(G)$. We also say $G$ is a C-component when $V(G)$ is a C-component in $G$.
\end{definition}

\begin{definition}
    We say an ADMG $G$ is an $R$-rooted \textbf{C-forest} if $R$ is its root set, $G$ is a C-component, and all its nodes have at most one (directed) child.
\end{definition}

\begin{definition}
    Let $G$ be an ADMG, and $S, T$ disjoint subsets of variables in $V(G)$. We say a pair $(F, F')$ is an $R$-rooted \textbf{hedge} for $P_S(T)$ if both $F$ and $F'$ are $R$-rooted C-forests such that $F' \subseteq F \subseteq G$, $F \cap S \neq \emptyset$, $F' \cap S = \emptyset$ and $R \subseteq An_{G_{\overline{S}}}(T)$.
\end{definition}

Hedges are the reason for non-identifiability of causal effects. See the following theorem from \citep{shpitser2006interventional} (Theorem 4). 
\begin{theorem}
    If $(F, F')$ is a hedge for $P_S(T)$ in an ADMG $G$, $P_S(T)$ is not identifiable in $G$.
\end{theorem}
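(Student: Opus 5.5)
The plan is to prove non-identifiability directly from the definition: construct two causal models $M_1, M_2$, both Markov-relative to $G$ and with positive distributions, such that $P_{M_1}(V)=P_{M_2}(V)$ but $P_{M_1,S}(T)\neq P_{M_2,S}(T)$. I would follow the parity (XOR) construction of \citet{shpitser2006interventional}. The argument has two stages. First, build models on the hedge itself that disagree on $P_S(R)$. Second, extend them to all of $G$ so that the disagreement propagates from $R$ to $T$.

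\emph{Stage 1 (models on $F$).} Take every observed node of $F$ to be binary. For each bidirected edge of a spanning set of bidirected edges of $F$ (which exists since $F$ is a C-component), introduce a latent fair coin shared by its two endpoints.
\begin{itemize}
\item In $M_1$, each observed node of $F$ is the XOR of its observed parents in $F$ and its latent parents.
\item In $M_2$, I do the same, but each node only uses the parents lying in $F'$. Equivalently, $M_2$ treats $F\setminus F'$ as ``cut off'': those nodes are independent noise as far as their children are concerned, and $F'$ evolves as the same XOR model restricted to $F'$.
\end{itemize}
The key computation uses two facts about a C-forest: every non-root node has exactly one child, and every latent coin enters exactly two equations. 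Summing all equations mod $2$, each latent and each non-root observed variable then appears an even number of times. This yields a parity identity relating $\bigoplus R$ to independent fair coins, which forces the observed joint of $F$ to be uniform in both models. So $P_{M_1}(V(F))=P_{M_2}(V(F))$.

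Under $do(S=s)$ the picture changes, because $F\cap S\neq\emptyset$ while $F'\cap S=\emptyset$. In $M_1$ the intervened nodes in $F\setminus F'$ break the cancellation, so the distribution of $\bigoplus R$ changes. In $M_2$ the intervened nodes never influence $F'$, so $P_{M_2,S}(R)$ is unchanged. I would verify explicitly that $\bigoplus R$ is deterministic under one model and uniform under the other, which gives $P_{M_1,S}(R)\neq P_{M_2,S}(R)$.

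\emph{Stage 2 (extension to $G$).} Since $R\subseteq An_{G_{\overline{S}}}(T)$, for each $r\in R$ I pick a directed path to $T$ in $G_{\overline{S}}$, and take a union of such paths forming a forest. Nodes on these paths are set to the XOR of their predecessors on the chosen paths, and all other nodes are independent noise; both models share these mechanisms. Unused edges of $G$ are allowed as ignored arguments, so both models remain Markov-relative to $G$, and they still agree observationally. The chosen paths avoid $S$, so under $do(S=s)$ the parity of $R$ is transmitted to a function of $T$ and the discrepancy persists in $P_S(T)$. Finally, I would obtain positivity by perturbing every mechanism with independent flips of small probability $\varepsilon$. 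Observational equality is preserved exactly, since the perturbation is applied identically in both models, and by continuity the interventional gap survives for small $\varepsilon$.

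\emph{Main obstacle.} I expect the delicate step to be the parity bookkeeping in Stage 1: showing that the observational distributions coincide exactly while $\bigoplus R$ differs under intervention. This depends on the precise C-forest structure (a single child per node, bidirected connectivity) and on choosing the latent spanning structure correctly. A secondary worry is ensuring the path forest in Stage 2 does not re-merge parities in a way that cancels; for that I would route all of $R$ into a single target node, or else use a fresh bit per root, and track the signal explicitly.
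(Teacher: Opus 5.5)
The paper gives no proof of this statement; it imports Theorem 4 of Shpitser and Pearl, and your two-stage parity plan follows that source. As written, though, it has genuine gaps.

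\textbf{Stage 2: paths through $F$.} A directed path from $R$ to $T$ in $G_{\overline{S}}$ can be forced through a non-root node of $F$, and you do not say what such a node computes. Neither reading of your text works. Take directed edges $x\to r$, $r\to w$, $w\to r'$, $w\to t$, $r'\to t$ and bidirected edges $x\leftrightarrow r$, $r\leftrightarrow w$, $w\leftrightarrow r'$ with coins $\alpha,\beta,\gamma$. Let $F$ consist of $x\to r$, $w\to r'$ and all three bidirected edges, and let $F'$ consist of $w\to r'$, $r\leftrightarrow w$, $w\leftrightarrow r'$. Set $S=\{x\}$ and $T=\{t\}$, so $R=\{r,r'\}$.

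Every path from $r$ to $t$ passes through $w$, whose $F$-child is $r'$. In both models $r'=w\oplus\gamma$, so with $t:=w\oplus r'$ you get $t=\gamma$ observationally and under any intervention on $x$. This holds whether $w$ is overridden by $r$ or has $r$ XOR-ed into its $F$-mechanism. The other routing, $r\to w\to r'\to t$, likewise leaves $P_S(t)$ uniform in both models. The signal re-merges through an $F$-edge, which your path-forest condition does not see.

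The missing idea is to forward information on a channel separate from the $F$-dynamics. Give path nodes product domains, so that a node of $F$ carries its $F$-bit (the only component its $F$-child reads) plus copies of the root bits it receives. Then everything outside the $F$-bits is the same deterministic function of the $F$-bits in both models, so Stage 1's agreement lifts to all of $G$. Under $do(S=s)$ the nodes of $T$ hold copies of all of $R$ and hence determine $\bigoplus R$.

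\textbf{Positivity.} Flipping every mechanism with probability $\varepsilon$ does not preserve observational equality. In $M_1$ noise injected into $F\setminus F'$ propagates into $F'$, while in $M_2$ it does not. In the bow arc ($F=\{X,Y\}$, $F'=\{Y\}$) you get $Y=N_X\oplus N_Y$ in $M_1$ but $Y=N_Y$ in $M_2$, so $P_1(Y=1)=2\varepsilon(1-\varepsilon)\neq\varepsilon=P_2(Y=1)$. Instead, perturb only variables that no $F$-mechanism reads, namely the roots $R$ and the forwarding components. The law becomes positive and the models still agree exactly. Under $do(S=s)$ the parity of $R$ stays uniform in $M_1$ but equals $1$ with probability $\tfrac12\bigl(1-(1-2\varepsilon)^{|R|}\bigr)<\tfrac12$ in $M_2$.

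\textbf{Stage 1 bookkeeping.} The coins cancel as well, so observationally $\bigoplus R=0$ identically. The law of $F$ is therefore uniform on the hyperplane $\{\bigoplus_{r\in R}v_r=0\}$, not uniform. To show both models induce exactly this law you need three things:
\begin{itemize}
\item the spanning tree of $F$ must extend a spanning tree of $F'$;
\item the nodes of $F'$ must ignore all parents outside $F'$, while those of $F\setminus F'$ keep all their coins;
\item a rank argument over $\mathrm{GF}(2)$ (reduced tree incidence matrices are invertible), which also makes $F\setminus F'$ uniform and independent of $F'$ in $M_2$.
\end{itemize}
The interventional claim then follows. Some tree coin joins a node whose $F$-path to $R$ avoids $S$ (any node of $F'$) to one for which it does not (any node of $S\cap F$). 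Hence $\bigoplus R$ is uniform in $M_1$ and $0$ in $M_2$.
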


We are now ready to study the ID algorithm \cite{shpitser2006interventional}, with which we can ascertain the identifiability of any query $P_S(T)$. We present a simplified version in \cref{alg:id}.

\begin{algorithm}[t]
\caption{$\texttt{ID}(T, S, G)$}
\label{alg:identifiability}
\begin{algorithmic}
    \STATE {\bfseries Input:} Disjoint sets $T, S \subseteq \mathcal{V}$, ADMG $G = (\mathcal{V}, E)$.
    \STATE {\bfseries Output:} Boolean indicating whether $P_S(T)$ is identifiable.
    \STATE {\bfseries 1:} If $S = \emptyset$, {\bfseries return} TRUE.
    \STATE {\bfseries 2:} If $\mathcal{V} \neq An_G(T)$, {\bfseries return} $\texttt{ID}(T, S \cap An_G(T), G[An_G(T)])$.
    \STATE {\bfseries 3:} Let $W := (\mathcal{V} \setminus S) \setminus An_{G_{\overline{S}}}(T)$. If $W \neq \emptyset$, {\bfseries return} $\texttt{ID}(T, S \cup W, G)$.
    \STATE {\bfseries 4:} If $\mathcal{V} \setminus S \not\in \mathcal{C}(G[\mathcal{V} \setminus S])$, {\bfseries return} $\wedge_{C_i \in \mathcal{C}(G[\mathcal{V} \setminus S])} \texttt{ID}(C_i, \mathcal{V} \setminus C_i, G)$.
    \STATE {\bfseries 5:} If $\mathcal{V} \in \mathcal{C}(G)$, {\bfseries return} FALSE.
    \STATE {\bfseries 6:} If $\mathcal{V} \setminus S \in \mathcal{C}(G)$, {\bfseries return} TRUE.
    \STATE {\bfseries 7:} Let $C \in \mathcal{C}(G)$ s.t. $C \supset \mathcal{V} \setminus S$, {\bfseries return} $\texttt{ID}(T, S \cap C, G[C])$.
\end{algorithmic}
\label{alg:id}
\end{algorithm}

\subsection{Identifiability criterion}

When evaluating do-SHAP, one must evaluate causal queries $\nu(S) := \mathbb{E}[Y \mid do(S=s)]$ for multiple coalitions $S \subseteq [d]$. If one such query is not identifiable, we cannot continue evaluating do-SHAP and must raise an error. However, this forces practitioners to execute do-SHAP without knowing if an estimate can be raised, while also running the ID algorithm for every single coalition that must be evaluated.

Instead, we present the following result, which allows us to determine the identifiability of all $2^d$ coalition values $\nu(S)$ just by determining the identifiability of the $d$ singleton coalitions $\{i\} \subseteq [d]$.

\begin{theorem}
    Let $G$ be an ADMG over $\mathcal{V}$, with a single $Y \in \mathcal{V}$ target node. If $\exists X \subseteq \mathcal{V} \setminus \{Y\}$ such that $P_X(Y)$ is not identifiable, then $\exists s \in X \cup (\mathcal{V} \setminus An_{G_{\overline{X}}}(Y))$ such that $P_{\{s\}}(Y)$ is not identifiable.
    \label{thm:doshapid}
\end{theorem}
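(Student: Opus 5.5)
We plan to use the hedge characterization of non-identifiability. Shpitser and Pearl also proved the converse of the theorem above: the ID algorithm is complete, so $P_X(Y)$ is non-identifiable if and only if $G$ contains a hedge for $P_X(Y)$. We take this as known. So suppose $P_X(Y)$ is not identifiable and fix an $R$-rooted hedge $(F,F')$ for $P_X(Y)$. By definition $F\cap X\neq\emptyset$, $F'\cap X=\emptyset$, and $R\subseteq An_{G_{\overline{X}}}(Y)$. The goal is to extract a single node $s$ with a hedge for $P_{\{s\}}(Y)$, where $s$ is either in $X$ or is a node that $X$ blocks from $Y$.

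\textbf{Step 1 (choice of $s$).} Pick $s\in F\cap X$, which exists. We will show that a suitable sub-forest of $F$, together with $F'$ or a modification of it, is a hedge for $P_{\{s\}}(Y)$. Three conditions must be checked.
\begin{enumerate}
\item $F$ (or a sub-forest containing $s$) still meets $\{s\}$. This holds by construction.
\item $F'\cap\{s\}=\emptyset$. This is inherited, since $F'\cap X=\emptyset$.
\item The roots satisfy $R\subseteq An_{G_{\overline{\{s\}}}}(Y)$. This follows because $G_{\overline{X}}$ is a subgraph of $G_{\overline{\{s\}}}$: removing fewer incoming edges can only enlarge ancestor sets, so $An_{G_{\overline{X}}}(Y)\subseteq An_{G_{\overline{\{s\}}}}(Y)$.
\end{enumerate}
The root condition is therefore monotone in the intervention set, and all hedge conditions transfer immediately. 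That would give $s\in X$, with no need for the extra set $\mathcal{V}\setminus An_{G_{\overline{X}}}(Y)$.

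\textbf{Step 2 (where the extra set enters).} The hedge conditions themselves transfer, but the hedge definition in the paper requires $S$ and $T$ to be disjoint. The delicate point is the case where the only candidate roots or forest nodes relevant to $Y$ pass through nodes that lie in $F'$ but become relevant differently once only $s$ is intervened on. To be safe, we will instead argue along the ID algorithm. Trace the run of \texttt{ID}$(Y,X,G)$ to the FALSE return at line 5. In line 3, nodes $W=(\mathcal{V}\setminus X)\setminus An_{G_{\overline{X}}}(Y)$ are added to the intervention set. The failing call therefore has intervention set contained in $X\cup(\mathcal{V}\setminus An_{G_{\overline{X}}}(Y))$, restricted to sub-C-components. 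Line 5 fails precisely when the current graph $G'$ is a single C-component and the intervention set $S'\neq\emptyset$ lies in it. This produces the hedge given by $F=G'$ and $F'$ the C-component on $\mathcal{V}(G')\setminus S'$ containing the ancestors of $Y$. We then take $s$ to be any node of $S'$. This explains the stated range of $s$. By Step 1's monotonicity, $(F,F')$ is also a hedge for $P_{\{s\}}(Y)$ after restricting the root set to nodes ancestral to $Y$ in $G_{\overline{\{s\}}}$.

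\textbf{Main obstacle.} The hardest step is confirming that $F$ and $F'$ can be pruned to genuine C-forests with a common root set $R$ satisfying $R\subseteq An_{G_{\overline{\{s\}}}}(Y)$ and $s\in F$. Recursive calls at line 4 replace the target $Y$ by a C-component $C_i$, so the hedge found is rooted in ancestors of $C_i$ rather than of $Y$. We must show these roots remain ancestors of $Y$ after intervening only on $s$.

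We will first try a direct argument: every node surviving line 2 and line 3 is in $An_{G_{\overline{X}}}(Y)$, hence in $An_{G_{\overline{\{s\}}}}(Y)$. If that is not enough, we will fall back on the standard construction in Shpitser--Pearl's completeness proof, which builds the hedge from the failing C-component and prunes to single-child forests via a spanning-tree argument. Throughout, we will verify that $s$ can be kept inside $F\setminus F'$.
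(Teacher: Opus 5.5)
Your Step 1 is already a complete proof, by a different and much shorter route than the paper's. Given a hedge $(F,F')$ for $P_X(Y)$, any $s\in F\cap X$ works. The conditions $F\cap\{s\}\neq\emptyset$ and $F'\cap\{s\}=\emptyset$ are inherited. Since $G_{\overline{X}}\subseteq G_{\overline{\{s\}}}$, you get $R\subseteq An_{G_{\overline{X}}}(Y)\subseteq An_{G_{\overline{\{s\}}}}(Y)$, and Theorem 4 of Shpitser--Pearl then gives non-identifiability of $P_{\{s\}}(Y)$, with the sharper conclusion $s\in X$. The disjointness worry that opens Step 2 is empty, because $s\in X\subseteq\mathcal{V}\setminus\{Y\}$.

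Two remarks on the black box you invoke. First, Shpitser--Pearl state that a failing run yields a hedge for $P_{X'}(Y')$ with $X'\subseteq X$ and $Y'\subseteq Y$, not necessarily for $P_X(Y)$ itself. Here $Y'=\{Y\}$, and your argument applies verbatim to $s\in F\cap X'$, since $G_{\overline{X'}}\subseteq G_{\overline{\{s\}}}$. Second, that extraction step is exactly what the paper's appendix proves by hand. It relates the hedge at the failing call, whose intervention set lines 3 and 4 have enlarged, back to the original query. Your proof therefore delegates the hard part.

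If you want your route self-contained, obtain a hedge for $P_X(Y)$ directly from Tian's factorization:
\begin{itemize}
\item Let $D=An_{G[\mathcal{V}\setminus X]}(Y)$. By soundness, identification of some maximal C-component $D_i$ of $G[D]$ must fail, at a bidirected-connected set $T\supsetneq D_i$ with $An_{G[T]}(D_i)=T$.
\item If $T\cap X=\emptyset$, every node of $T$ reaches $Y$ while avoiding $X$. Then $T\subseteq D$, and maximality forces $T=D_i$, a contradiction. So $T$ meets $X$.
\item Keep all bidirected edges, and prune $G[T]$ and $G[D_i]$ to single-child forests rooted at the sinks of $G[D_i]$. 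This gives a hedge with $F'\cap X=\emptyset$ and $R\subseteq D\subseteq An_{G_{\overline{X}}}(Y)$.
\end{itemize}

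The paper instead works with the particular hedge returned by the failing call of \texttt{ID}:
\begin{itemize}
\item It replaces $X$ by its closure.
\item It shows that line 3 never fires and that line 4 fires at most once.
\item It splits the post-line-4 intervention set as $A^i\sqcup B^i$ to show that the terminal set $S^d$ still meets the closure.
\item It checks the root condition using the fact that closure nodes leave $S$ only by leaving the graph.
\end{itemize}
This is why the closure appears in the statement. Your route buys brevity and a stronger conclusion; the paper's buys independence from the extraction theorem.

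Do not fall back on Step 2 as written. The claim that the failing call's intervention set lies in $X\cup(\mathcal{V}\setminus An_{G_{\overline{X}}}(Y))$ is false once line 4 fires, because $\texttt{ID}(C_i,\mathcal{V}\setminus C_i,G)$ intervenes on every node outside $C_i$. An arbitrary $s\in S'$ can therefore fall outside the allowed range. Showing that $S'$ still meets the closure is precisely the nontrivial lemma your sketch leaves out.
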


We will devote the remainder of this section to proving this theorem. Let us first prove a set of lemmas that will progressively build towards the proof.

\begin{lemma}
    Let $G$ be an ADMG over $\mathcal{V}$, with a set $Y \subseteq \mathcal{V}$ the target nodes. If $\exists X \subseteq \mathcal{V} \setminus Y$ such that $P_X(Y)$ is not identifiable, then the $\texttt{ID}$ algorithm ends at line 5 in a recursive call $\texttt{ID}(T, S, G')$ for $S$ and $T$ non-empty disjoint subsets of $\mathcal{V}' \subseteq \mathcal{V}$, and $G'$ an induced subgraph $G' = G[\mathcal{V}'] \subseteq G$. Then, there exists $R$-rooted C-forests $(F, F')$, $R \subseteq \mathcal{V}'$ such that they constitute a hedge for $P_S(T)$.
    \label{lemma:hedge}
\end{lemma}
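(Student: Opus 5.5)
The plan is to trace the execution of \texttt{ID}$(Y, X, G)$ on the non-identifiable query and show that it must terminate by reaching line 5 of \cref{alg:id}. Then I would read off a hedge from the arguments of that terminal call. Since the algorithm is sound and complete (Shpitser and Pearl), a non-identifiable $P_X(Y)$ means the top-level call returns FALSE.

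The only line that returns FALSE directly is line 5. Lines 1 and 6 return TRUE, and lines 2, 3, 4 and 7 only recurse or take conjunctions. So by induction on the recursion depth, some chain of recursive calls ends in a call $\texttt{ID}(T, S, G')$ whose line 5 fires. Following FALSE backwards through the conjunction at line 4 picks out at least one failing branch.

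Next I would record the invariants that hold along this chain. Each recursive call either restricts the graph to an induced subgraph (lines 2 and 7), enlarges $S$ inside the same graph (line 3), or replaces $(T,S)$ by $(C_i, \mathcal{V}\setminus C_i)$ with $G$ unchanged (line 4). Hence $G' = G[\mathcal{V}']$ is an induced subgraph, and $S, T \subseteq \mathcal{V}'$ are disjoint. $S$ is non-empty because line 1 did not fire, and $T$ is non-empty because it starts as $Y$ and the $C_i$ are non-empty.

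At the terminal call, lines 2, 3 and 4 did not fire. So three facts hold:
\begin{itemize}
\item $\mathcal{V}' = An_{G'}(T)$;
\item $\mathcal{V}'\setminus S \subseteq An_{G'_{\overline{S}}}(T)$;
\item $\mathcal{V}'\setminus S$ is a single C-component of $G'[\mathcal{V}'\setminus S]$.
\end{itemize}
Line 5 firing adds a fourth: $G'$ itself is a C-component.

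The hedge construction follows the proof of Theorem 4 in Shpitser and Pearl. I would take $R$ to be the root set of $G'[\mathcal{V}'\setminus S]$, which lies in $An_{G'_{\overline{S}}}(T)$ by the second fact. I would then build $F$ from $G'$ and $F'$ from $G'[\mathcal{V}'\setminus S]$, pruning directed edges so that each node keeps exactly one child on a directed path to $R$ and bidirected edges are kept.

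\textbf{Main obstacle.} The hardest step is showing that this pruning preserves the C-component property and that both graphs have the same root set $R$. This amounts to checking that every node of $\mathcal{V}'$ is an ancestor of $R$, not just of $T$. I would handle it by noting that, after line 2, every node reaches $T$. Since line 3 did not fire, every node of $\mathcal{V}'\setminus S$ reaches $T$ avoiding $S$. I would then argue that roots of $G'[\mathcal{V}'\setminus S]$ that are not in $T$ would contradict these facts. Finally, $F\cap S = S \neq \emptyset$ and $F'\cap S=\emptyset$, so $(F,F')$ is a hedge for $P_S(T)$.
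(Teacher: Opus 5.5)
Your proposal is correct and follows the paper's proof essentially step for step. You trace the failing call path to a line-5 exit and take $R$ to be the root set of $G'[\mathcal{V}'\setminus S]$. You prune $G'[\mathcal{V}'\setminus S]$ to one child per non-root node to get $F'$, then add one outgoing edge per node of $S$ (keeping all bidirected edges) to get $F$. Lines 4 and 5 give the C-component property, and lines 2 and 3 give $R \subseteq An_{G'_{\overline{S}}}(T)$.

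The step you flag as the main obstacle is lighter than you fear, because the root set is simply the set of childless nodes. It suffices that every $s \in S$ has a child in $G'$, which follows from line 2 together with $S \cap T = \emptyset$. You should also build $F$ as an extension of $F'$ rather than by pruning $G'$ independently. That is what guarantees $F' \subseteq F$ and keeps the nodes of $R$ childless in $F$. With that, the detour showing $R \subseteq T$ is not needed.
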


\begin{proof}
Firstly, the $\texttt{ID}$ algorithm \cite{shpitser2006interventional} always terminates (Lemma 3 in their paper), it is sound (Theorem 5) and complete (Corollary 2). Since we assume that $P_X(Y)$ is not identifiable, it must be that when we evaluate $\texttt{ID}(Y, X, G)$, we will eventually return FALSE from line 5 at a certain recursion level $\texttt{ID}(T, S, G')$. Both $X$ and $S$ must be non-empty (otherwise line 1 would have returned TRUE). Both $S$ and $T$ must be subsets of $\mathcal{V}$ and disjoint (since all recursive calls maintain them being disjoint given the initial assumption that $Y \cap X = \emptyset$). Finally, $G'$ is an induced subgraph $G' = G[\mathcal{V}'] \subseteq G$, since all recursive calls will at most remove nodes from $G$.

Let $R$ be the root set of $G'[\mathcal{V}' \setminus S]$. In particular, $R \cap S = \emptyset$. Let $F'$ be an edge subgraph of $G'[\mathcal{V}' \setminus S]$ such that its root set remains $R$ but all observable nodes have at most one child, and all confounding arcs in $G'[\mathcal{V}' \setminus S]$ are present. Note that $G'[\mathcal{V}' \setminus S]$ is a C-component (otherwise, line 4 would have triggered) and in creating $F'$ we did not remove any confounding arcs, so $F'$ is also a C-component, and therefore an $R$-rooted C-forest. Now, let us define an edge subgraph $F \subseteq G'$ by starting from $F'$, adding all nodes in $S$ and, for every such node adding only one edge to one of its children in $G'$, as well as all bidirected edges. Since every new node has a child in $V(F)$, the root set remains $R$, and $F$ is an $R$-rooted C-forest by the same reasoning as before because line 5 guarantees $\mathcal{C}(G') = \{\mathcal{V}'\}$. Additionally, $F' \subseteq F$, $V(F') \cap S = \emptyset$ and $V(F) \cap S = S \neq \emptyset$. All that remains is to prove that $R \subseteq An_{G'_{\overline{S}}}(T)$. First, note that $R \subseteq An_{G'}(T)$ by line 2, so there exist paths from nodes $r \in R$ to nodes $t \in T$. Let us assume that one such $r$ has all its directed paths blocked by $S$. Then, $r \not\in An_{G_{\overline{S}}}(T)$ and $r \not\in S$ (because $R \cap S = \emptyset$), which means that $W$ in line 3 would have contained $r$. By contradiction, $R \subseteq An_{G'_{\overline{S}}}(T)$ and, finally, $(F, F')$ is a hedge for $P_S(T)$ in $G'$.
\end{proof}

We now define some notation that will guide the proofs that follow. Let $G$ be an ADMG over $\mathcal{V}$, with a \textit{single} target node $Y \in \mathcal{V}$. If $\exists X \subseteq \mathcal{V} \setminus \{Y\}$ such that $P_X(Y)$ is not identifiable, let us focus on a single call-path returning FALSE, and denote the $i$-deep call $ID(T^{i}, S^{i}, G^{i})$ (with corresponding $\mathcal{V}^{i} := V(G^{i})$) along the call-path, starting on $ID(\{Y\}, X, G)$ (with $\{Y\} = T^{0}, X = S^{0}, G = G^{0}$) and ending at depth $d \geq 0$ in line 5 for the found hedge. Whenever line 4 or 7 are called at depth $i$, let us denote by $C^i$ the C-component that appears either: in line 4, for the recursive call to $\texttt{ID}$ in the particular branch following the call-path, or the one in line 7 that filters $S$. Let $d'$ be the first depth level at which line 4 is triggered or $d + 1$ if it never does.

\begin{lemma}
    If line 7 triggers at $\texttt{ID}(T^i, S^i, G^i)$, then $\mathcal{V}^i \setminus S^i \subsetneq C^i \subsetneq \mathcal{V}^i$. In particular, $S^{i+1} \subsetneq S^i$.
    \label{lemma:line7proper}
\end{lemma}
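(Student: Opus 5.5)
The plan is to read off each inclusion from the fact that lines 4, 5 and 6 of \cref{alg:id} did \emph{not} return at the call $\texttt{ID}(T^i, S^i, G^i)$. Reaching line 7 means all earlier lines were passed without returning, so their negated conditions hold. Write $\mathcal{V} = \mathcal{V}^i$, $S = S^i$, $T = T^i$ and $C = C^i$ for brevity.

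First I would show that $\mathcal{V} \setminus S$ is nonempty and bidirected-connected. Recursive calls keep $T$ and $S$ disjoint and $T$ nonempty; along our call-path $T$ is $\{Y\}$ or a C-component from line 4. Hence $\emptyset \neq T \subseteq \mathcal{V} \setminus S$. Since line 4 did not fire, $\mathcal{V} \setminus S \in \mathcal{C}(G[\mathcal{V}\setminus S])$. So the nodes of $\mathcal{V}\setminus S$ are pairwise connected by bidirected paths staying inside $\mathcal{V}\setminus S$, and these paths are also bidirected paths in $G$. The maximal C-components $\mathcal{C}(G)$ partition $\mathcal{V}$, so there is a unique $C \in \mathcal{C}(G)$ with $\mathcal{V}\setminus S \subseteq C$; this is the $C$ selected in line 7.

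Next I would get the two strict inclusions. Since line 6 did not fire, $\mathcal{V}\setminus S \notin \mathcal{C}(G)$, so $\mathcal{V}\setminus S \neq C$ and therefore $\mathcal{V}\setminus S \subsetneq C$. Since line 5 did not fire, $\mathcal{V} \notin \mathcal{C}(G)$. Then $C \neq \mathcal{V}$, because $C$ is a maximal C-component of $G$, and so $C \subsetneq \mathcal{V}$.

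Finally, the recursive call in line 7 sets $S^{i+1} = S \cap C$. Because $C \subsetneq \mathcal{V}$, some node $v \in \mathcal{V}\setminus C$ exists. Since $\mathcal{V}\setminus S \subseteq C$, this $v$ must lie in $S$. Thus $v \in S \setminus S^{i+1}$, giving $S^{i+1} \subsetneq S^i$.

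The only step that needs care is the first one, justifying that $C$ is well defined as the unique maximal component containing $\mathcal{V}\setminus S$. That uses two facts: the line-4 condition makes $\mathcal{V}\setminus S$ bidirected-connected, and $T \neq \emptyset$ rules out the degenerate empty case. The remaining steps are direct negations of the line 5 and line 6 conditions.
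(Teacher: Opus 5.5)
Your proposal is correct and takes the same route as the paper. Both arguments obtain $\mathcal{V}^i \setminus S^i \neq C^i$ and $C^i \neq \mathcal{V}^i$ because lines 6 and 5 did not return, and then conclude $S^i \cap C^i \subsetneq S^i$; your first step (the failed line-4 test plus $T^i \neq \emptyset$ make $C^i$ the unique maximal C-component containing $\mathcal{V}^i \setminus S^i$) only spells out what the paper takes ``by construction.''
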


\begin{proof}
$\mathcal{V}^i \setminus S^i \subset C^i \subset \mathcal{V}^i$ and it is a maximal C-component in $G^i$ by construction. If $C^i = \mathcal{V}^i \setminus S^i$, then $\mathcal{V}^i \setminus S^i \in \mathcal{C}(G^i)$ and line 6 would have triggered. If $C^i = \mathcal{V}^i$, then $V^i \in \mathcal{C}(G)$ and line 5 would have triggered. Finally, since $C^i \neq \mathcal{V}^i$ and $C^i \supsetneq \mathcal{V}^i \setminus S^i$, then $S^i \cap C \subsetneq S^i$.
\end{proof}

\begin{lemma}
    Some observations that will become relevant throughout the following discussion are:
    \begin{itemize}
        \item Line 1 never triggers along the failing call-path, and $\forall i \leq d, S^{i} \neq \emptyset$; otherwise, the call-path would not end at line 5.
        \item Line 6 never triggers along the failing call-path; otherwise, the call-path would not end at line 5.
        \item Line 5 does not trigger along the failing call-path for depths $i < d$.
        \item $G$ can only have nodes removed: $G^{i+1} \subseteq G^i,\; \forall i < d$.
        \item We can only remove nodes from the local set of intervened variables $S$ on lines 2 and 7, while also removing them from the local graph $G$.
        \item We can only add nodes to the local set of intervened variables $S$ on lines 3 and 4, while also removing them from the set of unused nodes $\mathcal{U} := \mathcal{V} \setminus (T \cup S)$.
        \item The set of unused nodes can never grow: $\mathcal{U}^{i+1} \subseteq \mathcal{U}^{i},\; \forall 0 \leq i < d$.
        \item $T$ does not change until line 4 is called: $\forall i \leq d'$, $T^i = \{Y\}$.
    \end{itemize}
    \label{lemma:observations}
\end{lemma}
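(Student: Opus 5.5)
The plan is to prove every item by direct inspection of \cref{alg:id}, reading off how each line transforms the triple $(T,S,G)$ and hence $\mathcal{V}=V(G)$ and $\mathcal{U}=\mathcal{V}\setminus(T\cup S)$. Throughout I would use the invariant $T\cap S=\emptyset$ established in the proof of \cref{lemma:hedge}. The path items follow from the fact that lines 1, 5 and 6 end a call without recursing, while lines 2, 3, 4 and 7 recurse.

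\textbf{Path items.} The failing call-path is a chain of calls in which every call except the last recurses, and the last returns FALSE at line 5. Line 1 returns TRUE, so it cannot fire at any depth. Because the test $S=\emptyset$ is performed before anything else in each call, it follows that $S^i\neq\emptyset$ for all $i\le d$. The same argument shows line 6 never fires. Line 5 terminates the call, so it can fire only at the final depth $d$.

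\textbf{Graph, intervention set and target.} I would go case by case through the four recursing lines.
\begin{itemize}
\item Line 2 passes $G[An_G(T)]$, line 7 passes $G[C]$, and lines 3 and 4 pass $G$ unchanged. Hence $G^{i+1}\subseteq G^i$.
\item Elements leave $S$ only at line 2 (giving $S\cap An_G(T)$) and line 7 (giving $S\cap C$). In both cases the removed elements lie outside the new vertex set, so they are deleted from the graph at the same time.
\item Elements enter $S$ only at lines 3 and 4. At line 3 the new set is $S\cup W$ with $W\subseteq\mathcal{V}\setminus S$. Since $T\subseteq An_{G_{\overline{S}}}(T)$, we have $W\cap T=\emptyset$, so $W\subseteq\mathcal{U}$. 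At line 4 the new set is $\mathcal{V}\setminus C_i\supseteq S$, because $C_i\subseteq\mathcal{V}\setminus S$.
\item Only line 4 changes $T$. Lines 2, 3 and 7 pass $T$ unchanged. For line 7 this is legitimate because $T\subseteq\mathcal{V}\setminus S\subseteq C$, so $T$ remains inside the new vertex set. An induction on depth then gives $T^i=\{Y\}$ for all $i\le d'$.
\end{itemize}

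\textbf{Unused set.} For monotonicity of $\mathcal{U}$ I would compute the new unused set explicitly in each recursive case.
\begin{itemize}
\item Line 2: $\mathcal{V}'=An_G(T)$ and $S'=S\cap\mathcal{V}'$, so $\mathcal{U}'=\mathcal{V}'\setminus(T\cup S)=\mathcal{U}\cap\mathcal{V}'$.
\item Line 3: $\mathcal{U}'=\mathcal{U}\setminus W$.
\item Line 4: $T'=C_i$ and $S'=\mathcal{V}\setminus C_i$ partition $\mathcal{V}$, so $\mathcal{U}'=\emptyset$.
\item Line 7: using $T\subseteq C$, $\mathcal{U}'=\mathcal{U}\cap C$.
\end{itemize}
In every case $\mathcal{U}'\subseteq\mathcal{U}$.

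The only delicate step is line 4, since there both $T$ and $S$ are replaced. What I would check carefully is that the nodes newly added to $S$ really come from $\mathcal{U}$, and not from $T$ or from outside the graph, at least in the sense needed downstream. This follows because the new $T'$ and $S'$ together exhaust $\mathcal{V}$, which gives $\mathcal{U}'=\emptyset$. Everything else is routine bookkeeping.
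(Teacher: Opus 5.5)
Your line-by-line inspection of the algorithm is the natural argument. It is what the paper implicitly relies on, since it states these items as observations without a written proof. Almost all of your bookkeeping is correct: the path items, $G^{i+1}\subseteq G^i$, removal from $S$ only at lines 2 and 7, $W\subseteq\mathcal{U}$ at line 3, invariance of $T$ up to depth $d'$, and your case-by-case computation of $\mathcal{U}'$.

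The step you flag as delicate, however, is wrong. At line 4 the nodes added to $S$ are $(\mathcal{V}\setminus S)\setminus C_i=(\mathcal{U}\setminus C_i)\cup(T\setminus C_i)$. The second part is nonempty as soon as $T\not\subseteq C_i$, and this genuinely occurs on failing call-paths. Take $X\to Z\to Y$ with $X\leftrightarrow Z$ and the query $P_X(Y)$:
\begin{itemize}
\item line 4 splits $\{Z,Y\}$ into $\{Z\}$ and $\{Y\}$;
\item the branch $\texttt{ID}(\{Y\},\{X,Z\},G)$ returns TRUE at line 6;
\item the branch $\texttt{ID}(\{Z\},\{X,Y\},G)$ goes through line 2 to $\texttt{ID}(\{Z\},\{X\},G[\{X,Z\}])$ and fails at line 5.
\end{itemize}
Along this failing path, $Y\in T^0$ is added to $S^1=\{X,Y\}$, although $\mathcal{U}^0=\{Z\}$.

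Your inference from $\mathcal{U}'=\emptyset$ to ``the added nodes came from $\mathcal{U}$'' is a non sequitur. $\mathcal{U}'=\emptyset$ only says that nothing is left unused afterwards. So the sixth item holds in its literal sense (added nodes were previously unused) only at line 3. At line 4 you can only assert that $S$ grows and that the added nodes are not in $\mathcal{U}'$, which is really the content of item 7. You should prove that weaker reading and note explicitly that the literal one fails at line 4.

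This costs nothing downstream. The subsequent lemmas use only three things: that $S$ grows solely at lines 3 and 4, the fifth item (removal from $S$ only together with removal from the graph), and monotonicity of $\mathcal{U}$. Your argument establishes all three.
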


\begin{lemma}
    Under the previous assumptions, line 4 can trigger at most once.
    \label{lemma:line4once}
\end{lemma}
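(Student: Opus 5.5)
The plan is to argue by contradiction. Suppose line 4 triggers at depths $i < k$ along the failing call-path, and take $i = d'$ to be the first such depth.

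\textbf{Step 1: the target after the first trigger.} By \cref{lemma:observations}, $T^{d'} = \{Y\}$. After line 4 fires at depth $d'$, the call-path continues into $\texttt{ID}(C^{d'}, \mathcal{V}^{d'} \setminus C^{d'}, G^{d'})$. So from depth $d'+1$ on, the target is a maximal C-component $C^{d'}$ of $G^{d'}[\mathcal{V}^{d'} \setminus S^{d'}]$, and the intervention set is exactly its complement in the current graph.

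\textbf{Step 2: this configuration is preserved.} I will track the invariant
\[
\mathcal{V}^j = T^j \cup S^j, \qquad T^j = C^{d'} \text{ a C-component of } G^j,
\]
for every depth $j > d'$, checking each line in turn.
\begin{itemize}
\item Line 3 cannot fire, because $W \subseteq \mathcal{V}^j \setminus S^j = T^j \subseteq An(T^j)$ forces $W = \emptyset$.
\item Line 2 only restricts to $An_{G^j}(T^j)$. Since $T^j$ stays inside the graph, $\mathcal{V}^{j+1} = T^j \cup (S^j \cap An(T^j))$, so the invariant still holds.
\item Line 7 restricts to a C-component $C \supseteq T^j$ and intersects $S$ with it. It keeps $\mathcal{V} = T \cup S$ and leaves $T$ unchanged.
\end{itemize}
Equivalently, the unused set $\mathcal{U}^j = \mathcal{V}^j \setminus (T^j \cup S^j)$ is empty for all $j > d'$. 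By \cref{lemma:observations} it can never grow back.

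\textbf{Step 3: line 4 cannot fire again.} At depth $k$ the test at line 4 looks at $G^k[\mathcal{V}^k \setminus S^k] = G^k[T^k] = G^k[C^{d'}]$. Taking induced subgraphs preserves the bidirected edges among the retained nodes. $C^{d'}$ was connected by bidirected arcs in $G^{d'}$, all of its nodes are still present, and $G^k$ is an induced subgraph of $G^{d'}$. Hence $G^k[C^{d'}]$ is a single C-component, so $\mathcal{V}^k \setminus S^k \in \mathcal{C}(G^k[\mathcal{V}^k \setminus S^k])$ and line 4 does not trigger. This contradicts the assumption that it fires at depth $k$.

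\textbf{Main obstacle.} The delicate point is showing that $T$ stays fully inside every later graph and is never pruned. Line 2 keeps $An(T) \supseteq T$, and line 7 keeps a component containing $\mathcal{V} \setminus S = T$, so neither removes any node of $T$. I also need that line 1 and line 6 never fire along the path, which \cref{lemma:observations} already provides, so the recursion really proceeds through lines 2, 3, 4 and 7 only.
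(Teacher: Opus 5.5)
Your proof is correct and follows essentially the same route as the paper's. After the first trigger at depth $d'$, the unused set is empty and $T$ stays equal to $C^{d'}$, so any later line-4 test is applied to $G^{k}[C^{d'}] = G^{d'}[C^{d'}]$; this is a single C-component because $C^{d'}$ was a maximal C-component of $G^{d'}[\mathcal{V}^{d'} \setminus S^{d'}]$, and the only difference is that you check the invariant line by line instead of citing \cref{lemma:observations} (take $k$ to be the first re-trigger, so that only lines 2, 3 and 7 occur between $d'$ and $k$).
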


\begin{proof}
    Assuming that line 4 triggers at least once, let $d' < d$ be the depth of its first call, and $C^{d'}$ the C-component whose recursive call we follow along the call-path. Right after that, $T^{d'+1} = C^{d'}$, $S^{d'+1} = \mathcal{V}^{d'} \setminus C^{d'}$, and $U^{d'+1} = \mathcal{V}^{d'+1} \setminus (T^{d'+1} \cup S^{d'+1}) = \emptyset$. Assume there is a minimal $d'' > d'$ in which line 4 triggers again. Then, since only line 4 can change $T$, $T^{d''} = T^{d'+1} = C^{d'}$. Note that $\mathcal{V}^{d''} \setminus S^{d''} = T^{d''} = C^{d'}$ (otherwise $U^{d''}$ would increase). Therefore, $\mathcal{C}(G^{d''}[\mathcal{V}^{d''} \setminus S^{d''}]) = \mathcal{C}(G^{d''}[C^{d'}]) = \mathcal{C}(G^{d'}[C^{d'}]) = \{C^{d'}\}$ since $G^{d''} \subseteq G^{d'}$ and $C^{d'}$ is a maximal C-component in $G^{d'}[\mathcal{V}^{d'} \setminus S^{d'}] \supsetneq G^{d'}[C^{d'}]$, which means that line 4 cannot trigger again. In particular, $T^i = C^{d'},\; \forall i: d' < i \leq d$.
\end{proof}

\begin{lemma}
    Under the previous assumptions, if $\mathcal{V} = An_G(Y)$ and $X$ is closed (i.e., $X = X \cup (\mathcal{V} \setminus An_{G_{\overline{X}}}(Y))$), then line 3 never triggers.
    \label{lemma:line3once}
\end{lemma}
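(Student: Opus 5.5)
The plan is to argue by induction along the failing call-path $\texttt{ID}(T^i,S^i,G^i)$, $i=0,\dots,d$, carrying an invariant that forces $W=\emptyset$ at every depth. I split the call-path at $d'$, the first depth at which line 4 triggers (by Lemma \ref{lemma:line4once} this happens at most once). For depths $i\le d'$ we have $T^i=\{Y\}$ by Lemma \ref{lemma:observations}. For these depths I will maintain the following invariant:
\begin{center}
(I) every $v\in\mathcal{V}^i\setminus S^i$ has a directed path to $Y$ in $G^i$ all of whose nodes lie in $\mathcal{V}^i\setminus S^i$.
\end{center}
(I) implies that no line-3 trigger occurs. Indeed, such a path contains no node of $S^i$, so it survives in $G^i_{\overline{S^i}}$. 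Hence $\mathcal{V}^i\setminus S^i\subseteq An_{G^i_{\overline{S^i}}}(Y)$, and therefore $W=\emptyset$.

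\emph{Base case.} At depth $0$, invariant (I) is exactly the closedness of $X$. Since $\mathcal{V}\setminus An_{G_{\overline{X}}}(Y)\subseteq X$, every node outside $X$ reaches $Y$ in $G_{\overline X}$. Such a path cannot pass through a node of $X$ other than possibly at its start, because incoming edges to $X$ are cut. A path starting outside $X$ therefore avoids $X$ entirely. The assumption $\mathcal{V}=An_G(Y)$ additionally ensures that line 2 does not fire at depth $0$. This is not needed for (I), but it matches the setting.

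\emph{Preservation for $i<d'$.} Lines 1, 6 never fire, and line 5 ends the path. Line 3 cannot fire by (I). So the next call comes from line 2 or line 7, and in both cases $G^{i+1}=G^i[\mathcal{V}^{i+1}]$ with $S^{i+1}=S^i\cap\mathcal{V}^{i+1}$.
\begin{itemize}
\item For line 2, the removed nodes are non-ancestors of $Y$. By (I), every node of $\mathcal{V}^i\setminus S^i$ is an ancestor, so only nodes of $S^i$ are removed.
\item For line 7, Lemma \ref{lemma:line7proper} gives $C^i\supsetneq\mathcal{V}^i\setminus S^i$, so again only nodes of $S^i$ are removed.
\end{itemize}
Hence $\mathcal{V}^{i+1}\setminus S^{i+1}=\mathcal{V}^i\setminus S^i$ in both cases. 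The witnessing paths from (I) use only nodes of this set, so they persist in the induced subgraph $G^{i+1}$, and (I) holds at depth $i+1$. This bookkeeping is the step I expect to need the most care. The key point is that lines 2 and 7 only ever delete intervened nodes, so the un-intervened part and its internal paths to $Y$ are untouched.

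\emph{After line 4 fires at $d'$.} By Lemma \ref{lemma:line4once} and Lemma \ref{lemma:observations}:
\begin{itemize}
\item $T^i=C^{d'}$ for all $i>d'$;
\item $\mathcal{U}^{d'+1}=\emptyset$, and $\mathcal{U}$ never grows, so $\mathcal{V}^i\setminus S^i=T^i$ for all $i>d'$.
\end{itemize}
Since $T^i\subseteq An_{G^i_{\overline{S^i}}}(T^i)$ trivially, we get $W=(\mathcal{V}^i\setminus S^i)\setminus An_{G^i_{\overline{S^i}}}(T^i)=\emptyset$. So line 3 cannot trigger at these depths either, which completes the argument.
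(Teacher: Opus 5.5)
Your proof is correct and follows essentially the same route as the paper. The paper argues by contradiction at the first depth below $d'$ where line 3 would fire: line 2 keeps every ancestor of $Y$ and the line-7 component $C^{i}$ contains all non-intervened nodes, so the unblocked paths to $Y$ survive; after line 4 it uses $\mathcal{U}^i=\emptyset$ to get $W=\emptyset$, and your invariant (I) is simply a cleaner forward-induction packaging of that same argument.
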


\begin{proof}
    By the fact that the closure of a closed set is itself, line 3 cannot trigger at depth $0$. Let us prove the statement by induction.
    
    Firstly, consider a depth $i$, $0 < i < d'$, in which line 3 is triggered, while assuming it has not been triggered before. From Lemma \ref{lemma:observations}, $T^i = \{Y\}$. For depth $i$, it must be that $W^i := (\mathcal{V}^i \setminus S^i) \setminus An_{G^i_{\overline{S^i}}}(Y) \neq \emptyset$, with nodes $w \not\in S^i$ and $w \not\in An_{G^i_{\overline{S^i}}}(Y)$, but $w \in An_{G^i}(Y)$ (since we skip line 2 to trigger 3), so there are directed paths from $w$ to $Y$, all blocked by $S^i$. Let us now study depth $i -1$. Since $w \not\in S^i$ and $S^{i-1} \supseteq S^i$, if $w \in S^{i-1}$ it must have been removed by lines 2 or 7, but that would also remove it from $\mathcal{V}^i$, which is not the case, so we know that $w \not\in S^{i-1}$. Additionally, if we assumed that $w \not\in An_{G^{i-1}_{\overline{S^{i-1}}}}(Y)$, since $w \not\in S^i$, then $w \in W^{i-1}$ but because line 3 did not trigger at depth $i-1$, $W^{i-1} = \emptyset$. Consequently, $w \in An_{G^{i-1}_{\overline{S^{i-1}}}}(Y)$ and $w \not\in S^{i-1}$, so there are directed paths from $w$ to $Y$ unblocked by $S^{i-1}$, going through nodes in a set $Z \subseteq \mathcal{V}^{i-1} \setminus S^{i-1}$, which all must have disappeared from $G^i$ to make it so $w \not\in An_{G^i_{\overline{S^i}}}(Y)$. Since line 2 preserves all directed paths to $Y$, this can only happen if line 7 was triggered at depth $i - 1$. However, this cannot be the case either: the C-component $C^{i-1} \supsetneq \mathcal{V}^{i-1} \setminus S^{i-1}$ preserves all non-intervened nodes, which is the case for $Z$. Therefore, it must be that our initial assumption was false, $W^{i}$ is empty, and so line 3 was not triggered at depth $i < d'$ after all.
    
    For depth $d'$, we naturally skip line 3 to reach line 4. Afterwards, $\mathcal{V}^{d'+1} = T^{d'+1} \sqcup S^{d'+1}$ so $\forall d' < i \leq d$, $\mathcal{U}^i := \mathcal{V}^i \setminus (T^i \cup S^i) = \emptyset$ (Lemma \ref{lemma:observations}), so $W^i = \emptyset$, and line 3 is never called either.
\end{proof}

\begin{lemma}
Let $G$ be an ADMG over $\mathcal{V}$, with a single $Y \in \mathcal{V}$ target node. If $\exists X \subseteq \mathcal{V} \setminus \{Y\}$ such that $P_X(Y)$ is not identifiable, let $\texttt{ID}(T^d, S^d, G^d)$ be the last call resulting in the hedge given by Lemma \ref{lemma:hedge}. Then $S^d \cap (X \cup (\mathcal{V} \setminus An_{G_{\overline{X}}}(Y))) \neq \emptyset$.
\end{lemma}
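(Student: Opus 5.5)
We want to show that the intervened set $S^d$ at the terminal (line 5) call meets $X^\ast := X \cup (\mathcal{V} \setminus An_{G_{\overline{X}}}(Y))$. The strategy is to track which nodes can ever enter the local intervention set $S^i$ along the failing call-path. We show they either come from $X^\ast$ or are harmless, and then show that $S^d$ cannot consist only of harmless nodes.

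\textbf{Reduction to a closed starting set.} First we reduce to the case where $X$ is closed and $\mathcal{V} = An_G(Y)$. Line 2 at depth $0$ restricts to $An_G(Y)$; it removes nodes from both $\mathcal{V}$ and $S$, so it cannot create new intervened nodes. Line 3 at depth $0$ replaces $X$ by $X \cup W$, where $W = (\mathcal{V}\setminus X)\setminus An_{G_{\overline{X}}}(Y)$. This is exactly (the ancestral restriction of) $X^\ast$, so the call-path from then on starts at a closed set contained in $X^\ast$. By Lemma \ref{lemma:line3once}, line 3 then never triggers again.

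\textbf{Where intervened nodes come from.} By Lemma \ref{lemma:observations}, nodes enter $S$ only through lines 3 and 4. Line 3 is now excluded, and lines 2 and 7 only shrink $S$. Hence for every depth $i < d'$ we have $S^i \subseteq S^0 \subseteq X^\ast$.

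If line 4 never triggers ($d' = d+1$), then $S^d \subseteq X^\ast$. Since $S^d \neq \emptyset$ by Lemma \ref{lemma:observations}, the intersection is nonempty.

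If line 4 triggers, it does so exactly once, at depth $d'$ (Lemma \ref{lemma:line4once}). Afterwards $T^{d'+1} = C^{d'}$ and $S^{d'+1} = \mathcal{V}^{d'}\setminus C^{d'}$, which is made up of $S^{d'}$ together with the other C-components of $G^{d'}[\mathcal{V}^{d'}\setminus S^{d'}]$. From then on only lines 2 and 7 act (line 3 cannot fire since $\mathcal{U}=\emptyset$), so $S$ only shrinks. The remaining work is to show that the final $S^d$ still contains a node of $S^{d'} \subseteq X^\ast$, and is not made up solely of nodes from the sibling C-components.

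\textbf{Main obstacle.} The hard step is ruling out a hedge whose intervened part lies entirely in $\mathcal{V}^{d'}\setminus(S^{d'}\cup C^{d'})$. Suppose $S^d \cap S^{d'} = \emptyset$. Every line 2 or line 7 step after depth $d'$ keeps $T = C^{d'}$ and only deletes nodes. Line 2 removes the non-ancestors of $C^{d'}$. Line 7 keeps a bidirected component containing $\mathcal{V}\setminus S \supseteq C^{d'}$. The sibling components are not bidirected-connected to $C^{d'}$ within $G^{d'}[\mathcal{V}^{d'}\setminus S^{d'}]$, so in the final graph $G^d$ any bidirected connection between $C^{d'}$ and the surviving sibling nodes must pass through a node of $S^{d'}$.

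We would then argue as follows. If no node of $S^{d'}$ survives, then $G^d$ is not a single C-component. That contradicts line 5 firing, which requires $\mathcal{C}(G^d) = \{\mathcal{V}^d\}$. So some node of $S^{d'}$ survives in $\mathcal{V}^d$. Surviving nodes outside $T^d = C^{d'}$ lie in $S^d$, because $\mathcal{U}^d = \emptyset$. Hence that node lies in $S^d \cap S^{d'} \subseteq S^d \cap X^\ast$, which is the desired nonempty intersection.

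I expect the delicate part to be bookkeeping: showing that nodes of $S^{d'}$ are never separated from $S$ while remaining in $\mathcal{V}$. This should follow from the fact that removal from $S$ happens only jointly with removal from $\mathcal{V}$ (Lemma \ref{lemma:observations}), and from checking the C-component connectivity claim through the line 7 restrictions, using Lemma \ref{lemma:line7proper}.
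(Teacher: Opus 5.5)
Your proof is correct. It shares the paper's reduction to a closed starting set and the paper's observation that $S^i \subseteq X^\ast$ up to the unique line-4 call. One small fix: you state that inclusion for $i < d'$, but you later use $S^{d'} \subseteq X^\ast$. That case also holds, because the step into depth $d'$ is a line 2 or line 7 step, so state it for $i \le d'$.

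Where you genuinely differ is the phase after line 4. The paper writes $S^i = A^i \sqcup B^i$ with $A^i = S^i \cap X^\ast$. It then runs a step-by-step induction showing that $A^i \neq \emptyset$ survives every line 2 and line 7 call:
\begin{itemize}
\item in the line 2 case, losing all of $A$ would make line 6 return TRUE at the next level;
\item in the line 7 case, it uses a somewhat informal argument about bidirected paths passing through $A^{d'}$.
\end{itemize}

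You argue once, at the terminal call. Suppose $\mathcal{V}^d \cap S^{d'} = \emptyset$. Then $G^d = G^{d'}[\mathcal{V}^d]$ is an induced subgraph of $G^{d'}[\mathcal{V}^{d'} \setminus S^{d'}]$, in which $C^{d'} = T^d$ is a maximal C-component. So $C^{d'}$ has no bidirected edge to $\mathcal{V}^d \setminus C^{d'} = S^d$, which is nonempty. This contradicts line 5 firing at depth $d$. Moreover, any surviving $s \in S^{d'}$ must lie in $S^d$, because $\mathcal{U}^d = \emptyset$ and $S^{d'} \cap C^{d'} = \emptyset$.

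This global, end-of-path argument is shorter than the paper's and avoids the per-line case analysis entirely. The ``delicate bookkeeping'' you anticipate is in fact unnecessary:
\begin{itemize}
\item the connectivity claim follows directly from $G^d$ being an induced subgraph of $G^{d'}$, with no need to trace individual line 7 restrictions;
\item the membership claim is exactly $\mathcal{U}^d = \emptyset$, which you already invoke.
\end{itemize}
In the write-up, say explicitly that $S^d \neq \emptyset$, from Lemma \ref{lemma:observations}, is what guarantees $\mathcal{V}^d \neq C^{d'}$ in the contradiction step.
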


\begin{proof}
Let $X'$ be the closure of $X$, $X' := X \cup (\mathcal{V} \setminus An_{G_{\overline{X}}}(Y))$ (here denoted $X'$ instead of $\overline{X}$ to avoid ambiguity of notation), and let us prove that $S^{d} \cap X' \neq \emptyset$. Note that if line 2 or 3 trigger on the first calls, we end up in a recursive call $\texttt{ID}(\{Y\}, S^{i}, G^{i})$ where $\mathcal{V}^{i} = An_G(Y)$ and $S^{i} = An_G(Y) \cap (X \cup (\mathcal{V} \setminus An_{G_{\overline{X}}}(Y)))$, so we can assume without loss of generality that $\mathcal{V} = An_G(Y)$ and $X' = X$ at the start of the call-path. Consequently, $S^{0} \cap X' = X' \neq \emptyset$ trivially. Let us proceed by induction, proving that $\forall i < d, S^{i} \cap X' \neq \emptyset$ implies that $S^{i+1} \cap X' \neq \emptyset$, and then so will be the case for $S^d$, which will prove the lemma.

Let us now consider any depth level $0 < i \leq min(d, d')$ and assume that $S^{i-1} \subseteq X'$. Let us study what happens at depth $i-1$. Lines 1, 3, 4, and 6 cannot trigger (by Lemmas \ref{lemma:observations} and \ref{lemma:line3once} and the fact that $i \leq d'$). If line 2 were to remove any nodes, it would still result in a non-empty proper subset $S^i \subsetneq S^{i-1} \subseteq X'$. If line 7 triggered, $S^i := S^{i-1} \cap C^{i-1} \subsetneq S^{i-1} \subseteq X'$ and $S^i \neq \emptyset$ by Lemma \ref{lemma:line7proper}. In all cases, the next recursive $S$ is not empty and only contains nodes in $X'$: $\emptyset \neq S^{i+1} \subsetneq S^{i} \subseteq X'$.

If line 4 never triggers, we have already proved the result. Otherwise, $d' < d$, and $C^{d'}$ is the C-component whose recursive call we follow along the call-path. In particular, $\mathcal{V}^{d'+1} = \mathcal{V}^{d'}, T^{d'+1} = C^{d'}$, $S^{d'+1} = \mathcal{V}^{d'} \setminus C^{d'} \supsetneq S^{d'}$, so it contains elements of $X'$ and outside $X'$ (because the nodes in $X' \setminus S^{d'}$ have been removed from $G^{d'}$ as well). Additionally, $\forall i > d',\; \mathcal{U}^i = \emptyset$, and can never increase.

For $d' \leq i \leq d$, let us define $S^i = A^i \sqcup B^i \neq \emptyset$, with $A^i := S^i \cap X'$ and $B^i := S^i \setminus X'$. Note that $A^{d'} = S^{d'}, B^{d'} = \emptyset$, and $A^{d'+1} = S^{d'}, B^{d'+1} = (\mathcal{V}^{d'} \setminus C^{d'}) \setminus A^{d'} \neq \emptyset$, since $S^{d'+1} = \mathcal{V}^{d'} \setminus C^{d'} \supsetneq S^{d'} \subseteq X'$. Additionally, $\forall i > d' + 1,\; A^i \subseteq A^{i-1}$ and $B^i \subseteq B^{i-1}$ since we can only remove nodes at this point, given that line 3 and line 4 will not run again (Lemmas \ref{lemma:line3once} and \ref{lemma:line4once}). Let us prove that if $\forall i: d' < i < d$, $A^i \neq \emptyset$, then $A^{i+1} \neq \emptyset$, for which we must check lines 2 and 7. 

If line 2 was triggered at step $i$ and $A^{i+1} = \emptyset$, the next recursion level would have $T^{i+1} = T^i$, $S^{i+1} = B^{i+1} \neq \emptyset$ and $G^{i+1} = G^i[An_{G^i}(T^i)] \subseteq G^i$. 
At level $i+1$, line 6 would trigger: 
    neither line 3 nor line 4 can trigger again; 
    if we assume that line 5 triggers, then $\mathcal{V}^{i+1} \in \mathcal{C}(G^{i+1})$, but $B^{i+1} \subseteq B^{d'+1}$ and no node in $B^{d'+1}$ belonged in the same maximal C-component as $C^{d'}$ in $G^{d'}[\mathcal{V}^{d'} \setminus A^{d'}]$, then all bidirected paths from $B^{i+1}$ to $C^{d'}$ were blocked by $A^{d'}$, and now that $A^{i+1} = \emptyset$, these paths are cut;
    finally, for line 6, if $\mathcal{V}^{i+1} \setminus S^{i+1} = T^{i+1} = C^{d'}$ were not a maximal C-component of the induced subgraph $G^{i+1} = G^i[An_{G^i}(T^i)]$, it must be that a node $b \in S^{i+1} = B^{i+1}$ is connected through a bidirected path to $C^{d'}$ in this graph, which we have just proved to be false, hence proving that the call would return TRUE at line 6.
By contradiction, this proves that line 2 would have not removed all of $A^{d'+1} \subseteq X'$.

Finally, if line 7 was triggered at step $i$ and $A^{i+1} = \emptyset$, when before it was not, necessarily $B^{i+1} \neq \emptyset$, and $\forall c \in T^i = C^{d'}$, $\forall b \in B^{i+1} = B^i \cap C^i$, there are bidirected paths connecting $c$ and $b$ in $G^i \supsetneq G^{i+1}$, and all are intersected by $S^{d'} = A^{d'}$ (otherwise $b \in C^{d'}$). Consequently, either these intersecting nodes are in $A^i$, in which case they would not have been removed from $A^{i+1}$, or they are not in $A^i$, in which case $b$ would have been removed as well for its paths would be cut. Both cases contradict the assumption that $A^{i+1} = \emptyset$.


Bringing everything together and by induction, $\emptyset \subsetneq A^d \subseteq X'$, proving the lemma.
\end{proof}

We now have all we need to prove the theorem.

\begin{proof}[Proof of \cref{thm:doshapid}]
Finally, let us finish the proof by showing that if $(F, F')$ is the $R$-rooted hedge for $P_S(T)$ found at the end of the call-stack, $\forall s \in S \cap X'$, with $X' := X \cup (\mathcal{V} \setminus An_{G_{\overline{X}}}(Y))$ the closure of $X$, then $(F, F')$ is also a hedge for $P_{\{s\}}(Y)$. Note that the hedge still fulfills the conditions $F' \subseteq F, F \cap \{s\} \neq \emptyset, F' \cap \{s\} = \emptyset$, both $R$-rooted, both C-forests. So now we only need to check that $R \subseteq An_{G_{\overline{s}}}(Y)$, but if one such $r \in R$ were not in $An_{G_{\overline{s}}}(Y) \supseteq An_{G_{\overline{X'}}}(Y)$, then $r$ would belong in $X'$ by definition of the closure of $X$ and the fact that the closure of a closure is itself. Additionally, throughout the call-stack, $r$ would have never left the set of intervened nodes without leaving the graph itself, hence $r \in S$, contradicting the fact that $S \cap R = \emptyset$ by construction (see Lemma \ref{lemma:hedge}). Therefore, it must be that $R \subseteq An_{G_{\overline{s}}}(Y)$, proving that $(F, F')$ is a hedge for $P_{\{s\}}(Y)$. By Theorem 4 in \cite{shpitser2006interventional}, $P_{\{s\}}(Y)$ is not identifiable.
\end{proof}

\begin{corollary}
    Any do-Shapley value $\phi_i$ for $i \in [d]$ is (non-parametrically) identifiable if, and only if, all singleton coalition probabilities $(P_{\{k\}}(Y))_{k \in [d]}$ are (non-parametrically) identifiable.
\end{corollary}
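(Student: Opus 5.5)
The corollary follows from \cref{thm:doshapid} once "identifiability of $\phi_i$" is read the way the paper does: $\phi_i$ is identifiable exactly when every coalition value $\nu(S)$ it depends on is identifiable. By the earlier discussion, $\nu(S)$ is identifiable iff $P_S(Y)$ is. So the plan is to show that the collection $\{P_S(Y)\}_{S \subseteq [d]}$ is identifiable if and only if the $d$ singleton queries $P_{\{k\}}(Y)$ are. The two directions are handled separately.

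\emph{Necessity.} Assume every $P_S(Y)$ is identifiable. Then in particular each singleton coalition $\{k\}$ gives an identifiable $P_{\{k\}}(Y)$. One might object that $\phi_i$ is only a linear combination of the $\nu(S)$, and some non-identifiable terms could cancel. I would rule this out as follows.

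\begin{itemize}
\item By Equation~\ref{eq:do_shapley}, $\phi_i = \sum_j \nu(c_j) w_i(c_j)$.
\item For the class $c$ of $\{i\}$, the weight $w_i(c)$ is nonzero, since $i \in \underline{\{i\}}$ and the first case of Equation~\ref{eq:weights} is a sum of positive terms.
\item In the non-parametric setting, two observationally equivalent models may differ in $\nu(\{i\})$ while agreeing on the other class values. This is the same convention the paper uses in dismissing cancellation for expectations.
\end{itemize}

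Hence identifiability of $\phi_i$ forces identifiability of each $P_{\{k\}}(Y)$. In particular this holds for $k=i$, and for the other $k$ because $\phi_i$ is treated jointly with the full value function. I will state this convention explicitly rather than prove it in full generality.

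\emph{Sufficiency.} This is the substantive direction, and I would argue it by contrapositive using \cref{thm:doshapid}.

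\begin{enumerate}
\item Suppose some coalition $X \subseteq [d]$ has $P_X(Y)$ non-identifiable. Without loss of generality the features are all ancestors of $Y$, which is the standing assumption.
\item \cref{thm:doshapid} yields a node $s \in X \cup (\mathcal{V} \setminus An_{G_{\overline{X}}}(Y))$ with $P_{\{s\}}(Y)$ non-identifiable.
\item It remains to check that $s$ is one of the features $[d]$, so that $P_{\{s\}}(Y)$ is one of the $d$ singleton queries. Nodes of $X$ are features by assumption. A node outside $An_{G_{\overline{X}}}(Y)$ is never $Y$ itself, since $Y$ is its own ancestor. Any such node that is a non-feature measured variable would have to be excluded by the restriction of $\mathcal{V}$ to the feature set plus $Y$.
\end{enumerate}

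The main obstacle is step 3: making sure the node returned by \cref{thm:doshapid} really lies in $[d]$. If the ADMG contains extra measured variables, $s$ could fall outside the features. I would handle this by taking the ADMG $G$ over $\mathcal{V} = [d] \cup \{Y\}$ as the projection of the model, after pruning non-ancestors as the paper does. Then the closure of $X$ is automatically contained in $[d]$, and the singleton checks cover every node that \cref{thm:doshapid} can produce.
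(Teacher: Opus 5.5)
Your sufficiency direction is the paper's argument. You take the contrapositive through \cref{thm:doshapid}, and identifiability of every $\nu(S)$ then gives identifiability of the fixed linear combination in Equation~\ref{eq:do_shapley}. Using an existential $s$ is a small improvement: the paper's proof writes ``$\forall k$'', which the theorem does not deliver. Checking that $s\in[d]$ is another. Restricting to $\mathcal{V}=[d]\cup\{Y\}$ is an assumption on the setting (every measured ancestor of $Y$ is a feature), not a free reduction. Latent-projecting a measured non-feature can destroy identifiability, as projecting out $M$ in the front-door graph shows. So you could not apply \cref{thm:doshapid} in a projected graph and compare it with singleton checks done in the original one.

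The real problem is the necessity direction. Read per $i$, as you do, the claim ``$\phi_i$ identifiable $\Rightarrow$ every $P_{\{k\}}(Y)$ identifiable'' is false for $k\neq i$. If $k$ blocks $i$, then $i\in\overline{\{k\}}\setminus\{k\}$, and the third case of Equation~\ref{eq:weights} gives $w_i=0$ on the class of $\{k\}$. So $\phi_i$ never involves $\nu(\{k\})$. Concretely, take $i\to k\to Y$ with $k\leftrightarrow Y$:
\begin{itemize}
\item $P_{\{k\}}(Y)$ is non-identifiable (bow arc);
\item yet $\phi_i=\tfrac12[\nu(\{i\})-\nu(\emptyset)]$, with $P_{\{i\}}(Y)=P(Y\mid i)$ identifiable.
\end{itemize}
Saying $\phi_i$ is ``treated jointly with the full value function'' silently redefines the notion so the direction becomes a tautology. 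It also contradicts your own ``depends on'' definition. The statement must be read as quantifying over all $i\in[d]$, and the converse run as in the paper: if $P_{\{k\}}(Y)$ is non-identifiable, show that $\phi_k$ is. Your observation that the class of $\{k\}$ carries strictly positive weight $w_k$ is exactly the right supporting fact there.

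Even then, a nonzero weight alone does not exclude cancellation. Your third bullet is asserted, not derived: it claims there are observationally equivalent models that differ in $\nu(\{k\})$ but agree on every other class value. This is not automatic. A hedge for $P_{\{k\}}(Y)$ is typically also a hedge for many $P_{\{k\}\cup S}(Y)$, so the natural witness models move several class values at once. The paper's one-line converse (``nor can $\phi_k$'') rests on the same unproved no-cancellation convention, so you are no less rigorous than the paper. Still, present it as an explicit assumption rather than as a proof step.
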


\begin{proof}    
    If all singleton coalitions $\{k\} \subseteq [d]$ have identifiable probability terms $P_{\{k\}}(Y)$, there cannot be any coalition $S \subseteq [d]$ for which $P_S(Y)$ is not identifiable (otherwise, by \cref{thm:doshapid}, $\forall k \in S \cup (\mathcal{V} \setminus An_{G_{\overline{S}}}(Y))$, $P_{\{k\}}(Y)$ would not be identifiable). Then, all the respective $\nu(S) = \mathbb{E}[Y \mid do(S = s)]$ are identifiable, and consequently, $\phi_i = \sum_{j=1}^r \nu(c_j) w_i(c_j)$ is also identifiable. Conversely, if one such $P_{\{k\}}(Y)$ were not identifiable, its $\nu(\{k\})$ term cannot be either, nor can $\phi_k$.
\end{proof}

\clearpage
\section{Extensions of the Shapley value}
\label{sec:appx_shapley_interactions}

The Shapley value was extended in multiple ways.
Semivalues \citep{Dubey.1981} , such as the Banzhaf value \citep{Banzhaf.1964}, extend the Shapley values to alternative weighting schemes $(q_s)_{s=0,\dots,n-1} \geq 0$ as
\begin{align}
\phi^q_i = \sum_{S \subseteq [d] \setminus {i }}
[\nu(S \cup {i }) - \nu(S)] q_{|S|}.
\end{align}
If $q$ is a probability distribution over $2^{[d]}$, then they are also referred to as cardinal-probabilistic values \citep{Fujimoto.2006}.
By replacing the weights in \cref{eq:do_shapley}, we can derive an efficient computation for any semivalue.

Another line of work extends the Shapley value to higher-order interactions, known as the Shapley interaction index \citep{Grabisch.1999}, and defined by
\begin{align*}
  \phi_U = \sum_{S \subseteq [d] \setminus U}
\Delta_U(S) p_{\vert S \vert}^{\vert U \vert} \text{ with } \Delta_U(S) := \sum_{L\subseteq U} (-1)^{\vert U\vert - \vert S \vert} \nu(S \cup L),
\end{align*}
and $p_{s}^u := \frac{1}{(d-u+1) \cdot \binom{d-u}{s}}$.
The discrete derivative $\Delta_U(S)$ thereby measures the interaction of $U$ in the presence of $S$, and directly extends the marginal contribution.
We now utilize a result by \citet{zern2023interventional}[Proposition 1] to efficiently compute the Shapley interaction index for value functions of the shape $\mathbf{1}[\underline S \subseteq S \subseteq \overline S]$.

\begin{proposition}[\citet{Zern.2023}]
Given subsets $\underline S \subseteq \overline S \subseteq [d]$ and value function $\nu(S) = \mathbf{1}[\underline S \subseteq S \subseteq \overline S]$, the Shapley interaction index is given by the weights  $\omega_{a,b} := \frac{1}{a+b+1}\binom{a+b}{a}^{-1}$ and
\begin{align*}
    \phi_U = (-1)^{\vert U \cap ([d]\setminus \overline S)\vert} \omega_{\vert \underline S\vert - \vert \overline S \cap U\vert, \vert [d] \setminus (\overline S \cup U)\vert},
\end{align*}
if $U \subseteq A \cup ([d] \setminus B)$, and $\phi_U = 0$ otherwise.
\end{proposition}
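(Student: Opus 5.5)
The plan is to reduce the statement to the indicator case and then compute the interaction index by direct summation. The Shapley interaction index is linear in $\nu$. Every class value function $\nu(c)\mathbf{1}[\underline S\subseteq S\subseteq\overline S]$ therefore contributes independently, which is what makes the statement useful in combination with \cref{eq:do_shapley}. So it suffices to fix $\underline S\subseteq\overline S$ and evaluate $\phi_U$ for $\nu(S)=\mathbf{1}[\underline S\subseteq S\subseteq\overline S]$. Here I read the condition ``$U\subseteq A\cup([d]\setminus B)$'' with $A=\underline S$ and $B=\overline S$.

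\textbf{Step 1: the discrete derivative.} Fix $S\subseteq[d]\setminus U$. Partition $U$ into three parts: $U_1=U\cap\underline S$, $U_2=U\cap(\overline S\setminus\underline S)$, and $U_3=U\setminus\overline S$. The term $\nu(S\cup L)$ is nonzero only if three conditions hold:
\begin{itemize}
\item $L\supseteq U_1$;
\item $L\cap U_3=\emptyset$;
\item $\underline S\setminus U\subseteq S\subseteq\overline S\setminus U$.
\end{itemize}
The elements of $U_2$ are free. Summing $(-1)^{|U|-|L|}$ over $L=U_1\cup L_2$ with $L_2\subseteq U_2$ gives $0$ unless $U_2=\emptyset$. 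This is exactly the vanishing case $U\not\subseteq\underline S\cup([d]\setminus\overline S)$. When $U_2=\emptyset$, the only surviving set is $L=U_1$, with sign $(-1)^{|U|-|U_1|}=(-1)^{|U\setminus\overline S|}$, which is the claimed sign. Hence
\[
\Delta_U(S)=(-1)^{|U\setminus\overline S|}\,\mathbf{1}\bigl[\underline S\setminus U\subseteq S\subseteq\overline S\setminus U\bigr].
\]

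\textbf{Step 2: the weighted count.} It remains to compute $\sum_S p^{u}_{|S|}$ over the interval $\underline S\setminus U\subseteq S\subseteq\overline S\setminus U$ inside $[d]\setminus U$. Let $u=|U|$, $\alpha=|\underline S\setminus U|$, and let $k$ be the number of free elements, so $k=|\overline S\setminus\underline S|$ when $U_2=\emptyset$. Then
\[
\sum_{j=0}^{k}\binom{k}{j}\frac{1}{(d-u+1)\binom{d-u}{\alpha+j}}.
\]
I would evaluate this using the Beta-integral representation
\[
\frac{1}{(n+1)\binom{n}{s}}=\int_0^1 t^s(1-t)^{n-s}\,dt, \qquad n=d-u.
\]
The binomial sum then collapses via $\sum_j\binom kj t^j(1-t)^{k-j}=1$ to
\[
\int_0^1 t^{\alpha}(1-t)^{n-\alpha-k}\,dt=\omega_{\alpha,\,n-\alpha-k}.
\]
Finally I check the indices: $n-\alpha-k=d-u-|\overline S\setminus U|=|[d]\setminus(\overline S\cup U)|$.

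\textbf{Expected obstacle.} The first index of $\omega$ as printed in the statement is $|\underline S|-|\overline S\cap U|$. When $U_2=\emptyset$ we have $\overline S\cap U=\underline S\cap U$, so this equals $|\underline S\setminus U|=\alpha$ and the indices match. The main care is therefore in this index bookkeeping and in confirming the sign convention $(-1)^{|U|-|L|}$ (the printed exponent $|U|-|S|$ is evidently a typo). The Beta-integral collapse itself is routine.
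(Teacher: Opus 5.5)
Your proof is correct. It differs from the paper only in that the paper gives no derivation at all: the statement is taken from \citet{Zern.2023} (their Proposition 1) and is then used, via linearity, to get the do-Shapley interaction formula.

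You prove it directly in two steps. First, the alternating sum over the free part $U\cap(\overline S\setminus\underline S)$ makes $\Delta_U(S)$ vanish unless that part is empty. What remains is a signed indicator of the interval $\underline S\setminus U\subseteq S\subseteq\overline S\setminus U$. Second, the Beta representation $p^u_s=\int_0^1 t^s(1-t)^{d-u-s}\,dt$ together with the binomial theorem reduces the weighted count over that interval to the single term $\omega_{|\underline S\setminus U|,\,|[d]\setminus(\overline S\cup U)|}$.

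The citation buys brevity. Your route gives a self-contained check, and it settles several points the paper leaves loose:
\begin{itemize}
\item it identifies the undefined $A,B$ as $\underline S,\overline S$;
\item it corrects the sign exponent in $\Delta_U$ to $|U|-|L|$;
\item it shows that the printed first index $|\underline S|-|\overline S\cap U|$ equals the natural quantity $|\underline S\setminus U|$ exactly in the non-vanishing case.
\end{itemize}

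There is also a useful by-product. Specializing your Step 2 to $U=\{i\}$ gives $w_i(c)=\omega_{|\underline S|-1,\,d-|\overline S|}$ for $i\in\underline S$ and $w_i(c)=-\omega_{|\underline S|,\,d-1-|\overline S|}$ for $i\notin\overline S$. This is a closed form replacing the $O(d)$ sums in Equation \ref{eq:weights}. The same collapse works for any semivalue whose weights are written as $\int_0^1 t^s(1-t)^{d-1-s}\,d\mu(t)$.
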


Notably, the weights $\omega$ can be precomputed.
Consequently, we obtain the following result by the linearity of the Shapley interaction index \citep{Grabisch.1999}.

\begin{proposition}
The do-Shapley interaction index is given by
    \begin{align*}
        \phi_U = \sum_{j=1}^r \nu(c_j) (-1)^{\vert U \cap ([d]\setminus \overline S_j)\vert} \omega_{\vert \underline S_j \vert - \vert \overline S_j \cap U\vert, \vert [d] \setminus (\overline S_j \cup U)\vert}.
    \end{align*}
\end{proposition}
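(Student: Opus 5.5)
By linearity of the Shapley interaction index in the value function, the claim reduces to the preceding proposition of \citet{Zern.2023} applied class by class. The classes $c_1,\dots,c_r$ partition $2^{[d]}$, and $\nu$ is constant on each class with value $\nu(c_j)$. Hence, as functions on coalitions,
\begin{align*}
\nu(S) = \sum_{j=1}^r \nu(c_j)\,\mathbf{1}[\underline S_j \subseteq S \subseteq \overline S_j].
\end{align*}
This holds because every $S$ lies in exactly one class, namely the one whose basis is $\underline S$ and whose closure is $\bar S$ (the latter computed by \texttt{FindClass}). So exactly one indicator on the right-hand side is nonzero, and it multiplies the correct value.

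Next I use that $\phi_U$ is linear in $\nu$. From its definition, $\phi_U=\sum_{S\subseteq[d]\setminus U} p^{|U|}_{|S|}\sum_{L\subseteq U}(\pm 1)\,\nu(S\cup L)$, which is a fixed linear combination of the values $\nu(T)$ whose coefficients do not depend on $\nu$. This is the linearity axiom of \citet{Grabisch.1999}. Applying it to the decomposition gives
\begin{align*}
\phi_U[\nu] = \sum_{j=1}^r \nu(c_j)\,\phi_U\bigl[\mathbf{1}[\underline S_j \subseteq \cdot \subseteq \overline S_j]\bigr].
\end{align*}

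Finally, I substitute the closed form from the proposition of \citet{Zern.2023} with $\underline S=\underline S_j$ and $\overline S=\overline S_j$. Each summand then becomes $(-1)^{|U\cap([d]\setminus\overline S_j)|}\,\omega_{|\underline S_j|-|\overline S_j\cap U|,\;|[d]\setminus(\overline S_j\cup U)|}$, which is the stated formula.

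The only delicate point is the support condition in the cited proposition. There the formula applies only when $U\subseteq \underline S_j\cup([d]\setminus\overline S_j)$, with $A,B$ read as $\underline S_j,\overline S_j$, and the index vanishes otherwise. So the statement should be read with these summands set to zero, or, equivalently, with the sum restricted to classes satisfying this condition. In the write-up I will state this restriction explicitly rather than rely on the convention. I will also check that the first subscript of $\omega$ is meant as $|\underline S_j\setminus U|$ when $U$ meets $\underline S_j$ but not all of $\overline S_j$, so that it stays nonnegative.

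Beyond this bookkeeping I expect no real obstacle. The partition property and the validity of the cited formula, which I take as given, do all the work.
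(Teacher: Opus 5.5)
Your proposal is correct and is essentially the paper's proof: write $\nu=\sum_{j}\nu(c_j)\,\mathbf{1}[\underline S_j\subseteq\cdot\subseteq\overline S_j]$ using the class partition, apply linearity of the Shapley interaction index, and substitute the closed form of \citet{Zern.2023} for each indicator game. Your caveat is justified: the displayed formula, and the paper's proof, drop the requirement $U\subseteq\underline S_j\cup([d]\setminus\overline S_j)$, and outside it the $j$-th summand must be zero; inside it $U\cap\overline S_j=U\cap\underline S_j$, so the first subscript of $\omega$ is automatically $|\underline S_j\setminus U|\ge 0$ and that separate check is unnecessary.
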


\begin{proof}
We define $\nu_j(S) := \mathbf{1}[\underline S_j \subseteq S \subseteq \overline S_j]$ for each irreducible set from $j=1,\dots,r$.
Furthermore, we denote $\phi_U[\nu]$ as the Shapley interaction index with respect to $\nu$. Then, by linearity \citep{Grabisch.1999} of $\phi_U$, we obtain
\begin{align*}
    \phi_U[\nu]&= \phi_U[\sum_{j=1}^r \nu(c_j) \cdot \mathbf{1}[\underline S_j \subseteq S \subseteq \overline S_j]] = \sum_{j=1}^r \nu(c_j) \phi_U[\nu_j] = \sum_{j=1}^r \nu(c_j) (-1)^{\vert U \cap ([d]\setminus \overline S_j)\vert} \omega_{\vert \underline S_j \vert - \vert \overline S_j \cap U\vert, \vert [d] \setminus (\overline S_j \cup U)\vert}.
\end{align*}
\end{proof}

With the efficient computation of the Shapley interaction index \citep{Grabisch.1999}, we can directly extract the $n$-Shapley values \citep{lundberg2018consistent,Bord.2023} using the following recursion:

\begin{align*}
    \Phi_U^n := \begin{cases}
        \phi_U &\text{if } \vert U \vert = n,
        \\
        \Phi^{n-1}_U + B_{d-\vert U \vert} \sum_{\substack{K\subseteq [d] \setminus U \\ \vert K \vert + \vert S \vert = n}} \phi_{U \cup K} &\text{if } \vert U \vert < n,
    \end{cases}
\end{align*}
with $\Phi_i^1 := \phi_i$ for all $i \in [d]$.
The $n$-Shapley values have first been introduced by \citet{lundberg2018treeshap}, and were later genearlized to arbitrary order $n$ \citep{Bord.2023}.
Importantly, they satisfy the generalized efficiency axiom for Shapley interactions \citep{Bord.2023}, i.e.
\begin{align*}
    \sum_{U \subseteq [d]: \vert U \vert \leq n} \Phi^n_U = \nu([d]),
\end{align*}
where we defined $\Phi^0_\emptyset := \nu(\emptyset)$.

\textbf{do-Shapley Interactions for SCMs.} Generalizations of the Shapley value to higher-order interactions provide a principled framework for explanations with varying degrees of granularity and expressivity. To illustrate this for do-Shapley values and their interaction-based extensions, we compute Shapley interactions using \texttt{shapiq} \citep{Muschalik.2024a} for SCMs derived from four datasets from the TALENT benchmark \citep{JMLR:v26:25-0512}. Following \cref{sec:experiments}, the SCMs are learned from data using GRaSP \citep{lam2022greedy}, with the corresponding variables described in \cref{tab:variables_Brazilian_houses_reproduced,tab:variables_FOREX_audjpy-hour-High,tab:variables_Laptop_Prices_Dataset,tab:variables_yeast}. The resulting do-Shapley interactions are visualized in \cref{fig:shapiq_Laptop_Prices_Dataset,fig:shapiq_FOREX_audjpy-hour-High,fig:shapiq_yeast,fig:shapiq_Brazilian_houses_reproduced}. First-order explanations recover the standard do-Shapley values, while the highest-order interactions correspond to the Möbius transform of the value function. We present the \texttt{shapiq} explanations as overlays on the learned SCMs, where node sizes represent main effects and edge or hyperedge widths encode interaction strength. The color indicates the direction of the effects (blue denotes a negative interaction and red a positive interaction). While first-order do-Shapley values summarize aggregated causal effects of individual features, increasing the interaction order yields progressively finer-grained insights into the causal structure. At the same time, the growing number of interaction terms poses interpretability challenges, motivating application-specific post-processing and selection of relevant interaction orders.

\begin{figure*}[htb]
    \centering
    \begin{minipage}[b]{0.245\linewidth}
        \centering
        \includegraphics[width=\linewidth]{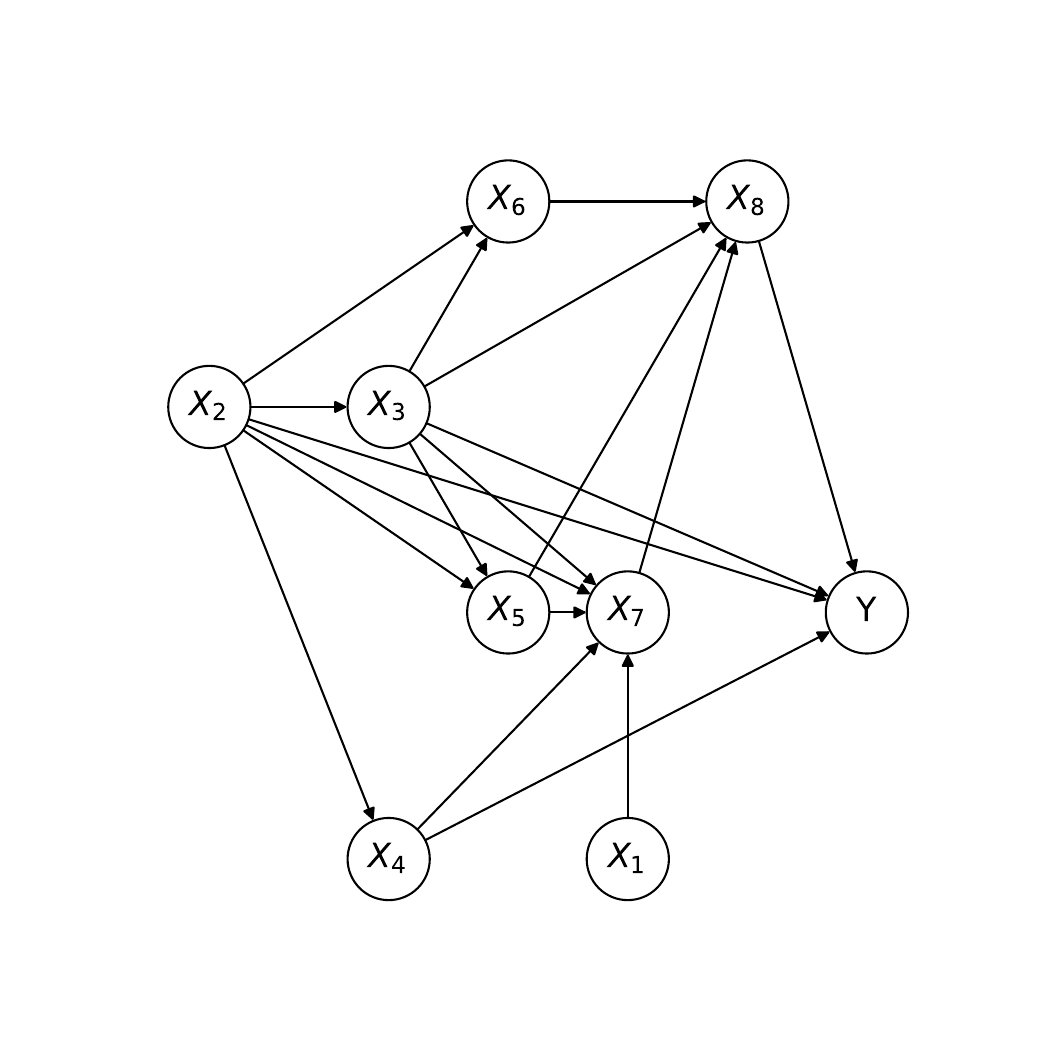}
        \textbf{SCM}\\\phantom{(order 1)}
    \end{minipage}
    \begin{minipage}[b]{0.245\linewidth}
        \centering
        \includegraphics[width=\linewidth]{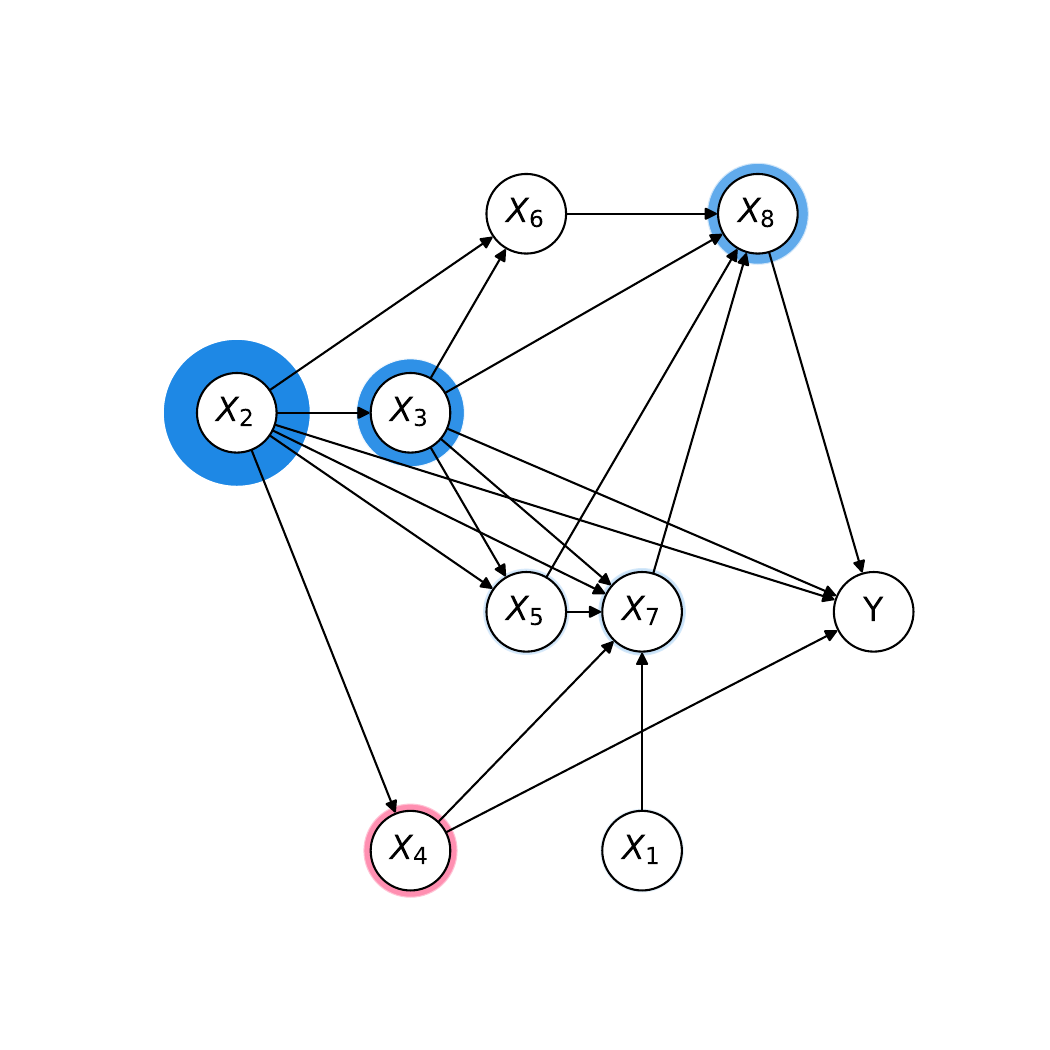}
        \textbf{do-Shapley Values}\\(order 1)
    \end{minipage}
    \begin{minipage}[b]{0.245\linewidth}
        \centering
        \includegraphics[width=\linewidth]{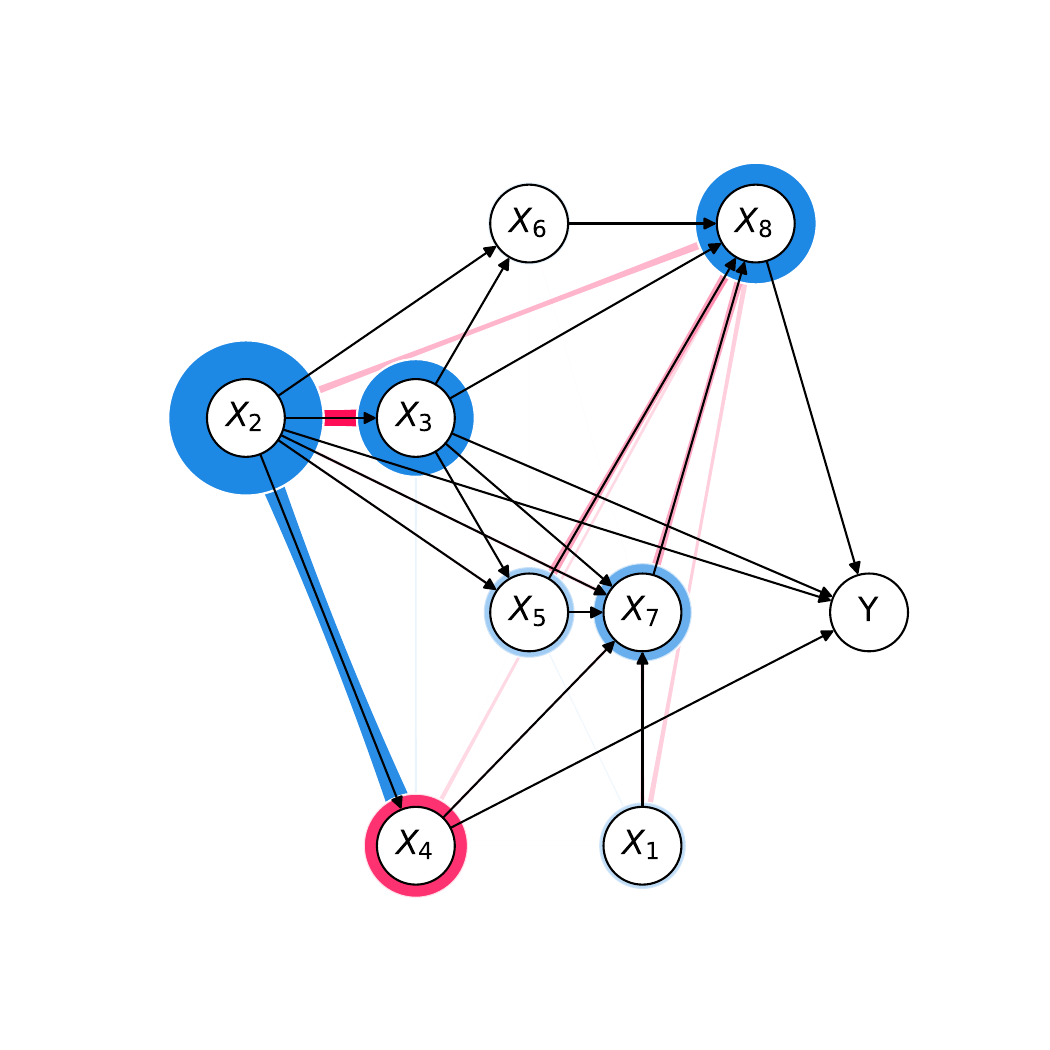}
        \textbf{do-Shapley Interactions}\\(order 2)
    \end{minipage}
    \begin{minipage}[b]{0.245\linewidth}
        \centering
        \includegraphics[width=\linewidth]{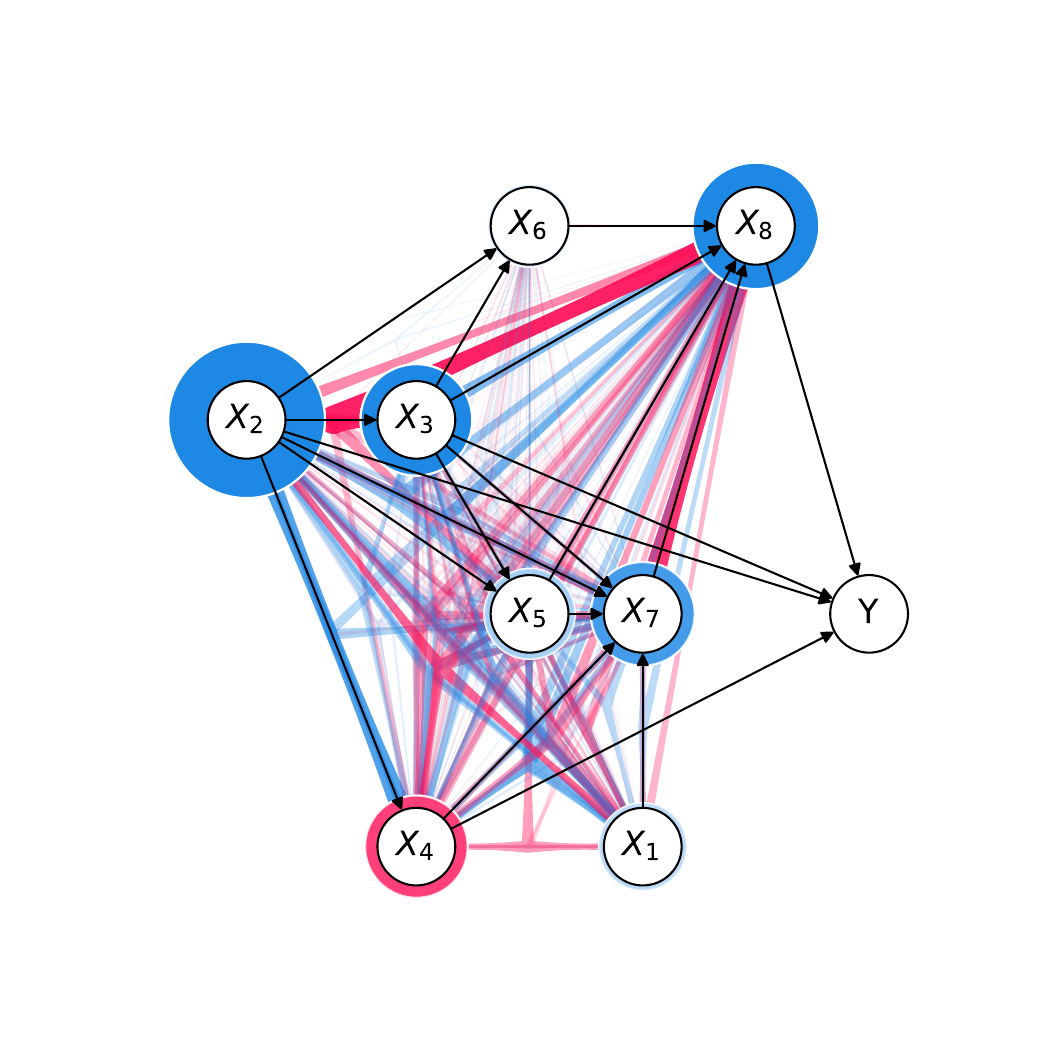}
        \textbf{do-Shapley Interactions}\\(order 8)
    \end{minipage}
    \caption{do-Shapley values and interactions for \textsc{Brazilian\_houses\_reproduced} of increasing order (feature names in \cref{tab:variables_Brazilian_houses_reproduced}). The size of the nodes and edges (hyperedges) denotes the strength of the effect. The color denotes the direction (blue negative, red  positive).}
    \label{fig:shapiq_Brazilian_houses_reproduced}
\end{figure*}

\begin{figure*}[htb]
    \centering
    \begin{minipage}[b]{0.245\linewidth}
        \centering
        \includegraphics[width=\linewidth]{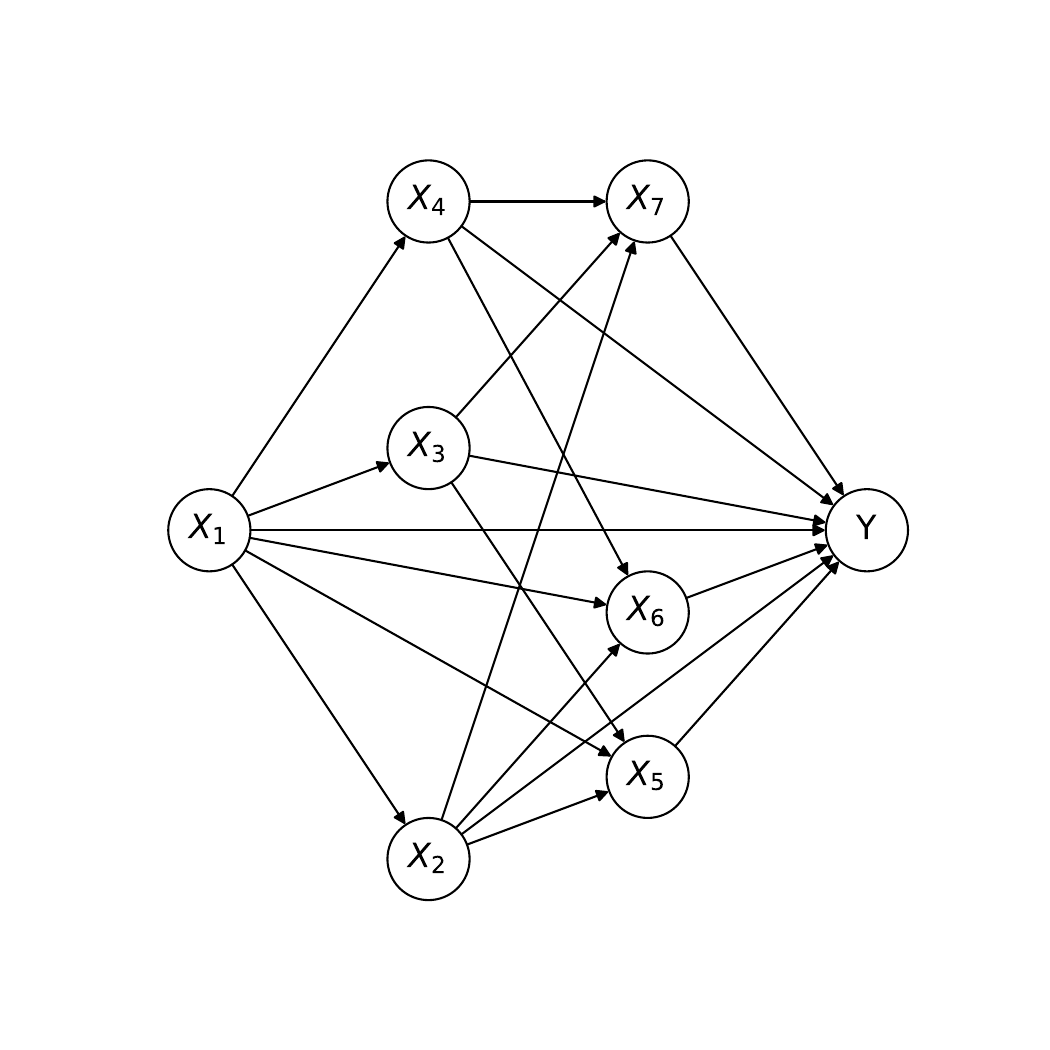}
        \textbf{SCM}\\\phantom{(order 1)}
    \end{minipage}
    \begin{minipage}[b]{0.245\linewidth}
        \centering
        \includegraphics[width=\linewidth]{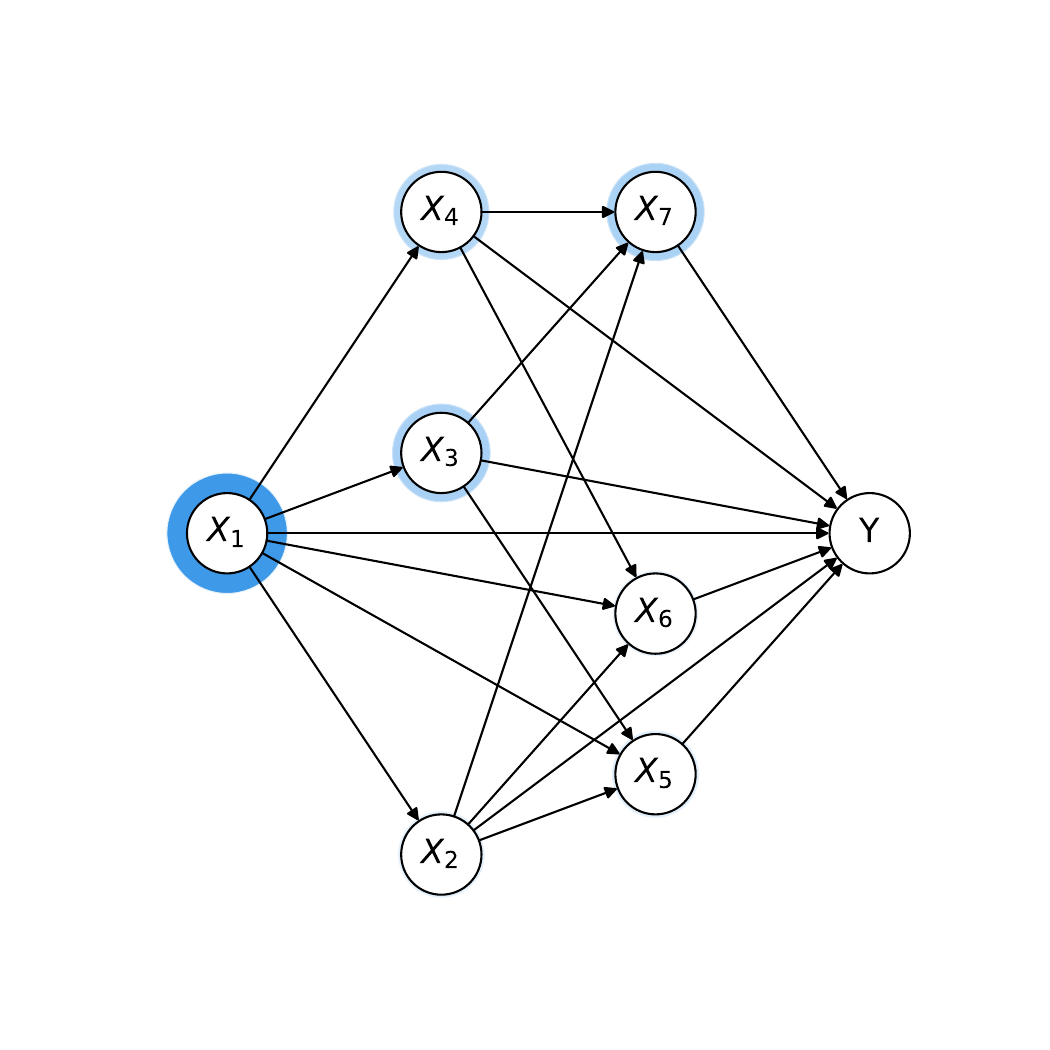}
        \textbf{do-Shapley Values}\\(order 1)
    \end{minipage}
    \begin{minipage}[b]{0.245\linewidth}
        \centering
        \includegraphics[width=\linewidth]{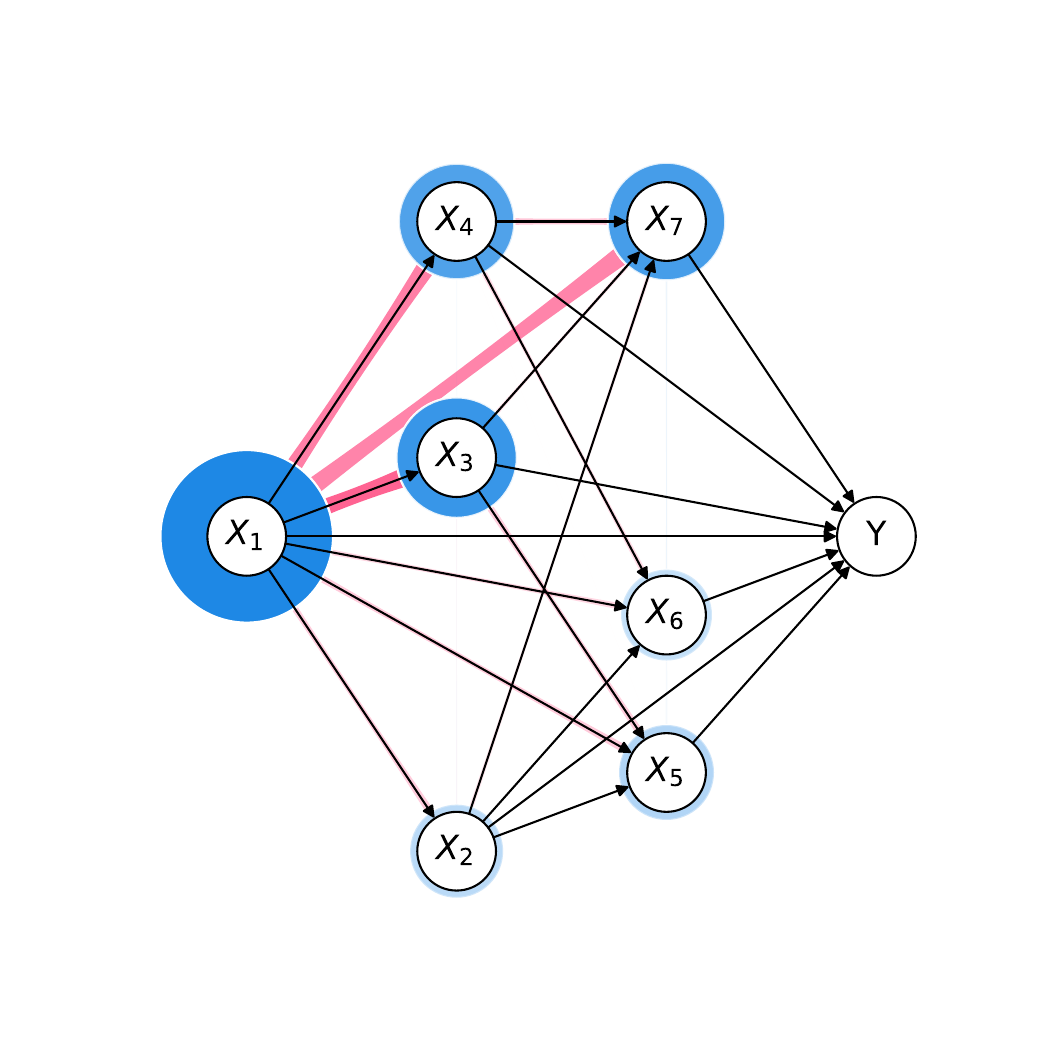}
        \textbf{do-Shapley Interactions}\\(order 2)
    \end{minipage}
    \begin{minipage}[b]{0.245\linewidth}
        \centering
        \includegraphics[width=\linewidth]{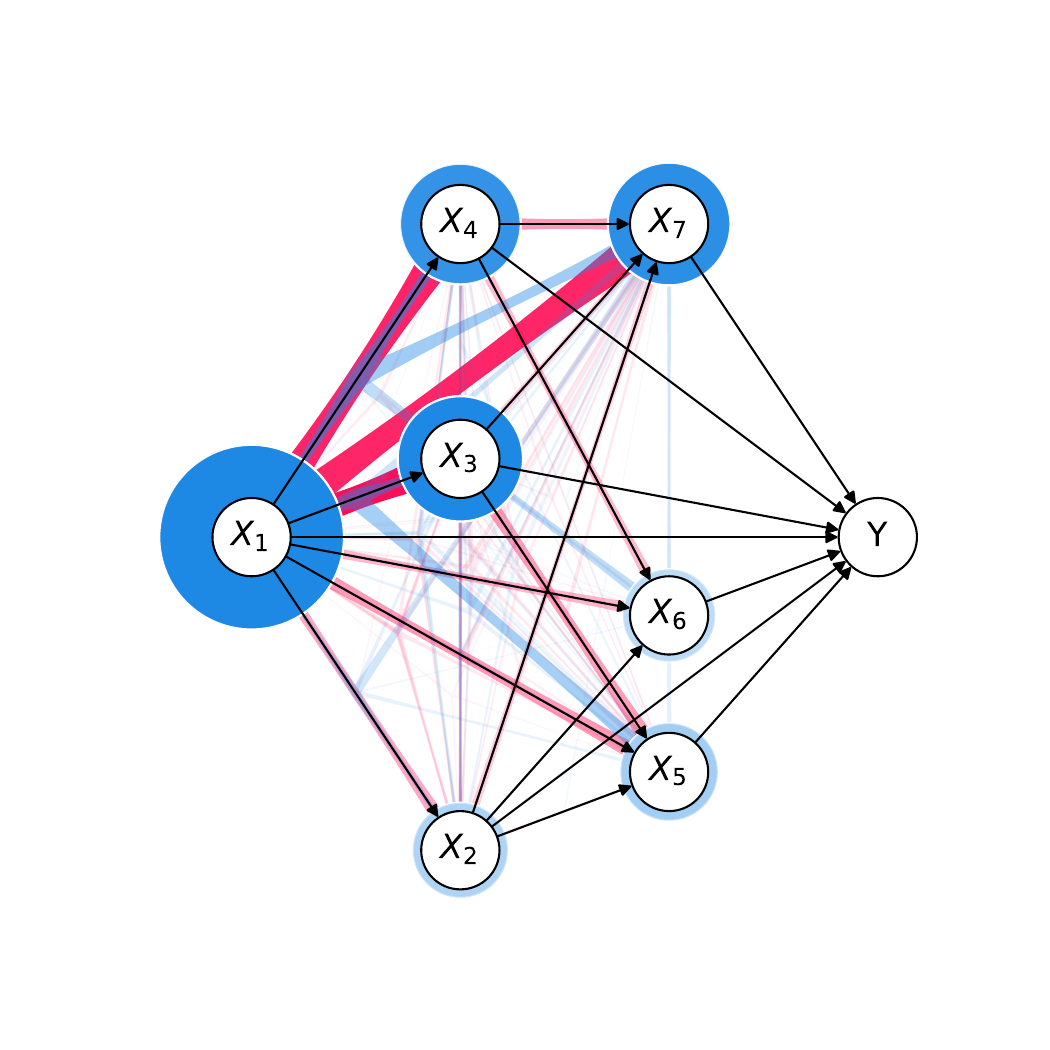}
        \textbf{do-Shapley Interactions}\\(order 7)
    \end{minipage}
    \caption{do-Shapley values and interactions for \textsc{FOREX\_audjpy-hour-High} of increasing order (feature names in \cref{tab:variables_FOREX_audjpy-hour-High}). The size of the nodes and edges (hyperedges) denotes the strength of the effect. The color denotes the direction (blue negative, red  positive).}
    \label{fig:shapiq_FOREX_audjpy-hour-High}
\end{figure*}

\begin{figure*}[htb]
    \centering
    \begin{minipage}[b]{0.245\linewidth}
        \centering
        \includegraphics[width=\linewidth]{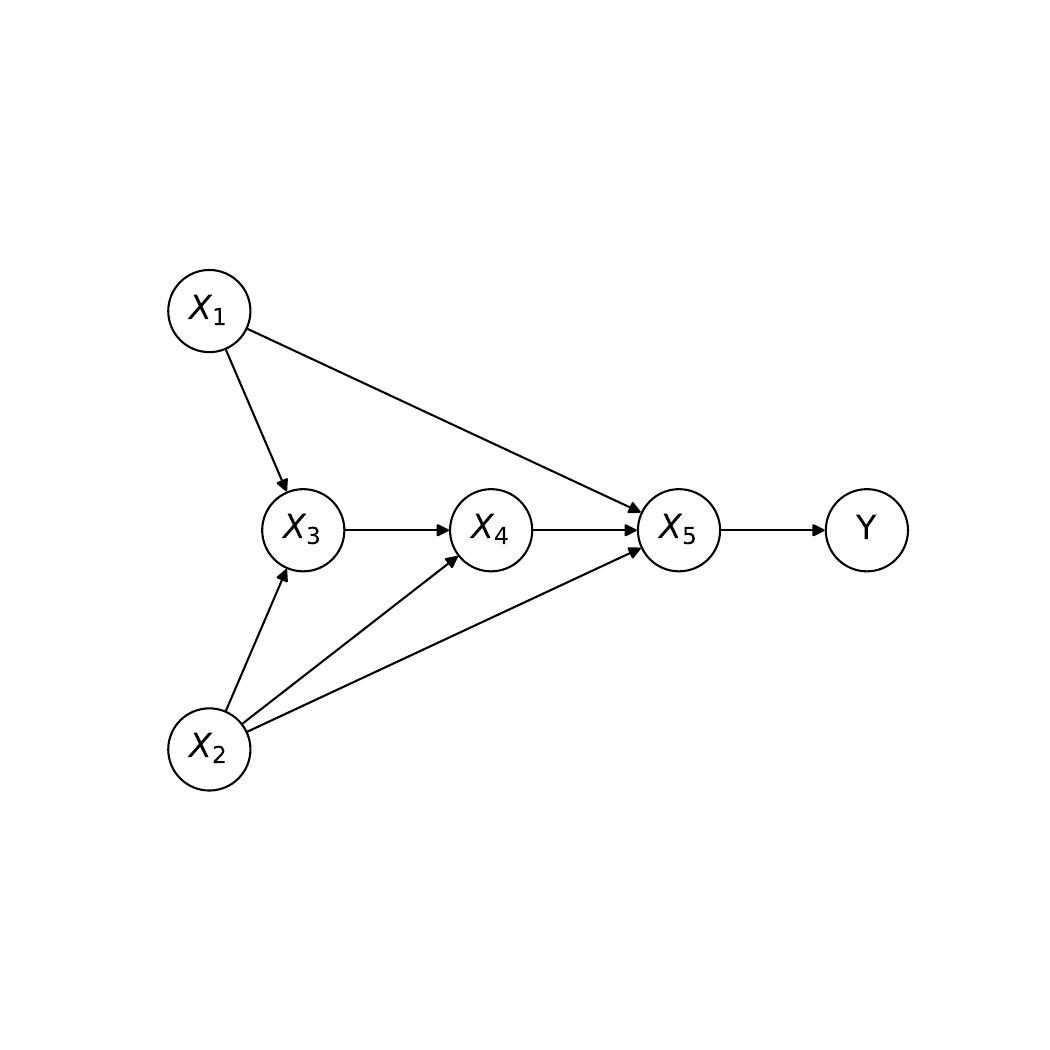}
        \textbf{SCM}\\\phantom{(order 1)}
    \end{minipage}
    \begin{minipage}[b]{0.245\linewidth}
        \centering
        \includegraphics[width=\linewidth]{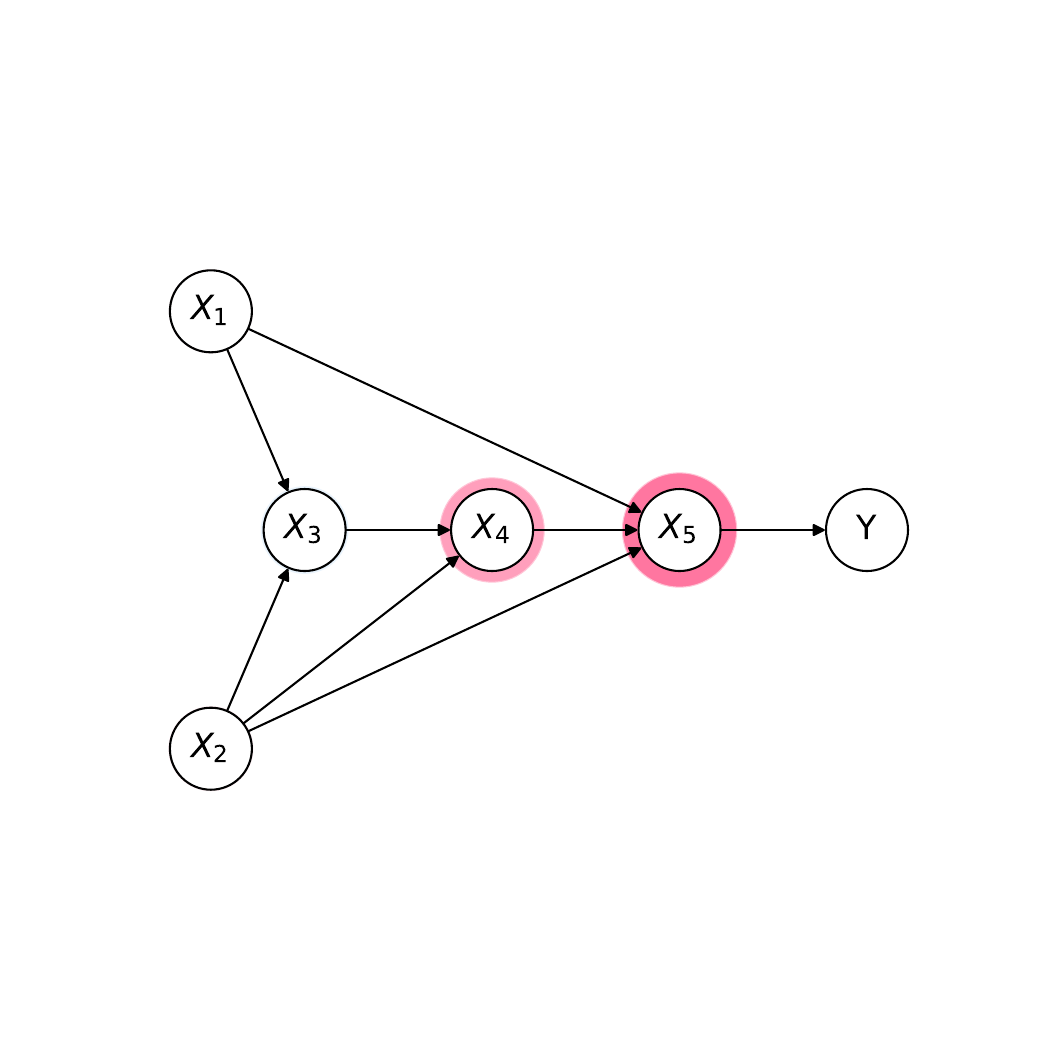}
        \textbf{do-Shapley Values}\\(order 1)
    \end{minipage}
    \begin{minipage}[b]{0.245\linewidth}
        \centering
        \includegraphics[width=\linewidth]{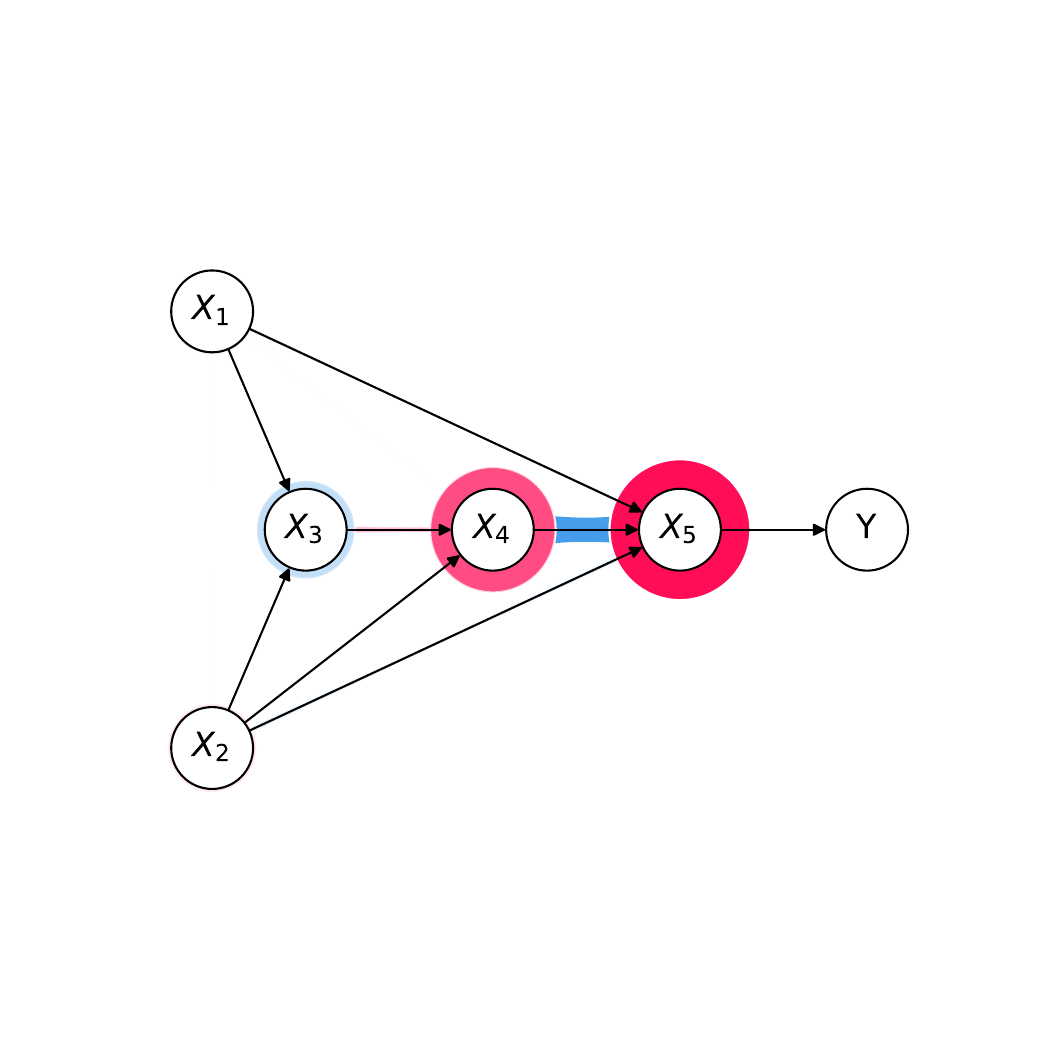}
        \textbf{do-Shapley Interactions}\\(order 2)
    \end{minipage}
    \begin{minipage}[b]{0.245\linewidth}
        \centering
        \includegraphics[width=\linewidth]{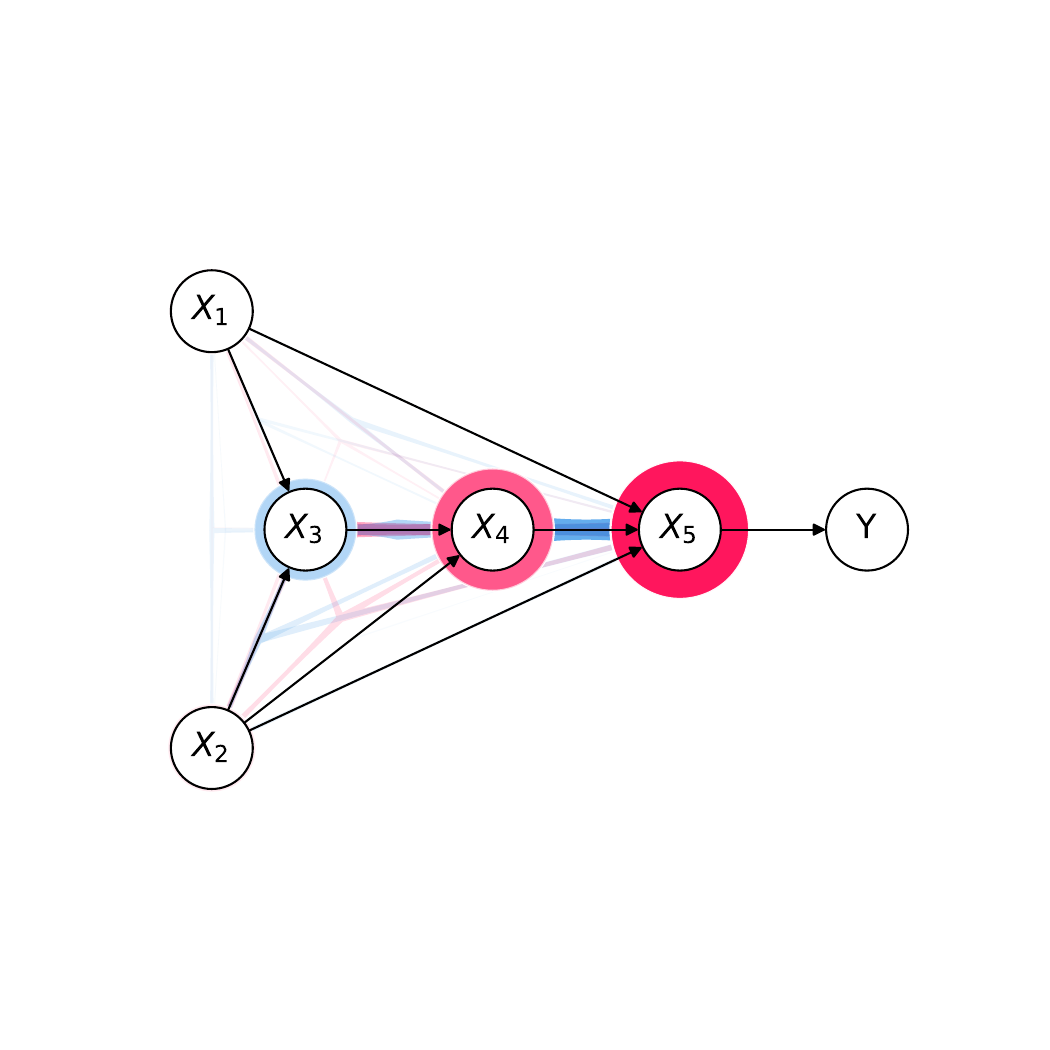}
        \textbf{do-Shapley Interactions}\\(order 5)
    \end{minipage}
    \caption{do-Shapley values and interactions for \textsc{Laptop\_Prices\_Dataset} of increasing order (feature names in \cref{tab:variables_Laptop_Prices_Dataset}). The size of the nodes and edges (hyperedges) denotes the strength of the effect. The color denotes the direction (blue negative, red  positive).}
    \label{fig:shapiq_Laptop_Prices_Dataset}
\end{figure*}

\begin{figure*}[htb]
    \centering
    \begin{minipage}[b]{0.245\linewidth}
        \centering
        \includegraphics[width=\linewidth]{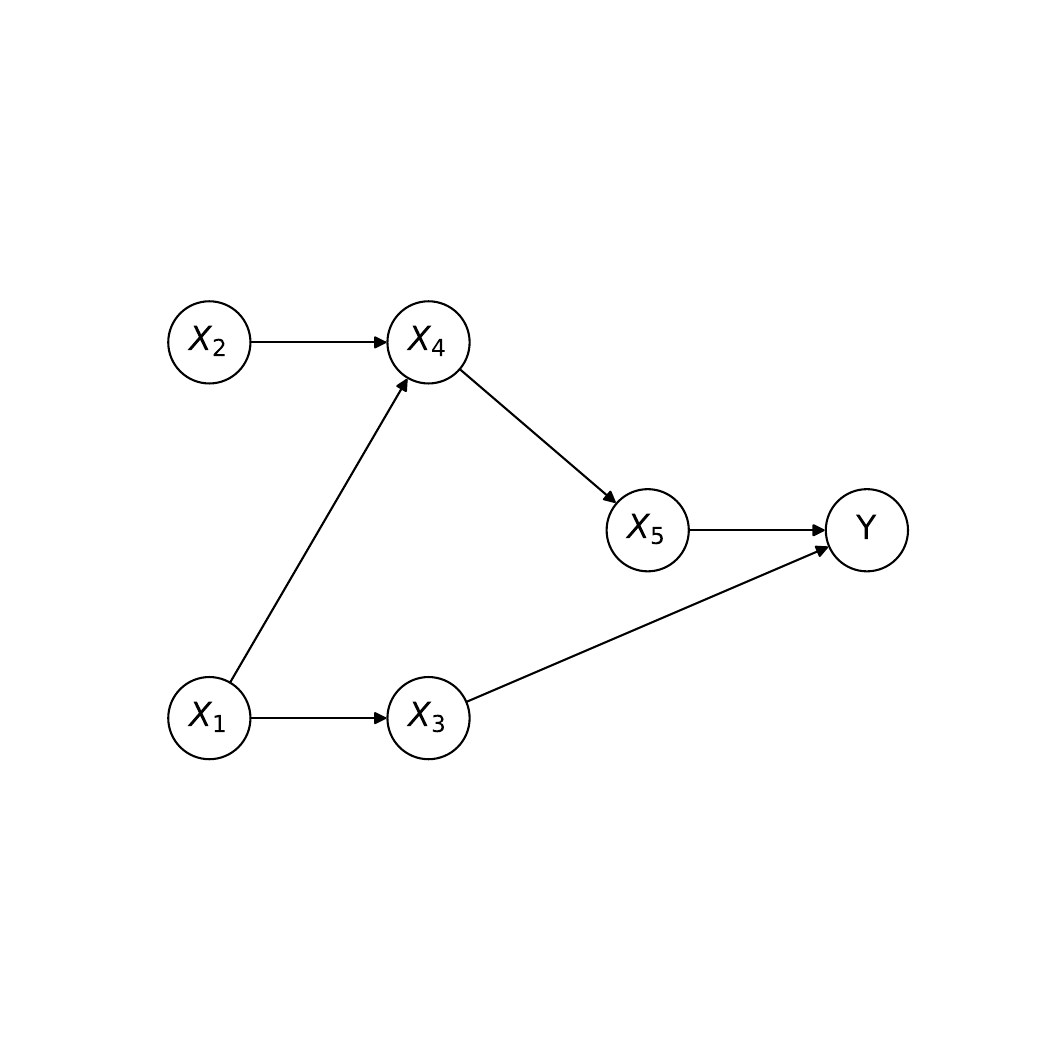}
        \textbf{SCM}\\\phantom{(order 1)}
    \end{minipage}
    \begin{minipage}[b]{0.245\linewidth}
        \centering
        \includegraphics[width=\linewidth]{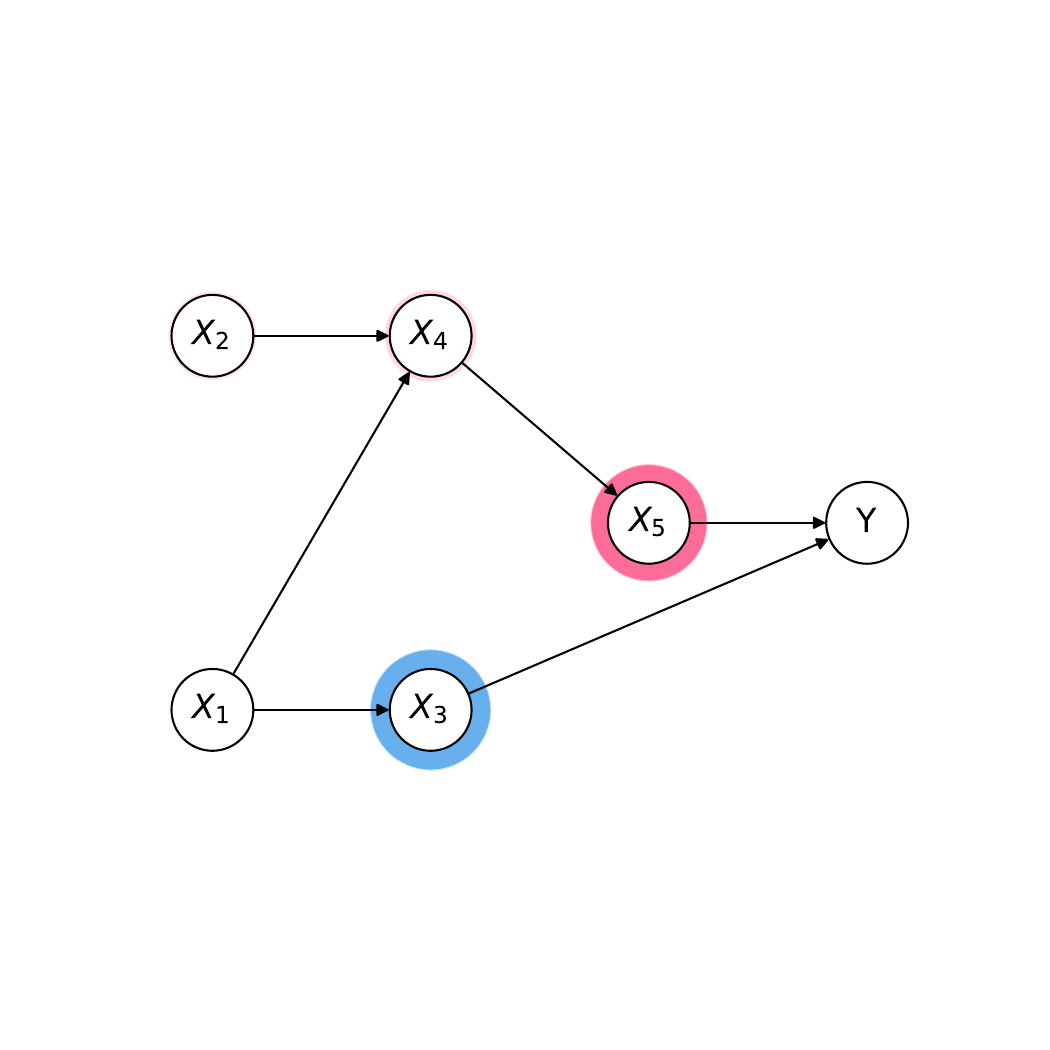}
        \textbf{do-Shapley Values}\\(order 1)
    \end{minipage}
    \begin{minipage}[b]{0.245\linewidth}
        \centering
        \includegraphics[width=\linewidth]{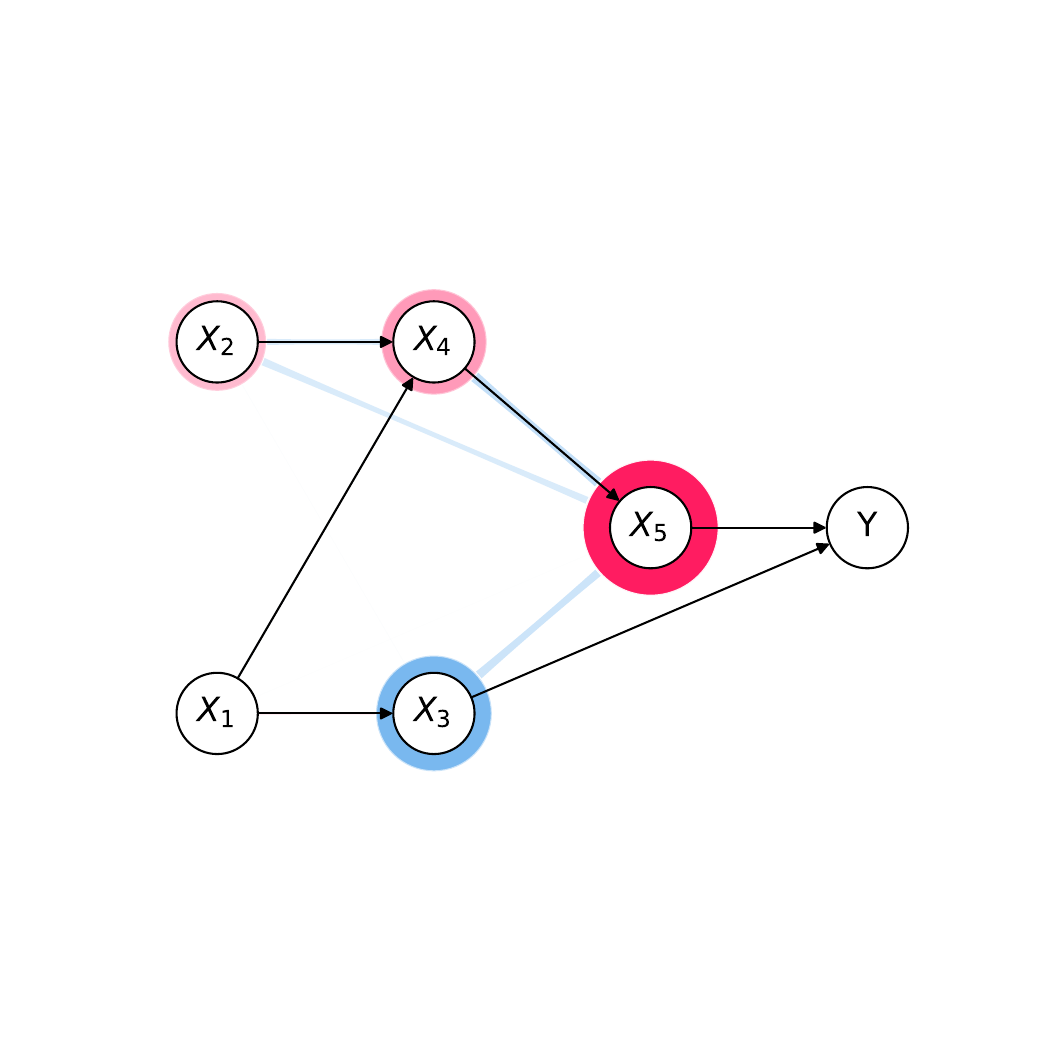}
        \textbf{do-Shapley Interactions}\\(order 2)
    \end{minipage}
    \begin{minipage}[b]{0.245\linewidth}
        \centering
        \includegraphics[width=\linewidth]{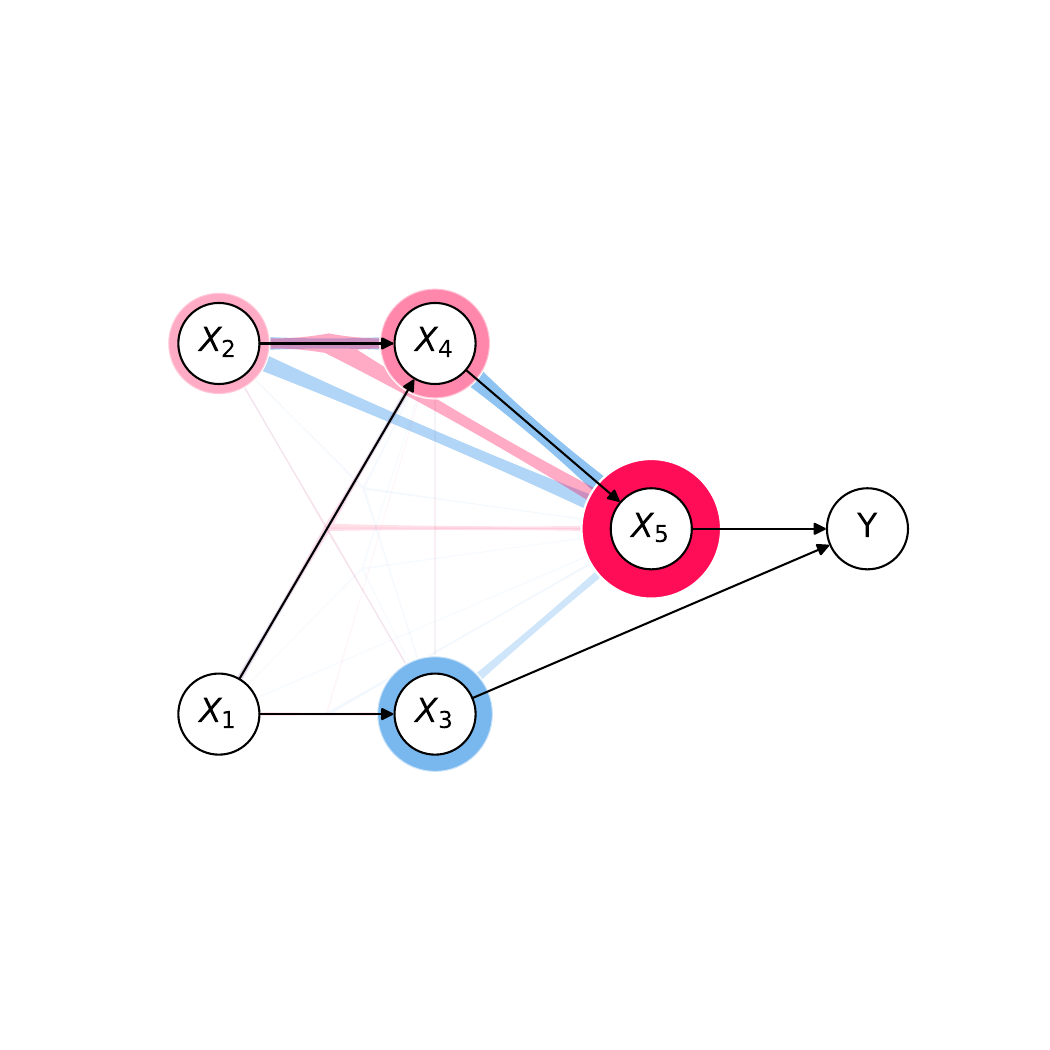}
        \textbf{do-Shapley Interactions}\\(order 5)
    \end{minipage}
    \caption{do-Shapley values and interactions for \textsc{Yeast} of increasing order (feature names in \cref{tab:variables_yeast}). The size of the nodes and edges (hyperedges) denotes the strength of the effect. The color denotes the direction (blue negative, red  positive).}
    \label{fig:shapiq_yeast}
\end{figure*}

\begin{table}[htb]
\centering
\begin{tabular}{llc}
\toprule
\textbf{Node} & \textbf{Feature Name} \\
\midrule
$X_{1}$ & \texttt{hoa\_(BRL)} \\
$X_{2}$ & \texttt{fire\_insurance\_(BRL)} \\
$X_{3}$ & \texttt{parking\_spaces} \\
$X_{4}$ & \texttt{rent\_amount\_(BRL)} \\
$X_{5}$ & \texttt{rooms} \\
$X_{6}$ & \texttt{property\_tax\_(BRL)} \\
$X_{7}$ & \texttt{target} \\
$X_{8}$ & \texttt{bathroom} \\
\midrule
$Y$ & \texttt{area} \\
\bottomrule
\end{tabular}
\caption{Variables for \textsc{Brazilian\_houses\_reproduced}: $d=8$ input features and 9 nodes in total (including target $Y$).}
\label{tab:variables_Brazilian_houses_reproduced}
\end{table}

\begin{table}[htb]
\centering
\begin{tabular}{llc}
\toprule
\textbf{Node} & \textbf{Feature Name} \\
\midrule
$X_{1}$ & \texttt{alm} \\
$X_{2}$ & \texttt{nuc} \\
$X_{3}$ & \texttt{vac} \\
$X_{4}$ & \texttt{gvh} \\
$X_{5}$ & \texttt{mcg} \\
\midrule
$Y$ & \texttt{mit} \\
\bottomrule
\end{tabular}
\caption{Variables for \textsc{yeast}: $d=5$ input features and 6 nodes in total (including target $Y$).}
\label{tab:variables_yeast}
\end{table}

\begin{table}[htb]
\centering
\begin{tabular}{llc}
\toprule
\textbf{Node} & \textbf{Feature Name} \\
\midrule
$X_{1}$ & \texttt{N\_0} \\
$X_{2}$ & \texttt{N\_1} \\
$X_{3}$ & \texttt{N\_5} \\
$X_{4}$ & \texttt{target} \\
$X_{5}$ & \texttt{N\_4} \\
\midrule
$Y$ & \texttt{N\_2} \\
\bottomrule
\end{tabular}
\caption{Variables for \textsc{Laptop\_Prices\_Dataset}: $d=5$ input features and 6 nodes in total (including target $Y$).}
\label{tab:variables_Laptop_Prices_Dataset}
\end{table}

\begin{table}[htb]
\centering
\begin{tabular}{llc}
\toprule
\textbf{Node} & \textbf{Feature Name} \\
\midrule
$X_{1}$ & \texttt{Ask\_Low} \\
$X_{2}$ & \texttt{Bid\_Low} \\
$X_{3}$ & \texttt{Bid\_Close} \\
$X_{4}$ & \texttt{Ask\_Open} \\
$X_{5}$ & \texttt{Ask\_Close} \\
$X_{6}$ & \texttt{Bid\_Open} \\
$X_{7}$ & \texttt{Ask\_High} \\
\midrule
$Y$ & \texttt{Bid\_High} \\
\bottomrule
\end{tabular}
\caption{Variables for \textsc{FOREX\_audjpy-hour-High}: $d=7$ input features and 8 nodes in total (including target $Y$).}
\label{tab:variables_FOREX_audjpy-hour-High}
\end{table}

\clearpage
\section{Additional Experiments}
\label{app:exp}

\begin{figure}[h]
    \centering
    \includegraphics[width=0.6\linewidth]{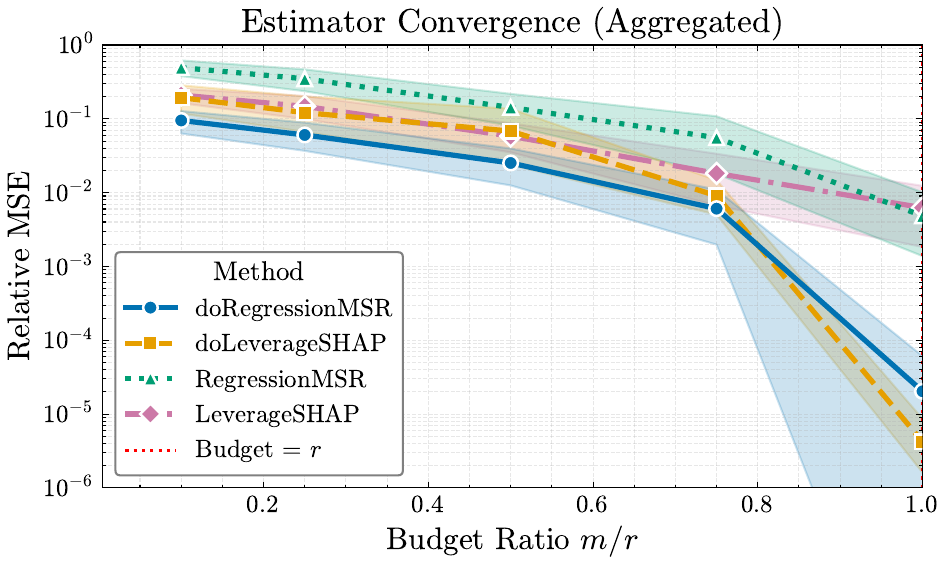}
    \caption{\textbf{Estimator Convergence (Aggregated), Clipped View.} The Relative Mean Squared Error (MSE) of Shapley value estimates versus the budget ratio $m/r$, aggregated across all datasets. This is the same plot as \Cref{fig:convergence}, but where we have adjusted the view to more clearly show the distinctions in estimator performance for when $m/r\leq1$.}
    \label{fig:convergence_clipped}
\end{figure}

\begin{figure}[h]
    \centering
    \includegraphics[width=0.6\linewidth]{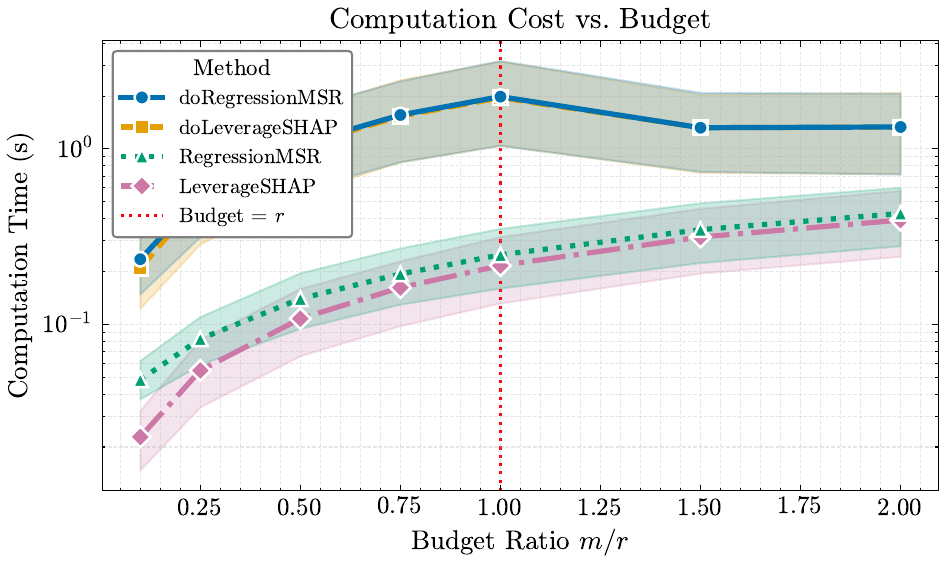}
    \caption{\textbf{Computational Cost.} Average wall-clock time (seconds) versus budget ratio ($m/r$). The structure-aware estimators (i.e. ``do''-variants) incur a consistent runtime overhead compared to their structure-agnostic counterparts. This additional cost represents the time required for graph traversal and boundary sampling to identify distinct equivalence classes. The more computationally demanding structural exploration is the necessary trade-off to achieve the orders-of-magnitude reduction in estimation error seen in e.g. \Cref{fig:convergence}.}
    \label{fig:time_convergence}
\end{figure}

\begin{table*}[h!]
\centering
\caption{Relative MSE statistics by budget ratio ($m/r$).}
\label{tab:mse_stats}
\resizebox{0.75\linewidth}{!}{
\begin{tabular}{l cc | cc}
\toprule
Metric & \texttt{LeverageSHAP} & \texttt{doLeverageSHAP} & \texttt{RegressionMSR} & \texttt{doRegressionMSR} \\
\midrule
\multicolumn{5}{c}{\textbf{Budget Ratio} $m/r = 0.25$} \\
\midrule
Mean & \cellcolor{bronze!30}1.5e-1 & \cellcolor{silver!30}1.2e-1 & 3.5e-1 & \cellcolor{gold!30}6.0e-2 \\
Median & \cellcolor{bronze!30}8.2e-3 & 9.8e-3 & \cellcolor{silver!30}7.9e-3 & \cellcolor{gold!30}3.8e-4 \\
$Q_1$ (25\%) & 8.5e-4 & \cellcolor{bronze!30}4.1e-4 & \cellcolor{silver!30}1.5e-4 & \cellcolor{gold!30}8.1e-8 \\
$Q_2$ (50\%) & \cellcolor{bronze!30}8.2e-3 & 9.8e-3 & \cellcolor{silver!30}7.9e-3 & \cellcolor{gold!30}3.8e-4 \\
$Q_3$ (75\%) & \cellcolor{bronze!30}2.2e-1 & \cellcolor{silver!30}5.9e-2 & 3.0e-1 & \cellcolor{gold!30}1.1e-2 \\
$Q_4$ (Max) & \cellcolor{bronze!30}2.5e+00 & 5.6e+00 & \cellcolor{silver!30}2.4e+00 & \cellcolor{gold!30}1.2e+00 \\
\midrule
\multicolumn{5}{c}{\textbf{Budget Ratio} $m/r = 0.5$} \\
\midrule
Mean & \cellcolor{silver!30}5.8e-2 & \cellcolor{bronze!30}6.8e-2 & 1.4e-1 & \cellcolor{gold!30}2.5e-2 \\
Median & \cellcolor{bronze!30}1.9e-3 & 4.8e-3 & \cellcolor{silver!30}8.8e-4 & \cellcolor{gold!30}1.1e-5 \\
$Q_1$ (25\%) & 2.0e-4 & \cellcolor{bronze!30}1.4e-4 & \cellcolor{silver!30}1.5e-5 & \cellcolor{gold!30}5.9e-9 \\
$Q_2$ (50\%) & \cellcolor{bronze!30}1.9e-3 & 4.8e-3 & \cellcolor{silver!30}8.8e-4 & \cellcolor{gold!30}1.1e-5 \\
$Q_3$ (75\%) & \cellcolor{bronze!30}1.3e-2 & 1.9e-2 & \cellcolor{silver!30}9.7e-3 & \cellcolor{gold!30}1.4e-3 \\
$Q_4$ (Max) & \cellcolor{silver!30}1.2e+00 & 5.6e+00 & \cellcolor{bronze!30}2.0e+00 & \cellcolor{gold!30}7.0e-1 \\
\midrule
\multicolumn{5}{c}{\textbf{Budget Ratio} $m/r = 0.75$} \\
\midrule
Mean & \cellcolor{bronze!30}1.8e-2 & \cellcolor{silver!30}9.1e-3 & 5.5e-2 & \cellcolor{gold!30}6.1e-3 \\
Median & \cellcolor{bronze!30}4.8e-4 & 9.9e-4 & \cellcolor{silver!30}1.4e-4 & \cellcolor{gold!30}5.7e-7 \\
$Q_1$ (25\%) & 7.7e-5 & \cellcolor{bronze!30}2.2e-5 & \cellcolor{silver!30}2.8e-6 & \cellcolor{gold!30}3.8e-10 \\
$Q_2$ (50\%) & \cellcolor{bronze!30}4.8e-4 & 9.9e-4 & \cellcolor{silver!30}1.4e-4 & \cellcolor{gold!30}5.7e-7 \\
$Q_3$ (75\%) & \cellcolor{bronze!30}2.5e-3 & 6.0e-3 & \cellcolor{silver!30}2.3e-3 & \cellcolor{gold!30}1.5e-4 \\
$Q_4$ (Max) & \cellcolor{bronze!30}6.6e-1 & \cellcolor{silver!30}3.6e-1 & 2.0e+00 & \cellcolor{gold!30}3.3e-1 \\
\midrule
\multicolumn{5}{c}{\textbf{Budget Ratio} $m/r = 1.0$} \\
\midrule
Mean & 6.2e-3 & \cellcolor{gold!30}4.2e-6 & \cellcolor{bronze!30}4.8e-3 & \cellcolor{silver!30}2.0e-5 \\
Median & 2.4e-4 & \cellcolor{gold!30}4.1e-29 & \cellcolor{bronze!30}4.4e-6 & \cellcolor{silver!30}3.5e-13 \\
$Q_1$ (25\%) & 2.8e-5 & \cellcolor{gold!30}1.1e-30 & \cellcolor{bronze!30}2.0e-7 & \cellcolor{silver!30}2.6e-14 \\
$Q_2$ (50\%) & 2.4e-4 & \cellcolor{gold!30}4.1e-29 & \cellcolor{bronze!30}4.4e-6 & \cellcolor{silver!30}3.5e-13 \\
$Q_3$ (75\%) & 1.2e-3 & \cellcolor{gold!30}6.1e-26 & \cellcolor{bronze!30}5.1e-4 & \cellcolor{silver!30}2.6e-12 \\
$Q_4$ (Max) & 3.3e-1 & \cellcolor{gold!30}3.7e-4 & \cellcolor{bronze!30}3.0e-1 & \cellcolor{silver!30}3.7e-3 \\
\bottomrule
\end{tabular}
}
\end{table*}

\end{document}